\newcommand{\Learn}{\mathsf{Lrn}}
\newcommand{\WRLearn}{\mathsf{WR}}
\newcommand{\Risk}{\mathrm{Risk}}
\newcommand{\Cor}{\mathrm{Cor}}
\newcommand{\CCor}{\mathrm{CCor}}
\newcommand{\hcert}{h_{\mathrm{cert}}}
\newcommand{\hpred}{h_{\mathrm{pred}}}
\newcommand{\Add}{\mathcal{A}dd}
\newcommand{\Rem}{\mathcal{R}em}
\newcommand{\Rep}{\mathcal{R}ep}
\newcommand{\Flp}{\mathcal{F}lip}
\newcommand{\Rob}{\mathrm{Rob}}
\newcommand{\weak}{\mathrm{wk}}
\newcommand{\aSF}{\mathsf{a}}
\newcommand{\gSF}{\mathsf{g}}
\newcommand{\hSF}{\mathsf{h}}
\newcommand{\avr}[2]{\ifthenelse{\equal{#2}{}}{\aSF({#1})}{\ifthenelse{\equal{#2}{0}}{\aSF(\emptyset)}{\aSF({#1}_{\leq #2})}}}
\newcommand{\avrMax}[2]{\ifthenelse{\equal{#2}{}}{\aSF^*({#1})}{\ifthenelse{\equal{#2}{0}}{\aSF^*(\emptyset)}{\aSF^*({#1}_{\leq #2})}}}
\newcommand{\avrApp}[2]{\ifthenelse{\equal{#2}{}}{\tilde{\aSF}({#1})}{\ifthenelse{\equal{#2}{0}}{\tilde{\aSF}(\emptyset)}{\tilde{\aSF}({#1}_{\leq #2})}}}
\newcommand{\avrAppMax}[2]{\ifthenelse{\equal{#2}{}}{\tilde{\aSF}^*({#1})}{\ifthenelse{\equal{#2}{0}}{\tilde{\aSF}^*(\emptyset)}{\tilde{\aSF}^*({#1}_{\leq #2})}}}
\newcommand{\ArgMax}[2]{\ifthenelse{\equal{#2}{}}{\hSF({#1})}{\ifthenelse{\equal{#2}{0}}{\hSF(\emptyset)}{\hSF({#1}_{\leq #2})}}}
\newcommand{\AppArgMax}[2]{\ifthenelse{\equal{#2}{}}{\tilde{\hSF}({#1})}{\ifthenelse{\equal{#2}{0}}{\tilde{\hSF}(\emptyset)}{\tilde{\hSF}({#1}_{\leq #2})}}}
\newcommand{\gain}[2]{\ifthenelse{\equal{#2}{}}{\gSF(#1)}{\gSF(#1_{\leq #2})}}
\newcommand{\gainMax}[2]{\ifthenelse{\equal{#2}{}}{\gSF^*(#1)}{\gSF^*(#1_{\leq #2})}}
\newcommand{\gainApp}[2]{\ifthenelse{\equal{#2}{}}{\tilde{\gSF}(#1)}{\tilde{\gSF}(#1_{\leq #2})}}
\newcommand{\gainAppMax}[2]{\ifthenelse{\equal{#2}{}}{\tilde{\gSF}^*(#1)}{\tilde{\gSF}^*(#1_{\leq #2})}}
\newcommand{\sm}{\setminus}
\newcommand{\RobLearn}{\mathsf{RLrn}}
\newcommand{\RobCertLearn}{\mathsf{CLrn}}
\newcommand{\modelensemble}{h_{\mathsf{ens}}}
\newcommand{\modelknn}{h_{\mathsf{KNN}}}
\newcommand{\loss}{\mathrm{\ell}}
\newcommand{\remove}[1]{}
\newcommand{\se}{\subseteq}
\newcommand{\MRob}{m_{\mathsf{Rob}}}
\newcommand{\MWR}{m_{\mathsf{WR}}}
\newcommand{\MPAC}{m_{\mathsf{Lrn}}}
\newcommand{\MUC}{m^\cH_{\mathsf{UC}}}
\newcommand{\set}[1]{\left\{ #1 \right\}}
\newcommand{\sphere}[1]{{\mathbb S}^{#1}}
\newcommand{\R}{{\mathbb R}}
\newcommand{\N}{{\mathbb N}}
\newcommand{\Adv}{\mathsf{A}}
\newcommand{\AdvC}{\mathcal{A}}
\newcommand{\cD}{{\mathcal D}}
\newcommand{\cH}{{\mathcal H}}
\newcommand{\cI}{{\mathcal I}}
\newcommand{\cN}{{\mathcal N}}
\newcommand{\cS}{{\mathcal S}}
\newcommand{\cX}{{\mathcal X}}
\newcommand{\cY}{{\mathcal Y}}
\newcommand{\eps}{\varepsilon}
\newcommand{\poly}{\operatorname{poly}}
\newcommand{\Exp}{\operatorname*{\mathbb{E}}}
\newcommand{\Ex}{\Exp}
\newcommand{\Sign}{\operatorname{Sign}}
\newcommand{\argmax}{\operatorname*{argmax}}
\newcommand{\one}{\mathbbm{1}}
\newtheorem{theorem}{Theorem}[section]
\theoremstyle{plain}
\newtheorem{claim}[theorem]{Claim}
\newtheorem{proposition}[theorem]{Proposition}
\newtheorem{lemma}[theorem]{Lemma}
\newtheorem{fact}[theorem]{Fact}
\theoremstyle{definition}
\newtheorem{defi/}[theorem]{Definition}
\newenvironment{defi}
  {%
   \pushQED{\qed}\begin{defi/}}
  {\popQED\end{defi/}}
\theoremstyle{definition}
\newtheorem{remark}[theorem]{Remark}
\newcommand{\sdotfill}{\textcolor[rgb]{0.8,0.8,0.8}{\dotfill}} 
\def\th@protocol{%
    \normalfont 
    \setbeamercolor{block title example}{bg=orange,fg=white}
    \setbeamercolor{block body example}{bg=orange!20,fg=black}
    \def\inserttheoremblockenv{exampleblock}
  }
\theoremstyle{protocol}
\newtheorem{proto}[theorem]{Protocol}
\newtheorem{protoc}[theorem]{Protocol}
\newcommand{\namedref}[2]{#1~\ref{#2}}
\newcommand{\torestate}[3]{%
\expandafter \def \csname BBRESTATE #2 \endcsname{#3}
\theoremstyle{plain}
\newtheorem{BBRESTATETHMNUM#2}[theorem]{#1}
\begin{BBRESTATETHMNUM#2}\label{#2}\csname BBRESTATE #2 \endcsname   \end{BBRESTATETHMNUM#2}
\newtheorem*{BBRESTATETHMNONNUM#2}{\namedref{#1}{#2}}
}
\newcommand{\restate}[1]{\begin{BBRESTATETHMNONNUM#1}[Restated] \csname BBRESTATE #1 \endcsname
\end{BBRESTATETHMNONNUM#1}}
\title{Learning and certification under instance-targeted poisoning\footnote{This is the full version of a paper  appearing in The Conference on Uncertainty in Artificial Intelligence (UAI)~2021.}}
\author{Ji Gao\thanks{University of Virginia,  \href{mailto:jg6yd@virginia.edu}{jg6yd@virginia.edu}. Supported by NSF grant CCF-1910681.} \and  Amin Karbasi\thanks{Yale, \href{mailto:amin.karbasi@yale.edu}{amin.karbasi@yale.edu}. Supported by NSF (IIS-1845032) and ONR (N00014-19-1-2406).} \and Mohammad Mahmoody\thanks{University of Virginia, \href{mailto:mohammad@virginia.edu}{mohammad@virginia.edu}. Supported by NSF grants CCF-1910681 and CNS-1936799.}}
\newcommand{\Mnote}[1]{{\color{red} [{\bf Moh:}  #1]}}
\begin{document}
 
\maketitle

\begin{abstract}
In this paper, we study PAC learnability and certification of predictions under  instance-targeted poisoning attacks, where the adversary who knows the test instance may  change a fraction of the training set  with the goal of fooling the learner at the test instance. Our first contribution is to formalize the problem in various settings and to explicitly model subtle aspects such as the proper or improper nature of the learning,  learner's randomness, and whether (or not) adversary's attack can depend on it. Our main result  shows that when the budget of the adversary scales sublinearly with the sample complexity, (improper) PAC learnability and certification are achievable; in contrast, when the adversary's budget grows linearly with the sample complexity, the adversary can potentially drive up the expected  0-1  loss to one. 

We also study \emph{distribution-specific} PAC learning in the same attack model and show that \emph{proper} learning with certification is possible for learning half spaces under natural distributions. Finally, we empirically study the robustness of $K$ nearest neighbour, logistic regression, multi-layer perceptron, and convolutional neural network  on real data sets against targeted-poisoning attacks. {Our experimental results show that many models, especially  state-of-the-art neural networks,  are  indeed vulnerable to these strong attacks.}  Interestingly, we observe that methods with high standard accuracy might be more vulnerable to instance-targeted poisoning attacks.  
\end{abstract}

\tableofcontents
  
\section{Introduction}
Learning to predict from empirical  examples is a fundamental problem in machine learning. In its classic form, the problem involves a benign setting where the empirical and test examples are sampled from the same distribution $D$. More formally, a learner, denoted by $\Learn$, is given a training set $\cS$, consists of  i.i.d. samples $(x,y)$ from distribution $D$, where $x$ is a data point and $y$ is its label. Then, the learner returns a model/hypothesis $h$ where it will be ultimately tested on a fresh sample from the same distribution $D$.


More recently, the above-mentioned classic setting has been revisited by allowing adversarial manipulations that tamper with the process, while still aiming to make correct predictions. In general, adversarial tampering can take place in both training or testing    of models. Our interest in this work is on a form of training-time attacks,  known as  poisoning or causative attacks~\citep{barreno2006can,papernot2016towards,diakonikolas2019recent,goldblum2020data}. In particular, poisoning adversaries may partially change the training set $\cS$ into another training set $\cS'$ in such a way that the ``quality" of the returned hypothesis $h'$ by the learning algorithm $\Learn$, that is trained on $\cS'$ instead of $\cS$, degrades significantly. Depending on the context, the way we measure the quality of the poisoning attack may change. For instance, the quality of $h'$ may refer to the expected error of $h'$ when test data points are sampled from the distribution $D$. It could also refer to the error on a particular test point $x$, known to the adversary but unknown to the learning algorithm $\Learn$. The latter scenario, which is the main focus of this work, is known as (instance) \emph{targeted poisoning}~\citep{barreno2006can}. In this setting, as the name suggests, an adversary could craft its strategy based on the knowledge of a target instance $x$. Given a training set of $\cS$ of size $m$, we assume that an adversary can change up to $b(m)$ data points, and we refer to $b(m)$ as adversary's ``budget''. Other examples of natural (weaker) attacks may include flipping binary labels, or  adding/removing  data points (see Section~\ref{sec:Defs}). 

Given a poisoning attack, the predictions of a learning algorithm may or may not change. To this end, 
\cite{steinhardt2017certified} initiated the study of \emph{certification} against poisoning attacks, studying  the conditions under which a learning algorithm can certifiably obtain an expected  low risk. To extend these results to the instance-targeted positing scenario,~\cite{rosenfeld2020certified} recently addressed the \emph{instance targeted} (a.k.a., pointwise) certification with the goal of providing certification guarantees about the prediction of \emph{specific}  instances when the adversary can poison the training data.  While the instance-targeted certification has sparked a new line of research~\citep{levine2020deep,chen2020framework,weber2020rab,jia2020intrinsic} with interesting  insights,  the existing works do not address the fundamental question of  when, and under what conditions,  learnability and certification are achievable under the instance-targeted poisoning attack. In this work, we take an initial step along this line and layout the precise conditions for such guarantees.

\paragraph{Problem setup.} Let $\cH$ consists of a hypothesis class of classifiers $h:\cX\rightarrow\cY$ where $\cX$ denotes the instances domain and $\cY$ the labels domain. We would like to study the learnability of $\cH$ under instance-targeted poisoning attacks. But before discussing the problem in that setting, we recall the notion of PAC learning \emph{without} attacks. 

Informally speaking, $\cH$ is ``Probably Approximately Correct'' learnable (PAC learnable for short)   if  there is a   learning algorithm $\Learn$ such that for every distribution $D$ over $ \cX\times\cY$, if $D$ can be learned with $\cH$ (i.e., the so-called realizability assumption holds) then with high probability over sampling any sufficiently large set $\cS \sim D^m$, $\Learn$ maps $\cS$ to a  hypothesis $h \in \cH$ with ``arbitrarily small'' risk under the distribution $D$.  $\Learn$ is called \emph{improper} if it is allowed to output functions outside $\cH$, and it is a \emph{distribution-specific} learner, if it is only required to work when the marginal distribution  $D_{\cX}$  on the instance domain $\cX$ is fixed e.g., to be isotropic Gaussian. (See  Section~\ref{sec:Defs} and Definition~\ref{def:TarPAC} for  formal definitions.)

%
Suppose that  before the example $(x,y) \sim D$ is tested, an adversary who is aware of $(x,y)$ (and hence,  is \emph{targeting} the  instance $x$) can craft a poisoned set $\cS'$ from $\cS$ by \emph{arbitrarily changing} up to $b$ of the training examples in $\cS$. Now, the learning algorithm encounters $\cS'$ as the training set and the hypothesis  it returns is, say, $h'\in\cH$ in the proper learning setting. Now, the predicted label of $x$, i.e., $y'=h'(x)$, may no longer be equal to the correct label $y$.  

\paragraph{Main questions.} In this paper, we would like to study under what conditions on the class complexity $\cH$, budget $b$, and different  (weak/strong) forms of instance-targeted poisoning attacks, one can achieve  (proper/improper) PAC learning. In particular, the learner's goal is to still be correct, with high probability, on \emph{most} test instances, despite the existence of the attack. 
%
A stronger goal than robustness is to also \emph{certify} the predictions $h(x)=y$ with a lower bound $k$ on how much an instance-targeted poisoning adversary needs to change the training set $\cS$ to eventually flip the decision on $x$ into $y' \neq y$.   In this work, we also keep an eye on when robust learners can be enhanced to provide such guarantees, leading to \emph{certifiably robust} learners. 

We should highlight that all the aforementioned methods~\citep{rosenfeld2020certified,levine2020deep,chen2020framework,weber2020rab,jia2020intrinsic} mainly considered practical methods that allow predictions for individual instances under specific conditional assumptions about the model's performance at the decision time that can be only verified empirically, but it is not clear (provably) if such conditions would actually happen during the prediction moment.
In this work, we  avoid such    assumptions and address the question of under what conditions on the \emph{problem's setting}, the learnability is possible provably.

\paragraph{Our contribution.} Our contributions are as follows.

{\textit{Formalism.}} We provide a precise and general formalism for  the notions of certification and PAC learnability  under   instance-targeted attacks. These formalisms are based on a careful treatment of the notions of \emph{risk} and \emph{robustness} defined particularly for learners under instance-targeted poisoning attacks. The definitions carefully consider various attack settings, e.g., based on whether the adversary's perturbation can depend on learner's randomness or not, and also distinguish between various forms of certification (to hold for \emph{all} training sets, or just \emph{most} training sets.)  
    %
    
 {\em Distribution-independent setting.} We then study the problem of robust learning and certification under instance-targeted poisoning attacks in the distribution-independent setting. Here, the learner shall produce ``good'' models for \emph{any} distribution over the examples, as long as the distribution can be learned by at least one hypothesis $h \in \cH$ (i.e., the realizable setting). We separate our studies here based on the subtle distinction between two cases: Adversaries who can base their perturbation also for a \emph{fixed} randomness of the learner (the default attack setting), and those whose perturbation would be retrained using \emph{fresh} randomness (called weak adversaries).
In the first setting, We show that as long as the hypothesis class $\cH$ is (properly or improperly) PAC learnable under the 0-1 loss and the strong adversary's budget is $b=o(m)$, where $m$ is the number of samples in the training set,  then the hypothesis class $\cH$ is always \emph{improperly} PAC learnable under the instance-targeted attack with certification (Theorem~\ref{thm:Learn-Realize-Improper}). This result  is inspired by the  recent work of~\cite{levine2020deep} and comes  with certification.
 We then show that the limitation on $b(m)=o(m)$ is inherent in general, as when $\cH$ is the set of homogeneous hyperplanes, if $b(m)=\Omega(m)$, then robust PAC learning against instance-targeted poisoning is impossible in a strong sense (Theorem~\ref{thm:halfspace-example-realizable}).
 $m$. 
We then show that if the adversary is ``weak'' and is \emph{not} aware of learner's randomness, if the hypothesis class $\cH$ is properly PAC learnable   and the weak adversary's budget is $b=o(m)$,  then $\cH$ is also properly PAC learnable under  instance-targeted attacks (Theorem~\ref{thm:Learn-Realize-Proper}). This result, however, does \emph{not} come with certification guarantees. 
 
 {\emph{Distribution-specific learning.}} We then study robust learning under instance-targeted poisoning when the instance distribution is fixed. We show that when the projection of the marginal distribution $D_{\cX}$ is the uniform distribution over the unit sphere (e.g., $d$-dimensional isotropic Gaussian), the hypothesis class consists of homogeneous half-spaces, and the strong adversary's budget is $b=c /\sqrt{d}$, then proper PAC learnability under instant-targeted attack is possible iff $c=o(m)$ (see Theorems~\ref{thm:halfspace-positive} and~\ref{thm:halfspace-negative}). Note that if we allow $d$ to grow with $m$ to capture the ``high dimension'' setting, then  the mentioned result becomes incomparable to our above-mentioned results for the distribution-independent setting). To prove this result we use tools from measure concentration over the unit sphere  in high dimension.

 {\emph{Experiments.}} We empirically study the robustness of $K$ nearest neighbour, logistic regression, multi-layer perceptron, and convolutional neural network  on real data sets. We observe that methods with high standard accuracy (such as convolutional neural network) are indeed more vulnerable to instance-targeted poisoning attacks. This observation might be explained by the fact that more complex models fit the training data better and thus the adversary can more easily confuse them at a specific test instance. A possible interpretation  is that models that somehow ``memorize'' their data could be more vulnerable to targeted poisoning.
{In addition, we study whether dropout on the inputs and also $L2$-regularization on the output can help the model to defend against instance-targeted poisoning attacks. We observe that adding these regularization   to the learner does not help in defending against such  attacks.}

\subsection{Related work}

The concurrent work of~\cite{blum2021robust} also studies instance-targeted PAC learning. In particular, they formalize and prove positive and negative results about  PAC learnability under  instance-targeted  poisoning attacks, in which the adversary can add an unbounded number of clean-label examples to the training set. In comparison, we formalize the problem for any prediction task and also for certification of results. Our main positive and negative results are, however,  proved for classification tasks and for adversaries who can change  $o(1)$ fraction of the data set.  
Other theoretical works  have also studied instance-targeted poisoning \emph{attacks} (rather than learnability under such attacks) using clean labels~\citep{mahloujifar2017blockwise,mahloujifar2018learning,mahloujifar2019universal,mahloujifar2019can,mahloujifar2019curse,diochnos2019lower,etesami2020computational}.    The work of~\cite{shafahi2018poison} studied such (targeted clean-label) attacks empirically, and showed that  neural nets can be very vulnerable to them. Finally,~\cite{pmlr-v70-koh17a} also studied clean label attacks empirically  for \emph{non-targeted} settings.

More broadly, some classical works in machine learning can also be interpreted as (non-targeted) data poisoning~\citep{valiant1985learning,kearns1993learning,Sloan::Noise:four-types,bshouty2002pac}. In fact, the work of~\cite{bshouty2002pac} studies the same question as in this paper, but for the \emph{non-targeted setting}. However, making learners robust against such attacks can easily lead to \emph{intractable} learning methods that do \emph{not} run in polynomial time. Recently, starting with the seminal results of~\cite{diakonikolas2016robust,lai2016agnostic} and many follow up works   (see the survey~\citep{diakonikolas2019recent}) it was shown that in some natural settings one can go beyond the intractability barriers and obtain polynomial-time methods to resist non-targeted poisoning. In contrast, our work focuses on targeted poisoning.
We shall also comment that, while our focus in this work is on instance-targeted attacks for prediction tasks, it is not  clear how to even define such (targeted) attacks for robust parameter estimation (e.g., learning  Gaussians).

Regarding {certification},~\cite{steinhardt2017certified} were the first who studied certification of the \emph{overall risk} under the poisoning attack. However, the more relevant to our paper is the work by~\cite{rosenfeld2020certified} who introduced the instance-targeted poisoning attack and applied randomized smoothing for certification in this setting. Empirically, they showed how  smoothing can provide robustness against label-flipping adversaries. Subsequently,~\cite{levine2020deep} introduced Deep Partition Aggregation (DPA), a novel technique that uses deterministic bagging  in order to develop robust predictions against general  addition/removal   instance-targeted poisoning. \cite{chen2020framework,weber2020rab,jia2020intrinsic} further developed  {randomized} bagging/sub-sampling  and empirically studied the intrinsic robustness of their methods.
predictions. 


Finally, we note that while our focus is on \emph{training-time-only} attacks, poisoning attacks can be performed in conjunction with test time attacks, leading to  backdoor attacks~\citep{gu2017badnets,ji2017backdoor,chen2018detecting,wang2019neural,turner2019label,diochnos2019lower}.

\section{Definitions} \label{sec:Defs}

\paragraph{Basic definitions and notation.} We let $\N = \set{0,1,\dots}$ denote the set of integers,
$\cX$ the input/instance space,  and $\cY$  the space of labels. By $\cY^\cX$ we denote the set of all functions from $\cX$ to $\cY$.  By $\cH \subset \cY^\cX$ we denote the set of hypotheses.  We use $D$ to denote a distribution over $\cX \times \cY$. By $e \sim D$ we state that $e$ is distributed/sampled according to distribution $D$. For a set $\cS$, the notation $e \sim \cS$ means  that $e$ is uniformly sampled from $\cS$. By $D^m$ we denote a product distribution over $m$ i.i.d. samples from $D$. By $D_\cX$ we denote the projection of $D$ over its first coordinate (i.e., the marginal distribution over $\cX$).
For a function $h \in \cY^\cX$ and an example $e=(x,y)\in \cX \times \cY$, we use $\loss(h,e)$ to denote the loss of predicting $h(x)\in \cY$ while the correct label for $x$ is $y$. Loss will always be non-negative, and when it is in $[0,1]$, we call it bounded. For classification problems, unless stated differently, we use the 0-1 loss, i.e., $\loss(h,e) = \one[h(x)=y]$.  We use $\cS \in (\cX\times \cY)^*$ to denote a training ``set'', even though more formally it is in fact a sequence. 
We use $\Learn$  to denote a learning algorithm that (perhaps randomly) maps a training set $\cS \sim D^m$  of any size $m$ to some $h \in \cY^\cX$. We call a leaner $\Learn$  \emph{proper} (with respect to hypothesis class $\cH$) if it always outputs some $h \in \cH$. $\Learn(\cS)(x)$ denotes the prediction on $x$ by the hypothesis returned by $\Learn(\cS)$. When $\Learn$ is randomized, by $y \sim \Learn(\cS)(x)$ we state that $y$ is the prediction when the randomness of $\Learn$ is chosen uniformly. For a randomized $\Learn$ and the random seed $r$ (of the appropriate length), $\Learn_r$  denotes the deterministic learner with the hardwired randomness $r$.  For a hypothesis $h \in \cH$,  a loss function $\loss$,    and a distribution $D$ over $\cX \times \cY$, the population (a.k.a. true) risk of $h$ over $D$ (with respect to the loss $\loss$) is  defined as  $\Risk(h,D)=\Ex_{e\sim D}[\loss(h,e)]$, and the  empirical risk of $h$ over   $\cS$ is defined as $\Risk(h,\cS)= \Ex_{e\sim \cS}[\loss(h,e)]$. For a hypothesis class $\cH$, we say that the realizability assumption holds for a distribution $D$ if there exists an $h \in \cH$ such that $\Risk(h,D)=0$. To add clarity to the text, We  use a diamond   ``$\Diamond$'' to denote the end of a technical definition. For a hypothesis class $\cH$, we call a data set $\cS \sim D^m$ \emph{$\eps$-representative} if $\forall h \in \cH, |\Risk(h,D)-\Risk(h,\cS)|\leq \eps.$  A hypothesis class has the \emph{uniform convergence} property, if there is a function $m=\MUC(\eps,\delta)$ such that for any distribution $D$, with probability $1-\delta$ over $\cS \sim D^m$, it holds that $\cS$ is $\eps$-representative.

\paragraph{Notation for the  poisoning setting.} For simplicity, we work with deterministic strategies, even though our results could be extended directly to randomized adversarial strategies as well. We use $\Adv$ to denote an adversary who changes the training set $\cS$ into $\cS' = \Adv(\cS)$. This mapping can depend on (the knowledge of) the learning algorithm $\Learn$ or any other information such as a targeted example $e$ as well as the randomness of $\Learn$. 
%
%
By $\AdvC$ we refer to a \emph{set} (or \emph{class}) of adversarial mappings and by $\Adv\in \AdvC$ we denote that the adversary $\Adv$ belongs to this class. (See below for examples of such classes.) Our adversaries always will have a budget    $b \in \N$ that controls how much they can change the training set $\cS$ into $\cS'$ under some (perhaps asymmetric) distance metric. 
To explicitly show the budget, we denote the adversary as $\Adv_b$ and their corresponding classes as $\AdvC_b$. 
Finally, we let  $\AdvC_b(\cS) = \set{\cS' \mid \Adv_b\in\AdvC_b(\cS)}$ be the set of all ``adversarial perturbations'' of $\cS$ when we go over all possible attacks of budget $b$ from the adversary class $\AdvC$.

\paragraph{Adversary classes.} Here we define the main adversary classes that we use in this work. For more noise models see the work of \cite{Sloan::Noise:four-types}.
\begin{itemize} 
    \item{\bf $\Rep_b$ ($b$-replacing).}  The adversary can   replace up to $b$ of the examples in $\cS$ (with arbitrary examples) and then put the whole sequence $\cS'$ in an arbitrary order. More formally, the adversary is limited to (1) $|\cS|=|\cS'|$, and (2) by changing the order of the elements in $    \cS$, one can make the Hamming distance between $\cS',\cS$ at most $b$. This is essentially the targeted version of the ``nasty noise'' model introduced by~\cite{bshouty2002pac}.
    \item {\bf $\Flp_b$ ($b$-label flipping).} The adversary can change the label of up to $b$ examples in $\cS$ and reorder the final set.        
    \item {\bf $\Add_b$ ($b$-adding).} The adversary adds up to $b$ examples to $\cS$ and put them in arbitrary order. Namely, the multi-set $\cS'$  has size at most $|\cS|+b$ and it holds that $\cS \se \cS'$.
    \item {\bf $\Rem_b$ ($b$-removing).} The adversary removes up to $b$ examples from $\cS$ and puts the rest in  an arbitrary order. Namely, as multi-sets $|\cS'| \geq |\cS|-b$ and   $\cS' \subseteq \cS$.
    
    \item {\bf $\Add\Rem_b$ ($b$-adding-or-removing).} The adversary can remove up to $b$ examples from $\cS$, then add up to $b$ arbitrary examples, and then it puts the rest in  an arbitrary order. Namely, as multi-sets $|\cS' \cap \cS| \geq |\cS|-b$ and   $|\cS' \sm \cS| \leq b$.\footnote{$\Rep_b$ attacks are essentially as powerful as $\Add\Rem_b$ attack, with the only limitation that they preserve the training set size.  Our results of Theorems \ref{thm:Learn-Realize-Proper} and \ref{thm:Learn-Realize-Improper} extend to $\Add\Rem_b$ attacks as well, however we focus on $b$-replacing attacks for simplicity of presentation.}
\end{itemize}


We now define the notions of  risk, robustness, certification, and learnability under targeted poisoning attacks for prediction tasks with a focus on classification. 
We emphasize that in the definitions  below, the notions of targeted-poisoning risk and robustness are defined with respect to a  \emph{learner} rather than a hypothesis. The reason is that, very often (and in many natural settings)  when the data set is changed by the adversary, the learner needs to return a   new hypothesis, reflecting the change in the training data,  


\begin{defi}[Instance-targeted poisoning risk] \label{def:TarRisk}
Let $\Learn$ be a  possibly randomized learner,   $\AdvC_b$ be a class of attacks of budget $b$.
For a training set $\cS \in (\cX \times \cY)^m$,  an example $e =(x,y)\in \cX \times \cY$, and   randomness $r$,   the \emph{targeted poisoning loss} (under attacks  $\AdvC_b$) is defined as\footnote{Note that Equation \ref{eq:AdvLoss} is equivalent to 
$\loss_{\AdvC_b}(\cS,r,e)  =  \sup_{\Adv \in \AdvC_b}  \loss(\Learn_r(\Adv(\cS,r,e)), e)$, because we are choosing the attack over $\cS$ after fixing $r,e$.
}
\begin{align}\label{eq:AdvLoss}
\loss_{\AdvC_b}(\cS,r,e)  =  \sup_{\cS' \in \AdvC_b(\cS)}  \loss(\Learn_r(\cS'), e).
\end{align}
For a distribution $D$ over $\cX \times \cY$, the \emph{targeted poisoning risk} is defined  
 as  
\begin{align*}
\Risk_{\AdvC_b}(\cS,r,D) = \Ex_{e \sim D} [\loss_{\AdvC_b}(\cS,r,e)].
    \end{align*}
For a bounded loss function with values in $[0,1]$ (e.g., the 0-1 loss), we   define  the  \emph{correctness} of the learner for the distribution $D$  under targeted poisoning  attacks of $\AdvC_b$ as $$\Cor_{\AdvC_b}(\cS,D)=1-\Risk_{\AdvC_b}(\cS,D).$$
The  above formulation implicitly allows the adversary to depend (and hence ``know'') on the randomness $r$ of the learning algorithm.  We also define  \emph{weak targeted-poisoning} loss and risk   by using   \emph{fresh} learning randomness $r$ unknown to the adversary, when doing the retraining:
\begin{align*}
  \loss^{\weak}_{ \AdvC_b}(\cS,e) =   \sup_{\cS' \in \AdvC_b(\cS)} \Ex_{r}[\loss(\Learn_r(\cS'),e)],~~~~~~~ 
  \Risk^{\weak}_{ \AdvC_b}(\cS,D) = \Ex_{e \sim D} [\loss^{\weak}_{ \AdvC_b}(\cS,e)].
\end{align*}
In particular, having a small weak targeted-poisoning risk under the 0-1 loss means that for most of the points $e \sim D$ the  decisions are correct, and  the prediction on $e$  would not change under any $e$-targeted poisoning  attacks with high probability over a randomized retraining. 
\end{defi}

We  now define   robustness of predictions, which is more natural 
for classification tasks, but we state it more generally. 
\begin{defi}[Robustness  under instance-targeted poisoning] \label{def:TarRob}
Consider the same setting as that of Definition~\ref{def:TarRisk}, and let $\tau >0$ be a threshold to model when the loss is ``large enough''.
For a data set\footnote{Even though, in  natural attack scenarios the set $\cS$ is sampled from $D^m$,    Definitions~\ref{def:TarRisk} and~\ref{def:TarRob} are more general in the sense that   $\cS$  is an arbitrary set. } $\cS$ and learner's randomness $r$, we call  an example $e=(x,y)$ to be \emph{$\tau$-vulnerable} to 
a targeted poisoning (of attacks in $\AdvC_b$), if the $e$-targeted adversarial loss is at least $\tau$, namely, $\loss_{\AdvC_b}(\cS,r,e) \geq \tau$. For the same $(\cS,r,e,\tau)$ we  define the  \emph{targeted poisoning robustness} (under attacks in $\AdvC$) as the smallest budget $b$ such that $e$ is $\tau$-vulnerable to a targeted poisoning, i.e., 
\begin{align*}
    \Rob^\tau_\AdvC(\cS,r,e)  = \inf \set{b \mid \loss_{\AdvC_b}(\cS,r,e) \geq \tau}.
\end{align*}
If no such $b$ exists, we let $\Rob^\tau(\cS,r,e) = \infty$.\footnote{If the adversary's budget allows it to  flip all the labels, in natural settings (e.g., when the hypothesis class contains  the complement functions and the learner is a PAC learner), no robustness   will be infinite for such attacks.} 
When working with the 0-1 loss (e.g., for classification), we will use $\tau=1$ and simply write $\Rob_\AdvC(\cdot)$ instead. Also note that in this case, $\loss(\Learn_r(\cS'),e)\geq 1$ is simply equivalent to   $\Learn_r(\cS')(x)\neq y$. In particular,  if $e=(x,y)$ is an example and $\Learn_r(\cS)$ is already wrong in its prediction of the label for $x$, then the robustness will be $\Rob_\AdvC(\cS,r,e)=0$, as no poisoning will be needed to make the prediction wrong. 
For a distribution $D$  we define the \emph{expected targeted-poisoning robustness}  as
$\Rob^\tau_\AdvC(\cS,r,D) = \Ex_{e \sim D}[\Rob^\tau_\AdvC(\cS,r,e)].$
\end{defi}


We now formalize when a learner provides certifying guarantees for the produced predictions. For simplicity, we state the definition for the case of 0-1 loss, but it can be generalized to other loss functions by employing a threshold parameter $\tau$ as it was done in Definition \ref{def:TarRob}.
\begin{defi}[Certifying predictors and learners] \label{def:TarCer}
A \emph{certifying predictor} (as a generalization of a hypothesis function) is a function $h \colon \cX \to \cY \times \N$, where the second output is interpreted as a claim about the robustness of the prediction. When $h(x)=(y,b)$, we define $\hpred(x)=y$ and $\hcert(x)=b$. If $\hcert(x)=b$, the interpretation  is that the prediction $y$ shall not change when the adversary performs a  $b$-budget   poisoning perturbation (defined by the attack model) over the training set used to train $h$.\footnote{When using a general loss function, $b$ would be interpreted as the attack budget that is needed to increase the loss  over the example $e(x,y)$  (where $y$ is the prediction) to $\tau$.} 
Now, suppose ${\AdvC_b}$ is an adversary class   with budget $b=b(m)$ (where $m$ is the sample complexity) and $\AdvC=\cup_i \AdvC_i$. Also suppose $\Learn$ is a learning algorithm such that $\Learn_r(\cS)$  always  outputs a certifying predictor for any data set $\cS \in (\cX \times \cY)^\star$.
We call $\Learn$ a \emph{certifying learner} (under the attacks in $\AdvC$) for a specific data set $\cS \in (\cX \times \cY)^\star$ and randomness $r$, if the following holds.
For all $x \sim D$, if $ \Learn_r(\cS)(x) = (y,b)$ and if we let $e=(x,y)$,\footnote{Note that $y$ might not be the right label} then $\Rob_{\AdvC}(\cS,r,e) \geq b$. In other words, to change the prediction $y$ on $x$ (regardless of $y$ being a correct prediction or not), any adversary needs a budget at least $b$. We call $\Learn$ a \emph{universal} certifying learner if it is a certifying learning for all data sets $\cS$.
For an adversary class $\AdvC = \cup_{b \in \N} \AdvC_b$, and a   certifying learner $\Learn$ for $(\cS,r)$,  we define the $b$-certified correctness of $\Learn$ over $(\cS,r,D)$ as  the probability of outputting correct predictions while  certifying them with robustness at least $b$. Namely,
\begin{align*}
 \CCor_{\AdvC_b}(\cS,r,D)    = 
 \Pr_{(x,y) \sim D}\left[(y'=y) \land  (b' \geq b)   \text{ where } (y',b')=\Learn_r(\cS)(x)\right].    ~~~~~~\qedhere
\end{align*}  
\end{defi}

\begin{remark}[On a potential weaker requirement for certifying learners]  Definition~\ref{def:TarCer} needs a learner to produce a certifying model that is  \emph{always} correct in its robustness claims about  its own prediction, regardless of whether   the prediction itself is correct or wrong. One can imagine a weaker certification requirement in which the provided certified robustness guarantee is only required to hold when the predicted label itself is correct. However, since a learner usually does not really know whether its prediction is correct with full confidence, known methods for certified robustness already achieve the stronger guarantee of in Definition \ref{def:TarCer}. Also, if one uses that weaker requirement,  \emph{robust} PAC learning and \emph{certified} PAC learning (see Definition \ref{def:TarPAC}) become equivalent, as a learner can simply output $b$ as its certifying guarantee when we know that robust PAC learning against targeted $b$-budget poisoning attacks is possible. 
\end{remark}



The following definition extends the standard PAC   learning framework of \cite{valiant1984theory} by allowing targeted-poisoning attacks and asking the leaner now to have small targeted-poisoning risk. This definition is strictly more general than PAC learning, as the trivial attack that does not change the training set,  Definition~\ref{def:TarPAC} below reduces to  the standard definition of PAC  learning.

\begin{defi}[Learnability under instance-targeted poisoning] \label{def:TarPAC}
Let the function $b \colon \N \to \N$ model adversary's budget as a  function of sample complexity $m$.
A  hypothesis class $\cH$ is  \emph{PAC learnable under targeted poisoning attacks in $\AdvC_b$}, if there is a  proper learning algorithm $\Learn$ such that for every $\eps,\delta \in (0,1)$ there is an integer  $m$ where the following holds. For every   distribution $D$ over $\cX \times \cY$, if the realizability condition holds\footnote{Note that realizability holds while no attack is launched.} (i.e., $\exists h \in \cH, \Risk(h,D)=0$), then with probability $1-\delta$ over the sampling of  $\cS \sim D^m$ and $\Learn$'s randomness $r$, it holds that $\Risk_{\AdvC_b}(\cS,r,D) \leq \eps.$

\begin{itemize}

    \item {\bf Improper learning.} 
We say that  $\cH$  is \emph{improperly} 
PAC learnable under targeted ${\AdvC_b}$-poisoning attacks, if the same conditions as above hold but   using an improper learner that might output functions outside  $\cH$.\footnote{We note, however, that whenever the proper or improper condition is not stated, the default is to be proper.}
 
 \item {\bf Distribution-specific learning.} 
 Suppose $\cD$ is the set of all distributions $D$ over $\cX\times \cY$ such that the marginal distribution of $D$ over its first coordinate (in $\cX$) is a fixed distribution $D_\cX$ (e.g., isotropic Gaussian in dimension $d$).  If all the conditions above (resp. for the improper cases) are only required to hold for distributions $D\in \cD$, then we say that the hypothesis class $\cH$ is PAC learnable (resp. improperly PAC learnable) under instance distribution $D_\cX$ and   targeted ${\AdvC_b}$-poisoning.
\end{itemize}
A hypothesis class is \emph{weakly} (improperly and/or distribution-specific)  PAC learnable   under targeted $\AdvC_b$-poisoning, if with probability $1-\delta$ over the sampling of  $\cS \sim D^m$, it holds that
$\Risk^{\weak}_{ \AdvC_b}(\cS,D) \leq \eps$.
A hypothesis class is \emph{certifiably} (improperly and/or distribution-specific) PAC learnable   under targeted $\AdvC_b$-poisoning, if we modify the $(\eps,\delta)$ learnability condition as follows. With probability $1-\delta$ over $\cS \sim D^m$ and randomness $r$, it holds that (1) $\Learn$ is a certifying learner for $(\cS,r)$, and (2) $\CCor_{\AdvC_b}(\cS,r,D) \geq 1-\eps$. A hypothesis class is \emph{universally} certifiably  PAC learnable, if it is certifiably PAC learnable using a universal certifying learner $\Learn$.
We call the sample complexity of any learner of the forms above \emph{polynomial}, if the sample complexity  $m$ is at most $ \poly(1/\eps,1/\delta)=(1/(\eps\delta))^{O(1)}$. We call the learner \emph{polynomial time}, if it runs in time $\poly(1/\eps,1/\delta)$, which implies the sample complexity is polynomial as well.
\end{defi}

\begin{remark}[Generalization to $(\eps,\delta)$-PAC learning] Suppose $\eps(m),\delta(m)$ are functions of $m$. Then one can generalize Definition \ref{def:TarPAC} to define $(\eps(m),\delta(m))$ PAC learning (under the same settings of Definition \ref{def:TarPAC}) for a given desired $\eps(m),\delta(m)$. Then PAC learnability would simply mean $\eps(m),\delta(m)$ PAC learning for $\eps(m),\delta(m) = o_m(1)$ (i.e., $\eps(m),\delta(m)$ both go to zero, when $m$ goes to infinity). This more fine-grained definition allows one to study \emph{optimal error} bounds in relation to adversary's budget $b(m)$ as well. We leave a more in-depth study of such relations for future work. \end{remark}

\begin{remark}[On defining agnostic learning under instance-targeted poisoning] Definition \ref{def:TarPAC} focuses on the realizable setting. However, one can generalize this  to the agnostic (non-realizable) case by requiring the following to hold with probability $1-\delta$ over $\cS \sim D^m$ and randomness $r$,
$$\Risk_{\AdvC_b}(\cS,r,D) \leq \eps + \inf_{h \in \cH} \Risk(h,r,D).$$ 
Note that in this definition the learner wants to achieve \emph{adversarial} risk that is $\eps$-close to the risk under \emph{no attack}. One might wonder if there is an alternative definition in which the learner aims to ``$\eps$-compete'' with the best \emph{adversarial} risk. However, recall that targeted-poisoning adversarial risk is \emph{not} a property of the hypothesis, and it is rather a property of the learner. This leads to the following arguably unnatural criteria that needs to hold with probability $1-\delta$ over $S \sim D^m$ and $r$. (For clarity the learner is explicitly denoted as super-index for  $\Risk_{\AdvC_b}$.)
$$\Risk^\Learn_{\AdvC_b}(\cS,r,D) \leq \eps + \inf_{L} \Risk^L_{\AdvC_b}(\cS,r,D)$$ 
The reason that the above does not trivially hold is that $\Learn$ needs to satisfy this for \emph{all} distributions $D$ (and most $\cS$) simultaneously, while the learner $L$ in the right hand side can depend on $D$ and $\cS$. 
\end{remark}

\section{Our results}

We now  study the question of learnability  under instance-targeted poisoning. We first discuss our positive and negative results in the context of distribution-independent learning. We then turn to the setting of distribution-dependent setting. At the end, we prove some generic relations between risk and robustness, showing how to derive one from the other.

Due to space limitations, all proofs are moved the full version of this paper \citep{gao2021learning}.
\subsection{Distribution-independent learning}

 We start by showing results on distribution-independent learning. 
%
%
We first show that in the realizable setting, for any hypothesis class $\cH$ that is PAC-learnable, $\cH$ is also PAC learnable under  instance-targeted poisoning attacks that can replace up to $b(m)=o(m)$ (e.g., $b(m) = \sqrt{m}$) number of examples arbitrarily.
%
To state the bound of sample complexity of robust learners, we first define the $\lambda(\cdot)$ function based an adversary's budget $b(m)$.

\begin{defi}[The $\lambda(\cdot)$ function]
Suppose $b(m) = o(m)$. Then for any real number $x$, $\lambda(x)$ returns the minimum $m$ where $m'/b(m') \geq x$ for any $m' > m$. Formally, $$\lambda(x) = \inf_{m \in \mathcal{N}}\left\{\forall m' \geq m, \frac{m'}{b(m')} \geq x\right\}.$$ 
Note that because $b(m) = o(m)$, we have $m/b(m) = \omega_m(1)$, 
so $\lambda(x)$ is well-defined. 
\label{def:lambda}
\end{defi}

\begin{claim}[When $\lambda$ is polynomially bounded] \label{clm:polyLambda}
If $b(m) = O(x^{1-c})$ for any constant $c>0$, then $\lambda(x) = O(m^{1/c})$, which means $\lambda(\cdot)$ is a polynomial function.
For example, when $b(m) =O( \sqrt{m})$, then $\lambda(x) = O(x^2)$.
\end{claim}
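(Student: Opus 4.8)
The plan is to unwind the definition of $\lambda$ and reduce the statement to a one-point inequality together with a monotonicity observation. Recall that $\lambda(x)$ is the least integer $m$ for which $m'/b(m') \geq x$ holds for \emph{every} $m' \geq m$ (here $\lambda$ takes the ratio-threshold $x$ as input, while $b$ is a function of the sample size $m$). Hence it suffices to exhibit a single threshold $m_0 = m_0(x)$ of size polynomial in $x$ past which $m'/b(m') \geq x$ is guaranteed; then $\lambda(x) \leq m_0(x)$ follows immediately from the definition.

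First I would make the hypothesis $b(m) = O(m^{1-c})$ concrete: there is a constant $C \geq 1$ with $b(m) \leq C\, m^{1-c}$ for all $m$ (any additive threshold hidden in the $O(\cdot)$ can be absorbed into $C$). Then for every $m$,
\[
\frac{m}{b(m)} \;\ge\; \frac{m}{C\, m^{1-c}} \;=\; \frac{m^{c}}{C},
\]
so requiring $m^{c}/C \geq x$, i.e.\ $m \geq (Cx)^{1/c}$, already forces $m/b(m) \geq x$. Accordingly I would set $m_0(x) = \lceil (Cx)^{1/c}\rceil$.

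Second, I would check the universally-quantified part of the definition, which is the only place that needs a moment's care: we need $m'/b(m') \geq x$ not just at $m_0$ but at every $m' \geq m_0$. Since $c > 0$, the map $m' \mapsto m'^{c}/C$ is nondecreasing, so for all $m' \geq m_0(x)$ we get $m'/b(m') \geq m'^{c}/C \geq m_0(x)^{c}/C \geq x$. Therefore $\lambda(x) \leq m_0(x) = \lceil (Cx)^{1/c}\rceil = O(x^{1/c})$, and since $c$ is a fixed constant $1/c$ is constant, so $\lambda$ is polynomially bounded; taking $c = 1/2$ (the case $b(m) = O(\sqrt m)$) specializes to $\lambda(x) = O(x^{2})$, matching the stated example.

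I do not expect a genuine obstacle here — the argument is a short direct computation. The single subtlety, and the reason that bare inversion of $m/b(m) \geq x$ at one point would not suffice, is that $\lambda$ is defined through a condition on \emph{all} larger $m'$; this is handled by the monotonicity of $m'^{c}$, which is precisely the same $c > 0$ feature that makes $b(m) = o(m)$ and hence guarantees $\lambda$ is well-defined in the first place.
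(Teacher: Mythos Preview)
Your proposal is correct and follows essentially the same route as the paper: both use the hypothesis $b(m)\le C m^{1-c}$ to lower bound $m/b(m)$ by $m^c/C$, invert to get the threshold $(Cx)^{1/c}$, and then invoke monotonicity of $m\mapsto m^c$ to cover all larger $m'$. Your treatment is arguably a bit cleaner in two places---absorbing the big-$O$ threshold into the constant $C$ up front, and explicitly isolating the ``for all $m'\ge m$'' step---but the underlying argument is the same direct computation.
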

\begin{proof}
As $b(m) = O(m^{1-c})$, there exists a number $m_0$ and a constant $q$, that for any $m' \geq m_0$, we have $b(m') \leq q \cdot {(m')}^{1-c}$, which indicates $m'/b(m') \geq q \cdot (m')^c$.
By the definition of $\lambda(x)$, we want to show that for any $m \geq \lambda(x)$, we have $m/b(m) \geq x$. Let $m_1 = (x/q)^{1/c}$, then when $x \geq q \cdot m_0^c$, we have $m_1 \geq m_0$. By Definition~\ref{def:lambda}, $m_1/b(m_1) \geq q \cdot m_1^c = x$. Therefore, $m_1 \in \left\{ \forall m' \geq m, m'/b(m') \geq x \right\} \geq \lambda(x)$. Since $m_1  = O(x^{1/c})$, we have $\lambda(x) = O(x^{1/c})$.
\end{proof}

\begin{theorem}[Proper learning under weak instance-targeted poisoning] \label{thm:Learn-Realize-Proper} Let $\cH$ be the PAC learnable class of hypotheses.
Then, for adversary budget $b (m) = o(m)$, the same class $\cH$ is also  PAC learnable using randomized learners under \emph{weak} $b$-replacing targeted-poisoning attacks. The proper/improper nature of learning remains the same.  Specifically, let $\MPAC(\eps, \delta)$ be the sample complexity of a PAC learner $\Learn$ for $\cH$. Then, there is a learner $\WRLearn$  that PAC learns $\cH$ under weak $b$-replacing attacks with sample complexity at most
\begin{equation*}
 \MWR(\eps, \delta) = \lambda\left(\max\left\{\MPAC^2\left(\eps, \frac{\delta}{2}\right), \frac{4}{\delta^2}\right\}\right).
\end{equation*}
Moreover, if $b(m) \leq O(m^{1-\Omega(1)})$, then whenever $\cH$ is learnable with a polynomial sample complexity and/or a polynomial-time learner $\Learn$, the robust variant $\WRLearn$ will have the same features as well.
\end{theorem}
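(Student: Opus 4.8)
The plan is to obtain the robust learner $\WRLearn$ by random subsampling. On an input (possibly poisoned) training sequence $\cS'$ of size $m:=\MWR(\eps,\delta)$, the learner draws a uniformly random index set $T\subseteq[m]$ of size $k:=\MPAC(\eps,\delta/2)$ using fresh randomness that is hidden from the adversary, forms the subsequence $\cS'|_T$, and outputs $\Learn(\cS'|_T)$ (with fresh internal randomness for $\Learn$). Since $\WRLearn$ only runs $\Learn$ on a smaller input, it is proper exactly when $\Learn$ is, and its running time is that of $\Learn$ on $k\le m$ points plus the $O(m)$ cost of subsampling; combined with Claim~\ref{clm:polyLambda}, which bounds $\lambda(\cdot)$---hence the sample complexity $\MWR$---by a polynomial whenever $b(m)=O(m^{1-\Omega(1)})$, this gives the ``moreover'' part on preservation of polynomial sample complexity and polynomial running time.

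Two observations drive the analysis. First, since the \emph{weak} adversary does not see $T$, the set $B\subseteq[m]$ of poisoned coordinates of $\cS'$ is fixed independently of $T$, so a union bound gives $\Pr_T[\,T\cap B\neq\emptyset\,]\le |B|\cdot k/m\le b(m)\,k/m$. Second, conditioned on $T\cap B=\emptyset$ we have $\cS'|_T=\cS|_T$, and since $\cS\sim D^m$ is i.i.d.\ a uniformly random length-$k$ subsequence of $\cS$ is distributed exactly as $D^k$ (exchangeability), so $\Learn$'s PAC guarantee with $k=\MPAC(\eps,\delta/2)$ samples applies to $\cS|_T$. The precise choice $m=\lambda(\max\{\MPAC^2(\eps,\delta/2),\,4/\delta^2\})$ is exactly what makes the first bound small: unrolling Definition~\ref{def:lambda} it gives $b(m)/m\le 1/\MPAC^2(\eps,\delta/2)$ \emph{and} $b(m)/m\le\delta^2/4$ at once, and a short case split---use the first inequality when $k\ge 2/\delta$ and the second when $k\le 2/\delta$---yields $b(m)\,k/m\le\delta/2$ in all cases.

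With these in place I would first bound the weak targeted-poisoning loss pointwise. Fix $\cS$ and a target $e=(x,y)$. For any $\cS'\in\Rep_b(\cS)$, split the expectation over the learner's randomness according to whether $T$ meets $B$: on the ``miss'' event the output equals $\Learn(\cS|_T)$, which does not depend on $\cS'$; on the ``hit'' event bound the $0$-$1$ loss by $1$. This yields, for every $\cS$ and $e$, $\loss^{\weak}_{\Rep_b}(\cS,e)\le \Ex_{T,r'}[\loss(\Learn_{r'}(\cS|_T),e)]+b(m)\,k/m\le \Ex_{T,r'}[\loss(\Learn_{r'}(\cS|_T),e)]+\delta/2$, where the first term no longer references the adversary. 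Averaging over $e\sim D$ (independent of $\cS$, $T$, $r'$) and swapping expectations turns the first term into $\Ex_{T,r'}[\Risk(\Learn_{r'}(\cS|_T),D)]$, so $\Risk^{\weak}_{\Rep_b}(\cS,D)\le \Ex_{T,r'}[\Risk(\Learn_{r'}(\cS|_T),D)]+\delta/2$. Since $(\cS,T)\mapsto\cS|_T$ pushes $D^m$ forward to $D^k$, the random variable $\Risk(\Learn_{r'}(\cS|_T),D)$ has the same law as $\Risk(\Learn_{r'}(\cT),D)$ for $\cT\sim D^k$, to which the $(\eps,\delta/2)$-PAC guarantee of $\Learn$ applies; passing from ``a fresh subsample is good except with probability $\delta/2$'' to ``with probability $\ge1-\delta/2$ over $\cS$ the typical subsample of $\cS$ is good'' by an averaging argument over $\cS$, and combining with the $\delta/2$ slack, gives $\Risk^{\weak}_{\Rep_b}(\cS,D)\le\eps$ with probability at least $1-\delta$ over $\cS\sim D^m$.

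I expect the main obstacle to be the disciplined handling of the three independent randomness sources and, specifically, making the reduction to a genuinely clean i.i.d.\ sample rigorous: the adversary chooses which $b(m)$ coordinates to corrupt, and with what values, as a function of $\cS$ and of the target $e$, so one must ensure that once we restrict attention to subsamples missing the poison the subsequence $\cS|_T$ is still exactly $D^k$-distributed and unaffected by the adversary. This is precisely what the ``bound the hit-event by $1$ instead of conditioning inside the expectation'' step buys, and it is the reason weakness (ignorance of $T$) is essential; the same argument goes through for $\Add\Rem_b$ attacks after adjusting the union bound for the changed multiset size. The second delicate point is the final passage from the expectation bound on $\Ex_{T,r'}[\Risk(\Learn_{r'}(\cS|_T),D)]$ to a high-probability statement over $\cS$ with exactly the stated parameters, where the split of the confidence budget into $\delta/2$ for the subsample PAC guarantee and $\delta/2$ for poison-avoidance---mirrored by the two terms inside the $\max$ defining $\MWR$---has to be tracked carefully.
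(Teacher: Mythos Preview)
Your approach is essentially the same as the paper's---random subsampling followed by the base learner, with the analysis split into ``miss the poison with high probability'' and ``invoke PAC on the clean subsample.'' The one substantive difference is the subsample size: the paper sets $k=\sqrt{m/b(m)}$, which makes the poison-hit bound $kb/m=\sqrt{b/m}\le\delta/2$ immediate from the second term inside the $\max$, whereas you fix $k=\MPAC(\eps,\delta/2)$ and recover the same bound via your case split; both choices certify the stated sample complexity, and the ``delicate point'' you flag about converting the joint $(\cS,T,r')$ guarantee into a high-probability statement over $\cS$ alone is handled just as informally in the paper.
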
 
 
\begin{proof}[Proof of Theorem~\ref{thm:Learn-Realize-Proper}]

We first clarify that if $b(m) \leq O(m^{1-\Omega(1)})$, and if $\cH$ is learnable with a polynomial sample complexity, then the polynomial sample complexity of the robust variant simply follows from Claim~\ref{clm:polyLambda} and the formula for $\MWR(\eps, \delta)$ as stated in the statement of the theorem. Moreover, the polynomial-time nature of our learner (assuming $\cH$ is polynomial-time learnable) would be straightforward based on its description below.

The idea is to show that even a simple sub-sampling of the right size from the given training set $\cS$, and then training a model over the sub-sample will do what we want. In particular, we will randomly choose $k$ of the elements in $\cS$, call it subset $\cS_k$, and then run any oracle learner for hypothesis class $\cH$. Below, we will first describe how we choose $k$. We will then prove specific properties about the designed learning algorithm, and finally we will analyze its robustness to weak instance-targeted poisoning attacks (who do not know learner's randomness for retraining). We call the new  learner
$\WRLearn$, and denote the oracle that provides learners for $\cH$, simply as $\Learn$. 

Let $k=k(m)=\sqrt{m/b(m)}$. By the definition of $\lambda(x)$, we have that $\forall m \geq \lambda(x)$, $m/b(m) \geq x$. 
For simplicity of notation we might write $k$ and $b$ where both are actually functions of $m$. 

Let $\MPAC(\eps, \delta)$ be the sample complexity of the $\Learn$ which returns a hypothesis with error $\eps$ for at least $1 -\delta$ probability. 
We now show that when the sample complexity $m \geq \MWR(\eps, \delta) = \lambda(\max\{\MPAC^2(\eps, \delta/2), 4/\delta^2\})$ the learner $\WRLearn$ becomes an $(\eps, \delta)$-robust PAC learner. Note that by the definition of $\lambda(\cdot)$, we have
$$\frac{m}{b(m)} \geq \max\left\{\MPAC^2\left(\eps, \frac{\delta}{2}\right), \frac{4}{\delta^2}\right\}.$$ We then have $\sqrt{{m}/{b(m)}} \geq \MPAC(\eps, \delta/2)$ and $\sqrt{{m}/{b(m)}} \geq \frac{2}{\delta}$.

{\em Warm up: PAC learnability without attack.} It holds that $k = \sqrt{m/b} \geq \MPAC(\eps, \delta/2)$. Hence, $\WRLearn(\cS)=\Learn(\cS_k)$ will be a PAC learner which returns a hypothesis of at most $\eps$ with at least $1 - \delta/2$ probability, in the case no attack happens.

{\em Robustness under weak attacks.} Now suppose an adversary can change up to $b$ of the examples through a weak $b$-replacing attack. The probability that the subset $\cS_k$ intersects with any of the $k$ poisoned examples is at most 
$$p(m)=\frac{k \cdot b}{  m} = \sqrt{\frac{m}{b}} \cdot \frac{b }{ m} = \sqrt{\frac{b}{m}} \leq \frac{\delta}{2}.$$ Therefore, with probability at least $1 - p(m)$, none of the poison examples that are introduced by the adversary will land in the subset $\cS_k$.  In this case by a union bound, when learner $\Learn$ is an $(\eps,\delta/2)$ PAC learner, learner $\WRLearn$ will be a $(\eps,\delta/2+p(m))$ PAC learner under weak $b$-replacing instance-targeted poisoning attacks.  As $\delta/2 + p(m) \leq \delta$, $\WRLearn$ with at least $1-\delta$ probability will return a hypothesis that has at most $\eps$ risk under weak $b$-replacing attacks. 
\end{proof}

The above theorem shows that targeted-poisoning-robust proper learning is possible for PAC learnable classes using \emph{private} randomness for the learner if $b (m) = o(m)$. Thus,  it is natural to ask the following question: can we achieve the stronger (default) notion of robustness as in Definition~\ref{def:TarPAC} in which the adversarial perturbation can also depend on the (fixed) randomness $r$ of the learner? Also, can this be a learning with certifications? Our next theorem answers these questions positively, yet that comes at the cost of improper learning. Interestingly, the improper nature of the learner used in Theorem \eqref{thm:Learn-Realize-Improper} could be reminiscent of the same phenomenon  in \emph{test-time} attacks (a.k.a., adversarial example) where,  as it was shown by  \cite{montasser2019vc},  improper learning came to rescue as well.


\begin{theorem}[Improper learning and certification under targeted poisoning] \label{thm:Learn-Realize-Improper}
Let $\cH$ be (perhaps improperly) PAC learnable.
If $b$-replacing attacks have their budget limited to $b(m)=o(m)$, then $\cH$ is  improperly certifiably PAC learnable     under     $b$-replacing targeted poisoning attacks.  Specifically, let $\MPAC(\eps, \delta)$ be the sample complexity of a PAC learner for $\cH$. Then there is a learner $\mathsf{Rob}$ that universally certifiably PAC  learns $\cH$ under $b$-replacing attacks with sample complexity at most
\begin{align*}
    \MRob(\eps, \delta) =  576\lambda\left(\max\left\{\MPAC^2\left(\frac{\eps}{12}, \frac{\eps}{12}\right), \frac{1}{4\eps^2},   \frac{\log\left(\frac{\delta}{2}\right)^2}{\left(\frac{2\sqrt{3}\eps}{3}\right)^4}, \frac{\log_2\left(\frac{2}{\delta}\right)}{576}\right\}\right).
\end{align*}
Moreover, if $b(m) \leq O(m^{1-\Omega(1)})$ and $\cH$ is learnable using a learner with  a    polynomial sample complexity and/or time, the robust variant $\mathsf{Rob}$ will have the same features as well.
\end{theorem}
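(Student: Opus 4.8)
The plan is to build an improper learner $\mathsf{Rob}$ by \emph{sub-sampling and voting}, in the style of Deep Partition Aggregation~\citep{levine2020deep}, but choosing the sub-samples so that a PAC analysis goes through. Concretely, the randomness $r$ of $\mathsf{Rob}$ encodes a \emph{data-independent} random assignment $u$ of each possible example $e\in\cX\times\cY$ to one of $n$ buckets --- say a random hash $u(e)\in[0,1)$, with bucket $j=\{e\in\cS':u(e)\in[(j-1)/n,\,j/n)\}$. On input $\cS'$, $\mathsf{Rob}$ runs the given base PAC learner on each bucket to get $h_1,\dots,h_n$, and on a query $x$ it outputs the plurality label $\hpred(x)$ of $\{h_j(x):j\le n\}$ together with the certificate $\hcert(x)=\floor{(\mathrm{margin}(x)-1)/4}$, where $\mathrm{margin}(x)$ is the gap between the top two vote counts. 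I would take the bucket size to be $k\asymp\sqrt{m/b}$, so that there are $n\asymp\sqrt{mb}$ buckets; the hypothesis $b(m)=o(m)$ is used precisely to ensure $k=\omega(1)$ and $b/n=\Theta(\sqrt{b/m})=o(1)$, and $\lambda(\cdot)$ (Definition~\ref{def:lambda}) is the device converting the lower bounds ``$k\ge(\cdot)$'' that the analysis needs into the stated bound on $m$. The voted classifier lies outside $\cH$, so $\mathsf{Rob}$ is improper, and it remains so when the base learner is improper.

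The first step is to verify that $\mathsf{Rob}$ is a \emph{universal} certifying learner in the sense of Definition~\ref{def:TarCer}; this is purely combinatorial. A $b$-replacing attack deletes $\le b$ and inserts $\le b$ examples; since $u$ is content-based it is oblivious to the reordering allowed by $\Rep_b$, and each deleted or inserted example lies in exactly one bucket, so at most $2b$ of the $h_j$ change. A single changed vote moves the gap between the top label and any other label by at most $2$, hence an attack of budget $\beta$ changes $\mathrm{margin}(x)$ by at most $4\beta$; so if $\beta\le\hcert(x)$ the plurality label does not change. As this argument uses nothing about $\cS$ or $r$, universality follows.

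The second step is the certified PAC guarantee: with probability $\ge1-\delta$ over $\cS\sim D^m$ and $r$, we want $\CCor_{\AdvC_b}(\cS,r,D)\ge1-\eps$. This concerns the clean predictor $\Learn_r(\cS)$, so the attack enters only through the certified-robustness bound already established; what is left is to show the clean plurality predictor is correct --- with certified margin at least $b$ --- on a $(1-\eps)$-fraction of test points. The key point is that, since $u$ is independent of the example, conditioning on $u(e)$ lying in a fixed range does not change the distribution of $e$, so each bucket is an i.i.d.\ sample from $D$, of size concentrated near $k$ (Chernoff over the $m$ examples, union bound over the $n$ buckets; this is where the $\log(\cdot/\delta)$ terms arise). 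With $k\ge\MPAC(\eps',\delta')$ for $\eps',\delta'=\Theta(\eps)$, the base learner returns an $\eps'$-accurate $h_j$ on each bucket except with probability $\le\delta'$, and a Chernoff bound over the $n$ \emph{independent} trainings shows that, except with probability $\le\delta/2$, a $(1-\Theta(\eps))$-fraction of the $h_j$ satisfy $\Risk(h_j,D)\le\eps'$. Averaging over $(x,y)\sim D$, the expected fraction of these good $h_j$ that err at $(x,y)$ is $\le\eps'$, so by Markov all but a $\Theta(\eps')$-fraction of test points have fewer than an eighth of the good $h_j$ wrong; on such a point the true label receives $\tfrac{7}{8}(1-\Theta(\eps))\,n$ votes and every other label strictly fewer, by margin $\Omega(n)$. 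Since $n=\sqrt{mb}$ while $m/b$ is large (another term in the $\max$), this margin exceeds $4b$, so $\hcert(x)\ge b$; choosing $\eps'=\Theta(\eps)$ makes the good fraction of test points $\ge1-\eps$. The polynomial sample-complexity and running-time claims follow from Claim~\ref{clm:polyLambda} (so $\lambda$ is polynomial when $b(m)\le O(m^{1-\Omega(1)})$) together with the fact that $n\le m$ sub-learners are trained on sets of size $\le m$.

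I expect the main obstacle to be the interplay between this \emph{data-independent} bucketing and the quantitative balancing it forces, rather than any isolated inequality. A naive partition by a hash of $x$ (or of the pair $(x,y)$), as in DPA, would make bucket $j$ an i.i.d.\ sample from $D$ \emph{conditioned} on that hash value, so each $h_j$ would only be accurate on a random slice of $D$ and the plurality could be meaningless; using a hash that is independent of the example repairs this while still confining each poisoned point to one bucket, and it is this observation that lets the robustness idea of~\citep{levine2020deep} yield \emph{learnability}. The remaining work is careful bookkeeping: simultaneously forcing the per-bucket error $\eps'$, the per-bucket failure probability $\delta'$, and the ratio $b/n=\sqrt{b/m}$ all small enough that the good classifiers from the clean run form a plurality beating every other label by more than $4b$, which is what produces the nested $\max$ and the numeric constants appearing in $\MRob(\eps,\delta)$.
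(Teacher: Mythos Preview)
Your high-level plan matches the paper's: partition $\cS$ into $n\asymp\sqrt{mb}$ buckets via a content-based hash, run the base PAC learner on each bucket, output the plurality with a margin-based certificate, and use $b/n=\sqrt{b/m}=o(1)$ to argue the attack corrupts a vanishing fraction of sub-models. The certification step and the Hoeffding-over-sub-models argument are essentially the paper's. However, there is a genuine gap in your construction, and it sits precisely at the point you flag as ``the main obstacle.''

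You define $u$ as a function of the example value, $u:\cX\times\cY\to[0,1)$, and use this for robustness (``since $u$ is content-based it is oblivious to the reordering''). You then assert that ``since $u$ is independent of the example, conditioning on $u(e)$ lying in a fixed range does not change the distribution of $e$, so each bucket is an i.i.d.\ sample from $D$.'' These two sentences are in direct tension: a content-based hash is by definition a function of the example, and conditioning on its value \emph{does} change the law of $e$. Your argument happens to go through when $D$ is non-atomic (then the $e_i$ are a.s.\ distinct, the $u(e_i)$ are i.i.d.\ uniform over the randomness of $u$, and the bucket-membership event is independent of the values), but it fails when $D$ has atoms: if the same value $e$ appears several times in $\cS$, all copies land in the \emph{same} bucket, the buckets are no longer i.i.d.\ samples from $D$, and the per-bucket PAC guarantee evaporates. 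Since the theorem is distribution-free, this case must be covered. Your final paragraph correctly diagnoses that hashing $(x,y)$ makes each bucket a sample from a \emph{slice} of $D$, but the fix you announce (``a hash that is independent of the example'') contradicts the $u(e)$ you actually defined; no hash can be simultaneously content-based (needed for reorder-robustness) and value-independent (needed for your i.i.d.\ claim).

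The paper resolves this by hashing not $e$ but the pair $(e,j)$, where $j$ is the occurrence index of $e$ in the multiset $\cS$. This keeps the assignment multiset-determined (so reordering is harmless and a single replacement alters at most two hash inputs, preserving your $O(b)$-buckets-changed count), while guaranteeing that the $m$ hash inputs are \emph{pairwise distinct} for every $\cS$, so over the randomness of the hash the bucket assignments are genuinely i.i.d.\ uniform and each bucket is an i.i.d.\ subsample of $D$. Once you make this change your analysis goes through; the paper additionally handles random bucket sizes by training only on buckets with at least $m/(6t)$ examples rather than union-bounding a per-bucket Chernoff, but that is a bookkeeping choice rather than a conceptual difference.
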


Before proving Theorem~\ref{thm:Learn-Realize-Proper}, we define the notion of majority ensembles.
\begin{defi}[Majority ensemble]
A \emph{majority ensemble} model $\modelensemble$ is defined over $t$  sub-models
$\{ h_1, \dots, h_t\}$ as follows.  
\begin{equation*}
    \modelensemble(x) = \argmax_{y \in \cY} \sum_{i = 1}^t \one [h_i(x)=y].
\end{equation*}
Where $\one[E]$ is the Boolean indicator function that equals $1$ if $E$ is true. If no strict majority vote exists, then $\modelensemble(x)=\bot$ for some fixed output $\bot$.
\end{defi}

\begin{proof}[Proof of~\ref{thm:Learn-Realize-Improper}]
Similar  to the proof of Theorem~\ref{thm:Learn-Realize-Proper}, if $b(m)=O(m^{1-\Omega(1)})$, the relation between polynomial sample complexity and polynomial time aspects  of the certifying $\mathsf{Rob}$ in relation to the base learner $\Learn$ follows from Claim~\ref{clm:polyLambda}, the polynomial bound $\MRob(\eps, \delta)$, and the description of our learner $\mathsf{Rob}$ below.


Recall that $\Learn$ is a $(\eps', \delta')$ PAC learner and our goal is to show that we can obtain $(\eps,\delta)$-PAC learning under $b$-replacing targeted-poisoning attacks. We will indeed show how to achieve $(O(\eps' + \delta'),O(\eps' + \delta'))$-PAC learning under such attacks.

We first describe a learning method in which the $b$-replacing adversary is \emph{not} allowed to reorder the examples after changing $b$ of the examples in $\cS$. Our robust learner in this case is deterministic. We will then discuss how one can retain the result by handling even when the adversary can reorder the examples. Our robust learner for the latter case is randomized and uses a careful hashing method. This learner is inspired by the randomized method first introduced in \cite{levine2020deep}. In comparison, (1) we need to generalize the hashing method of \cite{levine2020deep} and carefully choose how to hash \emph{repeated} examples in the data set, and (2) we give a proof of generalization based on adversary's budget.

\paragraph{Attacks that do not reorder the examples.} 
We define the operation \textit{partition} with size $k$ as repeatedly collecting first $k$ items in the data set $\cS$ (which is defined as a sequence), that is, when partition data set $\cS=e_1, e_2, \dots, e_m$ with size $k$, the first partition $\cS_1$ will contain examples $e_1, e_2, \dots, e_k$, and the second partition $\cS_2$ will contain examples $e_{k+1}, e_{k+2}, \dots, e_{2k}$. Now, let $t=t(m)=\sqrt{b(m) \cdot m}$. $\RobLearn$ proceeds as follows.
\begin{enumerate}
    \item Partition the data set $\cS$ into $t$ subsets $\cS_1,\dots,\cS_t$ with equal size $m/t$.
    \item For each subset $\cS_i$ where $i \in [t]$, train a sub-model $h_i = \Learn(\cS_i)$.
    \item Returns $\modelensemble$ that is the majority ensemble model of $\{ h_1, \dots, h_t\}$.
\end{enumerate}

If $t=t(m)=\sqrt{m \cdot b(m)}$, $\eps'={\eps}/{12}$, $\delta'={\eps}/{12}$, and 
$p= \max\{\MPAC^2(\eps/12, \eps/12), \\ 144/\eps^2,  {-\log(\delta)}/{\left(2(\eps/12)^2\right)}\},$
we show that
$\lambda(p)$ becomes an upper bound on the sample complexity $m$ of a robust PAC learner under $b$-replacing attacks. By the definition of the function $\lambda(\cdot)$, we have  $m/b(m) \geq p$.
Therefore, we have $\sqrt{{m}/{b(m)}} \geq \MPAC(\eps/12, \eps/12)$, $\sqrt{{m}/{b(m)}} \geq 12/\eps$, and $\sqrt{{m}/{b(m)}} \geq {-\log(\delta)}/{\left(2(\eps/12)^2\right)}$. For simplicity of notation we might write $t$ and $b$ directly where both are actually functions of $m$. 

We start by showing the learner $\RobLearn$ has the following two properties:
\begin{itemize}
    \item PAC learnability of each sub-model without attack: Each set $\cS_i$ has $m/t$ examples. Therefore, eventually all the partition sets $\cS_i, i \in [t]$ will have enough examples for PAC learning. Specifically, $m/t = \sqrt{m/b} \geq \MPAC(\eps/12, \eps/12)$.
    \item Not many sub-models are under attack: An adversary who can replace $b$ examples in these $t$ sets, is indeed affecting only $t/b$ fraction of the subsets, and $t = \sqrt{b \cdot m}$, ${b}/{t} = \sqrt{{b}/{m}} \leq {\eps}/{12}$.

\end{itemize}

The above arguments show that for each sub-model $h_i$, we can guarantee $(\eps',\delta')$-PAC learning using the number of samples $\MPAC(\eps/12, \eps/12)$ that falls into the corresponding $\cS_i$. Then, we want to argue that the ensemble $\modelensemble$, which is the majority applied to $h_1,\dots,h_t$, is indeed $(O(\eps'+\delta'),O(\delta'+\eps'))$-PAC learning even under $b$-budget changing adversaries (who do not reorder the new set $\cS'$).

We will first argue about why the obtained ensemble model \emph{without attack} has small risk, and once we do it, we argue why it has small risk even under $b$-replacing attacks who do not reorder the output examples.

We start by showing that with high probability, most sub-models have small risk. One might be tempted to use the union bound and conclude that with probability $1-t\cdot \delta'$ all of $h_1,\dots,h_t$ have risk at most $\eps'$, before arguing about the low risk of their majority. But this is a lose confidence bound as $t\cdot \delta'$ can grow to be larger than one. Hence, we need a more careful analysis. In particular, we use concentration bounds to conclude that with high probability \emph{most} of the sub-models have risk at most $\eps'$. Namely, using the Hoeffding inequality, we can conclude that with probability at least $1 - e^{-2t\cdot \delta'^2}$, it holds that the fraction of $h_1,\dots,h_t$ with risk at most $\eps $ is at most $2\delta'$. When $m \geq \MRob(\eps, \delta)$, we have $t=\sqrt{m \cdot b(m)} \geq \sqrt{m/b} \geq {-\log(\delta)}/{\left(2(\eps/12)^2\right)} = {-\log(\delta)}/{\left(2\delta'^2\right)}$. As $1 - e^{-2t\cdot \delta'^2} \geq 1 - e^{-2 \cdot ({-\log(\delta)}/{2\delta'^2}) \cdot \delta'^2 } = 1 - \delta.$ In that case, we can argue about the robustness of the majority ensemble as follows.

Recall that at this stage we are assuming $1-2\delta'$ fraction of the models $h_1,\dots,h_t$ have risk at most $\eps$. We claim that if we let $\eps = 3(2\delta' + \eps')$, then with probability at least $1-\eps'$ over $e=(x,y) \sim D$, it holds that at least $2t/3$ of the sub-models $h_1,\dots,h_t$ give the right answer $y$ on instance $x$. Otherwise we can derive a contradiction as follows. Suppose more than $\eps$ fraction of the examples $e=(x,y) \sim D$  have at least $t/3$ wrong answers among $h_1,\dots,h_t$, i.e., $\Pr_{(x, y) \sim D}\left[\sum_{i=1}^t \one[h_i(x) \neq y] \geq {t}/{3}\right] > \eps$. Then, when we pick both  $i \sim [t]$, and $e=(x,y) \sim D$ at random and get $h_i(x)$ as answer, we get a wrong answer with probability more than $\eps/3$. On the other hand, this probability cannot be too large, because at most $2\delta'$ fraction of $i \sim [t]$ give a model $h_i$ with risk more than $\eps'$, and the rest have risk at most $\eps'$, and hence we should have $\eps/3 <2\delta' + \eps'$, which contradicts $\eps = 3(2\delta'+\eps')$.

Now, we argue that essentially the same bounds above hold even if an adversary goes back and changes $b$ of the examples among the all $m$ examples based on knowing a test example. The only place in the proof that we need to modify  is where we obtained $\eps/3 \leq 2\delta' + \eps'$, while now we shall allow the adversary to corrupt $b$ of the $t$ sub-models by planting wrong examples into their pool $\cS_i$. This can only corrupt $b/t$ fraction of the $t$ models, leading to the bound $\eps=3(2\delta' + 
b/t + \eps')$.

As a summary, with $\eps'={\eps}/{12}$ and $\delta'={\eps}/{12}$, when $\MRob(\eps, \delta) = \lambda(p)$ and $t = \sqrt{b \cdot m}$, the majority learner is an $(\eps, \delta)$-PAC learner to $b$-replacing attacks that do not reorder the examples, as with probability at least $1 - e^{-2t\delta'^2} \geq 1 - \delta$, the robust risk of the learner is at most 
\begin{align*}
    3\left(2\delta' + \frac{b}{t} + \eps'\right)   = 3\left(2 \cdot \frac{\eps}{12} + \sqrt{\frac{b}{m}} + \frac{\eps}{12}\right)   \leq 3\left(2 \cdot \frac{\eps}{12} + \frac{\eps}{12} + \frac{\eps}{12}\right) = \eps.
\end{align*}

\paragraph{Adding certification.} Finally, we define a certifying model $\hcert$ that returns certifications larger than $b$ with high probability. Let $$\hcert(x) = \sum_{i=1}^t \one\left\{h_i(x) = y'\right\} - \frac{t}{2}$$ where $y' = \modelensemble(x)$ and $h_1, \dots, h_t$ are sub-models in $\modelensemble$. As the sub-models $h_1, \dots, h_t$ are trained with separate data sets, for any $b' < \hcert(x)$, the prediction of $\modelensemble$ remains the same, indicates that $\hcert$ always gives correct certification. Now, from the previous analysis, we have $$\Pr_{\cS}\left[\CCor_{\Rep_b}(\cS, D) \geq 1 - \eps\right] \geq 1- \delta.$$
Therefore, $\cH$ is certifiably  PAC learnable under $\Rep_b$ attacks with the aforementioned upper bound on its sample complexity.

 
\paragraph{Attacks that might reorder the examples.} The above learner was indeed deterministic, but it leveraged on the fact that the adversary will not reorder the examples, hence most sub-models are robust to adversarial perturbations. For the full-fledged $b$-replacing adversaries, we will use randomness $r$ that (informally speaking) defines a hash function from $\cX \times \cY$ to $[t]$. The hash function can either be a random oracle, or an $m$-wise independent function (for sake of a polynomial-time learner). We then partition the training set $\cS$ into $t$ subsets by using the hash function that looks at \emph{individual} examples to determine where they land among the $t$ subsets $\cS_1,\dots,\cS_t$.

Because we did not make any assumptions about distribution $D$, the training set $\cS$ could have multiple instances of the same input if $D$ is concentrated on some examples. If we simply pick a hash function $h$ to map $\cX \times \cY$ to $[t]$, it might make the subsets unbalanced and thus lose the i.i.d. property of the distributions generating subsets $\cS_i$.

We then slightly revise the rule to evenly distributed these examples as follows.
For an example $e_i = (x_i, y_i)$ in the training set $\cS$, let $O_i$ be the number of occurrence of the same example $(x_i, y_i)$ in $\cS$ ($0$ if it's the first occurrence). We then use a hash function family $h_K: \cX \times \cY \times [m] \rightarrow [t]$, where $K$ is a  key generated by $r$. The $j$-th occurrence of $e_i$ is then mapped into the partition $t_i$ where $t_i = h_K(e_i, j)$.

Following our assumption of the hash function being independently random on all elements in $\cS$, each partition $\cS_i$ is now an i.i.d. sample of the same distribution. It is because each example in $\cS_i$ is independently and identically sampled from $\cS$, which is an i.i.d. sample of $D$. Therefore, with enough number of examples in $\cS_i$, by the PAC learnablity of $\cH$, each sub-model $h_i$ will be a PAC learner. However, for a pair of $(\eps, \delta)$, it is not guaranteed that $\cS_i$ has enough number of examples for $(\eps, \delta)$-PAC learning, because we are using a probabilistic hashing. If some of the sub-models do not have enough examples in their pool $\cS_i$, it is then hard to show the majority ensemble model is a good model with error less than $\eps$. To handle this problem, we only train sub-models on the partitions with enough number of examples. 

We pick $t = 4\sqrt{b(m) \cdot m}$ be the number of subsets. $\RobLearn$ proceeds as follows.

\begin{enumerate}
    \item For the $j$-th occurrence of the example $e_i \in \cS$, add it into partition $\cS_{t_i}$ where $t_i = h_K(e_i, j)$.
    \item For each subset $\cS_i$ that $|\cS_i| \geq {m}/{6t}$ where $i \in [t]$, train a sub-model $ \Learn(\cS_i)$. 
    \item Denote all the sub-models trained in Step 2 as $h_1, h_2, \dots, h_{t'}$.
    \item Return $\modelensemble$, the majority ensemble model of $\{ h_1, \dots, h_{t'}\}$.
\end{enumerate}

Here, the majority ensemble model will have $t'$ (instead of $t$) sub-models, and $t' \leq t$. We now show that when $p'=\max\left\{\MPAC^2\left({\eps}/{12}, {\eps}/{12}\right), {1}/{4\eps^2}, \log(\delta/2)^2/((2\sqrt{3}\eps/3)^4), \log_2(2/\delta)/576\right\}$ with the sample complexity bounded by  $m \geq \MRob(\eps, \delta) = 576 \lambda(p')$, $\RobLearn$ is robust to $b$-replacing attacks that can reorder the examples. 
    

First, we prove that the majority of the partitions $\cS_i$ will have enough samples, specifically, at least $t' \geq {t}/{4}$ sub-models will have ${m}/{6t}$ examples with high probability.

To analyze the probability of $t' \geq {t}/{4}$, we first consider a simple bucket and ball setting. Consider there are $2t$ examples (balls) and we partition them into $t$ subsets (buckets). Then the probability that at least $t/2$ buckets are not empty is at least  $$1 - \binom{t}{t/2}\left(\frac{1}{2}\right)^{2t} = 1 - \binom{t}{t/2}\left(\frac{1}{2^{t}}\right) \cdot \left(\frac{1}{2^t}\right) \geq 1 - \frac{1}{2^t}.$$ It is because if there are $t/2$ empty buckets, then all $2t$ balls should be in the other $t/2$ buckets. The probability is then calculated by taking a union bound over all $\binom{t}{t/2}$ choices of $t/2$ empty buckets in $t$ buckets.

Now, we have $m$ examples in total. We then consider $m$ examples as $m/2t$ rounds of $2t$ examples. Then for each round, at least $t/2$ subsets have at least one example with probability at least $1 - {1}/{2^t}$. Clearly, applying the union bound over all the rounds of examples gives the result that with probability $1 - {m}/{(2t \cdot 2^t)}$, every round makes at least $t/2$ buckets non-empty. Then, by a simple counting argument, at the end at least $t/4$ buckets will have at least $m/6t$ examples. (Otherwise, the total number of examples would be fewer than $(t/2)(m/3t)$.)




We now prove some properties  for $\RobLearn$. Let $\eps' = \delta' = \frac{\eps}{12}$, when $m = \lambda(p')$. Then, we have
\begin{itemize}
    \item Not many sub-models are under attack: An adversary who can corrupt $b$ of these $t/4$ sets, is indeed corrupting only $4b/t$ fraction of them. We then have $4b/t = \sqrt{b/m} \leq \delta'$.
    \item PAC learnability of each sub-model without attack: The sub-model that has $m/6t$ examples have enough examples for PAC learning. $m/6t = \sqrt{m/b}/24 \geq \MPAC^2(\eps/12, \eps/12)$.
    \item Enough examples: With probability $ 1 - {m}/{(2t \cdot 2^t)}$, at least $t/4$ subsets have at least $m/6t$ examples. We have ${m}/{(2t \cdot 2^t)} < {1}/{2^{(\log_2(2/\delta))}} = \delta/2$
    \item  Most sub-models have low risk: By Hoeffding's inequality, with probability at least $1 - e^{-2t\cdot \delta'^2}$, it holds that the fraction of $h_1,\dots,h_{t'}$ with risk at most $\eps'$ is at most $2\delta'$. When $m \geq \MRob(\eps, \delta)$, we have $t' \geq t/4 \geq \sqrt{m/b}/4 \geq {-\log(\delta)}/{\left(8\delta'^2\right)}$
\end{itemize}

In summary, we show that with probability at least $1 - \delta/2$ , we have at least $t/4 = \sqrt{m \cdot b(m)}$ subsets, each subset has at least $\MPAC(\eps/12, \eps/12)$ examples, and we train an majority ensemble model on it. We then follow the same analysis from the case that the attacks can not reorder the examples. Therefore, with probability at least $1-\delta/2$, $\RobLearn$ is a $(\eps, \delta/2)$-PAC learner under $b$-replacing attacks. By the union bound, $\RobLearn$ is a $(\eps, \delta/2)$-PAC learner under $b$-replacing attacks.

As a summary, ensemble learner $\RobLearn$ achieves a bound similar to the sample complexity bound of the non-reordering attacks.  When $\MRob(\eps, \delta) =  576\lambda\left(p'\right)$, the majority learner is robust to $b$-replacing attacks that can also reorder the examples.

Finally, when $m \geq 576\lambda\left(p'\right)$, certifying model $\hcert(\modelensemble, x) = \sum_{i=1}^{t'} \one\{h_i(x) = y'\} - t'/2$ gets $$\Pr_{\cS}\left[\CCor_{\Rep_b}(\cS, D) \geq 1 - \eps\right] \geq 1- \delta$$ over data set $\cS$. Therefore, $\cH$ is certifiably PAC learnable under $\Rep_b$ attack.
\end{proof}
 
\paragraph{Extension to $\Add\Rem_b$ attacks.} The proofs of Theorems \ref{thm:Learn-Realize-Proper} and \ref{thm:Learn-Realize-Improper}     extend to $\Add\Rem_b$ attacks as well when $b=o(m)$. This is because, at a high level, all we care about is that adversarial ``changes'' (whether they are addition or removal of examples) either do not hit the sub-sampled dataset (in Theorem \ref{thm:Learn-Realize-Proper}) or hit few of the sub-samples (in Theorem \ref{thm:Learn-Realize-Improper}).

We then show that limiting adversary's budget to $b(m)=o(m)$ is essentially necessary for obtaining positive results in the distribution-independent PAC learning setting, as some  hypothesis classes with finite-VC dimension are not learnable under targeted poisoning attacks  when $b(m)=\Omega(m)$ in a very strong sense: any PAC learner (without attack) would end up having essentially a risk arbitrary close to $1$ under attack for any $b(m)=\Omega(m)$ budget given to a $b$-replacing adversary.

We use homogeneous halfspace classifiers, defined in Definition~\ref{def:halfspace} below, as an example of hypothesis classes with finite VC dimension. Then in Theorem~\ref{thm:halfspace-example-realizable}, we show that the hypothesis class of halfspaces are not distribution-independently robust learnable against $\Omega(m)$-label flipping instance-targeted attacks.
\begin{defi}[Homogeneous halfspace classifiers]     \label{def:halfspace}
    A (homogeneous) halfspace classifier $h_\omega: \R^d \rightarrow \{0, 1\}$ is defined as  $h_\omega(x) = \Sign(\omega \cdot x)$, where $\omega$ is a $d$-dimensional vector. We then call $\cH_{\mathsf{half}}$ the class of halfspace classifiers $\cH_{\mathsf{half}} = \{ h_\omega(x) \colon \omega \in \R^d \}$. 
For simplicity, we may use $\omega$ to refer to both the model parameter and the classifier. 
\end{defi}

\begin{theorem}[Limits of distribution-independent learnability of halfspaces] \label{thm:halfspace-example-realizable}
Consider the halfspaces hypothesis set $\cH=\cH_{\mathsf{half}}$ and we aim to learn any distribution over the unit sphere using $\cH$. Let the adversary class be $b$-replacing with $b(m)=\beta \cdot m$ for any (even very small) constant $\beta$. For any (even improper) learner $\Learn$ one of the following two conditions holds.
Either $\Learn$ is \emph{not} a PAC learner for the hypothesis class of half spaces (even without attacks) or there exists a distribution $D$ such that    $\Risk_{\Flp_b}( \cS, D) \geq 1 -  \sqrt{\sigma}$ with probability $1 - \sqrt{\sigma}$ over the selection of $\cS$ of sufficiently large $m \geq \MPAC(\beta\cdot \sigma/6 , \sigma/2)$, where $\MPAC$ is the sample complexity of PAC learner $\Learn$.
\end{theorem}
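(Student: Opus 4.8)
The plan is to build a hard distribution $D$ supported on a small set of atoms and to reduce attacking any single atom to forcing the given PAC learner onto a slightly different \emph{realizable} distribution, after which a counting argument over the atoms finishes. Assume $\Learn$ is a PAC learner for $\cH_{\mathsf{half}}$ (otherwise the first alternative in the statement holds and we are done), with sample complexity $\MPAC(\cdot,\cdot)$; set $\eps' = \beta\sigma/6$ and $\delta' = \sigma/2$, and choose $N = \lceil 3/(\beta\sigma)\rceil$, so that $\eps' < 1/N$. Work in $\R^d$ with $d$ large enough (it suffices that $d \ge N$, where $N$ depends only on $\beta,\sigma$) and take the atoms to be the standard basis vectors $x_1 = e_1,\dots,x_N = e_N$, which lie on the unit sphere and are shattered by homogeneous halfspaces (the labeling $(s_1,\dots,s_N)$ is realized by $\omega = (s_1,\dots,s_N)$). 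Let $D_\cX$ be uniform on $\{x_1,\dots,x_N\}$ and fix the labeling $y_j = h_\omega(x_j)$ for some $\omega$ with all coordinates nonzero, so that $D$ is realized by $h_\omega \in \cH_{\mathsf{half}}$. For each $i$, let $D_i'$ be the distribution with the same marginal $D_\cX$ but with the label of $x_i$ flipped to $1-y_i$; then $D_i'$ is realized by the halfspace whose weight vector agrees with $\omega$ in every coordinate except the $i$-th.

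The crux is the following per-target reduction. Fix a target atom $x_i$ and a sample $\cS \sim D^m$ in which $x_i$ occurs $n_i$ times with $1 \le n_i \le \beta m$. The $\Flp_b$-adversary flips the labels of exactly these $n_i$ copies, producing a legal perturbation $\cS_i' \in \Flp_b(\cS)$. Since this is the coordinate-wise image of $\cS$ under the measure-preserving relabeling that sends $(x_i,y_i)$ to $(x_i,1-y_i)$ and fixes every other atom, the random sequence $\cS_i'$ is distributed exactly as $(D_i')^m$. Hence, by the PAC guarantee of $\Learn$ applied to the realizable distribution $D_i'$, for every $m \ge \MPAC(\eps',\delta')$ we have $\Pr_{\cS,r}[\Risk(\Learn_r(\cS_i'), D_i') > \eps'] \le \delta'$; and whenever $\Risk(\Learn_r(\cS_i'), D_i') \le \eps' < 1/N = D_i'(\{x_i\})$, the returned hypothesis must be correct at $x_i$ with respect to $D_i'$, i.e.\ $\Learn_r(\cS_i')(x_i) = 1 - y_i \ne y_i$. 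In that case $x_i$ is vulnerable: $\loss_{\Flp_b}(\cS,r,(x_i,y_i)) = 1$.

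It remains to count. Call atom $i$ \emph{heavy} if $n_i > \beta m$; since $\sum_i n_i = m$ there are deterministically fewer than $1/\beta$ heavy atoms, which is a small fraction of $N$ by the choice of $N$. Call atom $i$ \emph{failed} (for given $\cS,r$) if $\Risk(\Learn_r(\cS_i'),D_i') > \eps'$; by the displayed bound $\Ex_{\cS,r}[\#\{\text{failed atoms}\}] \le N\delta'$, so Markov's inequality gives that with probability at least $1-\sqrt\sigma$ over $(\cS,r)$ the number of failed atoms is at most a small multiple of $\sqrt\sigma\,N$ — and with $\delta' = \sigma/2$ the constants are arranged so that (heavy atoms) $+$ (failed atoms) $+$ (the negligibly many atoms missing from $\cS$, for $m$ large, which is implied by the stated threshold on $m$) stays below $\sqrt\sigma\,N$. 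On that event every atom that is non-heavy, non-failed and occurring in $\cS$ is vulnerable, so at least a $(1-\sqrt\sigma)$-fraction of the $N$ atoms are vulnerable; since $D$ is uniform on the atoms this says precisely $\Risk_{\Flp_b}(\cS,r,D) \ge 1-\sqrt\sigma$, holding with probability at least $1-\sqrt\sigma$ over the draw of $\cS$ (and of $r$), as required; the threshold $m \ge \MPAC(\beta\sigma/6,\sigma/2)$ is exactly what makes the PAC step valid with $\eps'=\beta\sigma/6$, $\delta'=\sigma/2$.

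The main obstacle is the interlocking of three budgets: the adversarial budget $\beta m$ forces $N$ to be at least $\approx 1/\beta$; making each atom's mass $1/N$ lie below $\eps'$ then forces $\eps'$ to be at most $\approx \beta$; and the Markov step over the $N$ targets requires $\delta'/\sqrt\sigma$ to be small — these must be balanced so that a comfortable $(1-\sqrt\sigma)$ remains at the end, which is what pins down the constants $\beta\sigma/6$ and $\sigma/2$. Two smaller points to get right are: the adversary here is deterministic and target-dependent, which is consistent with the model since ``flip all copies of $x_i$'' is a fixed function of $(\cS,x_i)$ and $\loss_{\Flp_b}$ is already a supremum over perturbations; and, crucially, the PAC failure probability $\delta'$ is \emph{per target}, so one cannot afford a union bound over all $N$ targets — the expectation/Markov route is precisely what lets us conclude that a \emph{single} typical sample $\cS$ is simultaneously bad for almost all targets.
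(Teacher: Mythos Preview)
Your high-level strategy matches the paper's argument for the improper case: couple the poisoned sample to an i.i.d.\ sample from a \emph{different} realizable distribution, invoke the PAC guarantee there to force a wrong prediction at the target, and then aggregate over targets via an expectation/Markov step rather than a union bound. The paper arrives at the same scheme after first treating ERM and proper learners; for improper learners it randomizes the perturbed region and uses a change-of-order coupling that serves the same purpose as your Markov step.

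There is, however, a genuine gap in your construction. You place $N=\lceil 3/(\beta\sigma)\rceil$ atoms at standard basis vectors and need $d\ge N$ so that homogeneous halfspaces shatter them. But $\cH_{\mathsf{half}}$ lives in a \emph{fixed} ambient dimension $d$, and the learner $\Learn$ (together with its sample-complexity function $\MPAC$) is fixed before $\sigma$ is chosen; the theorem then asserts the conclusion for every $\sigma>0$. Your argument only goes through when $\sigma\gtrsim 1/(\beta d)$; for smaller $\sigma$ you cannot fit $N$ shattered points in $\R^d$ and the construction collapses. The paper sidesteps this by working in fixed dimension $d=3$: the hard distribution $D$ is uniform on two parallel unit circles at $z=\pm1$ with labels $\pm1$, and for a target $\alpha$ the adversary flips labels on a short arc of measure $\Theta(\beta)$ around $\alpha$ together with its antipodal arc. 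The resulting labeling is still realized by a tilted homogeneous halfspace, so the poisoned sample is i.i.d.\ from a realizable distribution. Because the arc length is governed by $\beta$ alone (not by $\sigma$), and because, via the coupling, the target is uniform inside the arc, the PAC step with $\eps_1=\beta\sigma/6$ yields a $1-\sigma$ per-target success rate for every $\sigma$, all in $d=3$.

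Two small side remarks. Your worry about atoms ``missing from $\cS$'' is unnecessary: the relabeling map that flips only the $i$-th atom's label pushes $D^m$ to $(D_i')^m$ exactly for every value of $n_i$, including $n_i=0$ (zero flips is a valid $\Flp_b$ attack), so only heavy and failed atoms need to be counted. And your observation that the $\delta'$ failure probability is \emph{per target} and must therefore be handled by Markov rather than a union bound is exactly right; it is precisely the device that makes the improper case work in the paper as well.
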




\begin{proof} [Proof of Theorem~\ref{thm:halfspace-example-realizable}]
To prove the theorem, we select a distribution $D$ and an $\Omega(m)$-label flipping adversary, that for any PAC learner $\Learn$, the targeted poisoning risk is high. We first prove the theorem for the ERM rule, and then we discuss how it extends to any PAC learner.

Our scenario is in dimension $d=3$ with dimensions $X,Y,Z$. Consider the following distribution $D$: For $e = (\alpha, c) \sim D$ where $\alpha$ is a point in the 3-dimensional space and $c$ is a label in $\{+1, -1\}$, with probability $1/2$ we sample $\alpha$ uniformly from the unit circle with $z=1$ (namely $x^2+y^2=1, z=1$) and we let label $c = +1$ of the sampled point $\alpha$. In addition, with probability $1/2$ we sample $\alpha$ uniformly from the unit circle $x^2+y^2=1, z=-1$ and let label $c = -1$. This distribution is realizable over the halfspaces hypothesis set, as halfspace $\omega = (0, 0, 1)$ has $0$ risk on $D$. In the following analysis, we call an arc of one of the circles as an interval $\cI$. We then define the measure of the interval $\cI$ as the probability that a random example $\beta \gets D$ that falls into the interval. Clearly in our setting, an interval $\cI$ can be uniquely determined by fixing its measure $\beta$ and its center point $\alpha'$. This scenario is shown in Figure~\ref{fig:counter_example}.

\begin{figure}
    \centering
    \includegraphics[width=0.3\textwidth]{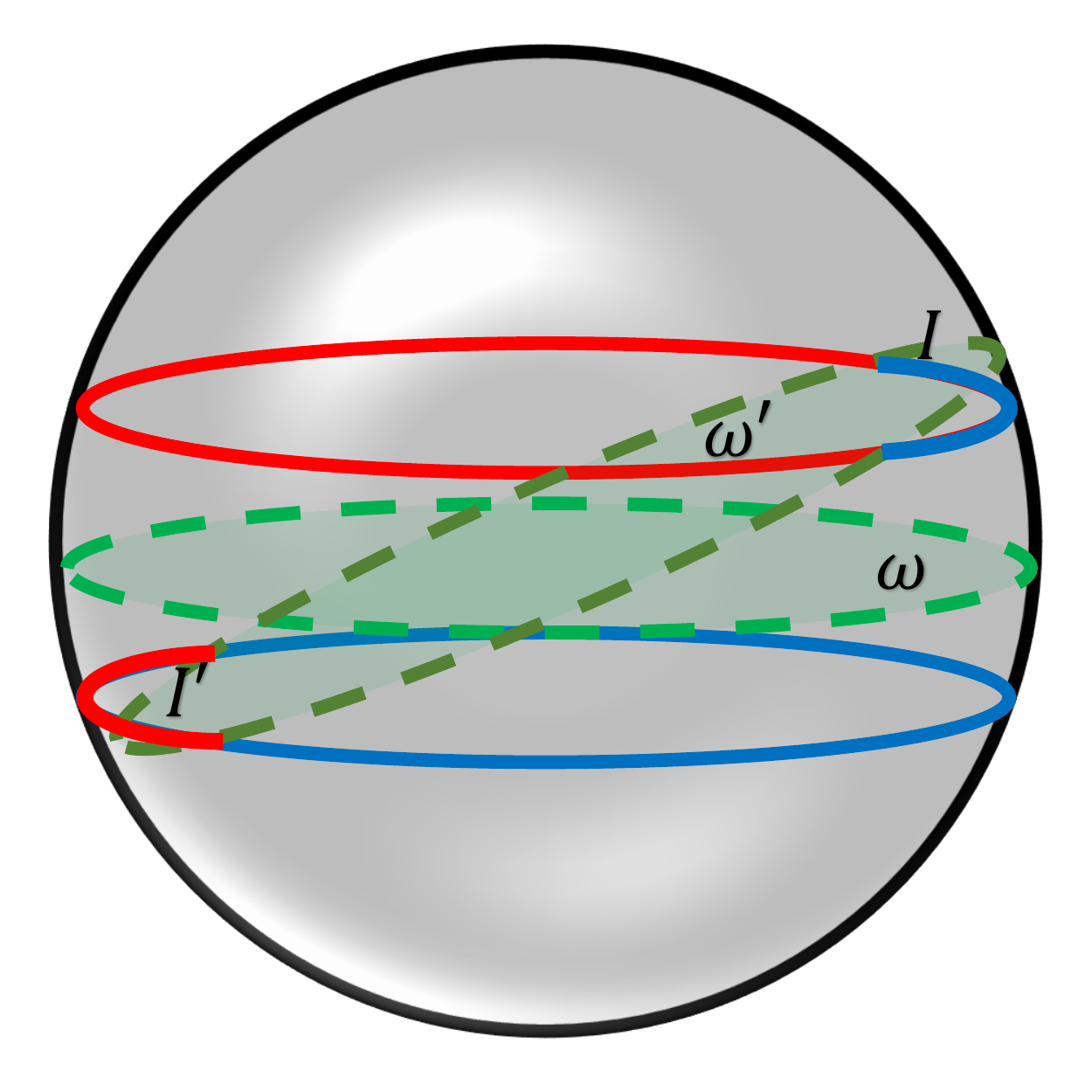}
    \caption{Example for proving Theorem~\ref{thm:halfspace-example-realizable}. The red circle has label $1$, and the blue circle has label $-1$. $\omega$ is the ground-truth halfspace with $0$ risk, and $\omega'$ is the halfspace that has $0$ risk after adversary make replacements.}
    \label{fig:counter_example}
\end{figure}

Now, assume the adversarial perturbation $\cS' = \Flp_b(\cS)$ (that depends on $e = (\alpha, c)$) wants to fool the learner on the point $\alpha=(x,y,z)$. We now define the adversary $\Adv_b(\cS)$, that with a data set $\cS \sim D^m$ and target point $\alpha$, the adversary operates as the following.

\begin{itemize}
    \item Pick an interval $\cI$ of constant measure $\beta/3$ which is centered at $\alpha \in \cI$  \emph{in the same circle where $\alpha$ belongs}. 
    \item To make the attack realizable, pick another corresponding interval, where $\cI' = \set{\alpha' \mid -\alpha' \in \cI}$.
    \item For all $(\alpha_i, c_i) \in \cS$, flip the label if $\alpha_i \in (\cI \cup \cI')$. Return the new set as $\cS'$.
\end{itemize}

In total $\cI$ and $\cI'$ has probability measure ${2\beta}/{3}$. Each example in $\cS$ has probability ${2\beta}/{3}$ to fall into $\cI \cup \cI' $. Then by the Hoeffding's inequality, \begin{equation*}
    \Pr\left[|\cS \cap \cS' | \leq  (1 - \beta) \cdot m \right] \geq 1 - e^{-\frac{m}{18}}.
\end{equation*}

That is, with high probability $\Adv_b(\cS)$ will modify less than $b(m) = \beta \cdot m$ examples. We then analyze how this adversary fools the learners.


\paragraph{ERM learner.} 

We start from the case that the learner is the ERM learner. As $\cI$ and $\cI'$ are symmetric to the origin $(0, 0, 0)$, there exists a halfspace $\omega' \in \cH_{\mathsf{half}}$ that passes all the endpoints of arcs $\cI$ and $\cI'$, which then has $0$ empirical risk on $\cS'$. With probability at least $1 - 2(1 - \beta/6)^m \approx 1$, $\cS$ contains two examples from $\cI$ that positioned  at either side around $\alpha$, that $\omega$ (and all other hypothesis that correctly predicts $\alpha$) will have non-zero risk on $\cS'$. Therefore, ERM will return a hypothesis that incorrectly predicts $\alpha$.


\paragraph{Extension to any proper PAC learner.} 

We now prove that the same adversary can fool any proper PAC learner with sufficiently large $m$.
Let $D'$ be the ``poisoned'' distribution, that is, for $(\alpha_1, c_1) \in \textsf{supp}(D')$ and $(\alpha_1, c_2) \in \textsf{supp}(D)$. 
$c_1 = \begin{cases}
-c_2  &\alpha' \in \cI \cup \cI' \\
c_2 &\textsf{Otherwise}
\end{cases}$. Then for $\cS' = \Adv_b(\cS)$, when $\cS \sim D^m$, $\cS' \sim (D')^m$.

Now, let $ \MPAC(\eps_1, \delta_1)$ be the sample complexity of $\cH_{\textsf{half}}$ on $D'$. When $m \geq \MPAC(\eps_1, \delta_1)$, on the distribution $D'$, $\Learn(\cS')$ holds $\Risk(\Learn(\cS'), D') \leq \eps_1$ with probability at least $1 - \delta_1$. 

Let $\eps_1 = {\beta}/{4}$.
Because hypothesis set $\cH$ are halfspaces, the prediction region (the subset of all the examples predicted for a specific label) is also a connected interval. Therefore, if $\Learn(\cS')$ incorrectly predicts $\alpha$ on $\cS'$ (which is, correctly predicts $\alpha$ on the original data set $\cS$),  as $\alpha$ is at the center of $\cI$, at least half of $\cI$ (and $\cI'$ because of symmetry) is incorrectly predicted, i.e., $\Risk(\Learn(\cS'), D') \geq {\beta}/{3}$. This contradicts $\Risk(\Learn(\cS'), D') \leq \eps_1 = {\beta}/{4}$. 
Therefore, for the selected values of $\eps_1$ and  $\delta_1$, with a sufficiently large sample complexity $m \geq \MPAC({\beta}/{4}, \delta_1)$, the probability of $\alpha$ being misclassified becomes at least $1 - \delta_1$, which indicates the adversary succeeds with probability at least $1 - \delta_1$. By averaging, with probability at least $1 - \sqrt{\delta_1}$, we have $\Risk_{\Flp_b}(\cS, D) \geq 1 - \sqrt{\delta_1}$.

\paragraph{Extension to any improper PAC learner.} 

Previous method cannot be directly applied to improper PAC learners as we no longer have at least half of $\cI$ is incorrectly predicted if $\alpha$ is incorrectly predicted. We now slightly revise $\Adv_b(\cS)$ to fool improper PAC learners as well.

To fool an arbitrary improper PAC learner, the adversary will \emph{randomize} the interval $\cI$. 
The revised adversary $\Adv'_b(\cS, \alpha)$ works as the following. 
\begin{itemize}
    \item Compute the interval $\cI_0$ which is centered at $\alpha$ with measure $\beta/3$.
    \item Uniformly pick a random point $\alpha_r$ from $\cI_0$.
    \item Pick the intervals $\cI$ symmetrically around $\alpha_r$ with measure $\beta/3$, and let $\cI' = \{\beta | -\beta\in \cI\}$.
\end{itemize}

We have $\cS' = \Adv'_b(\cS)$ where $\cS \sim D^m$. Now, let $D'_{\cI}$ be the data distribution where the labels of the examples in $\cI$ and $\cI'$ are flipped, we have $\cS' \gets {D'_{\cI}}^m$ as one can view the poisoned data set $\cS'$ as an i.i.d. sample from the poisoned distribution $D'_{\cI}$, which is conditioned on $\cI$ and $\cI'$. $\cI$ and $\cI'$, on the other hand, is conditioned on the poisoning target $\alpha$.


Now, consider a different process that generates the variables in a different order, that the adversary first uniformly picks a interval $\cI$ among all the interval with measure $\beta/3$ (and its counterpart $\cI'$), and then uniformly samples an example $\alpha$ inside $\cI$ and $\cI'$. Because the sampling is uniform, the probability of picking a specific combination of $\cI$, $\cI'$ and $\alpha$ in the second process is equivalent to the probability of picking this combination following the original process, i.e., pick a random $\alpha$, and then pick $\cI$ conditioned on $\alpha$. Because this equivalence, if $\alpha$ is picked \textit{after} the learner returns a model learned from the data set $\cS'$ (since it is sampled from $D'_{\cI'}$), the probability of whether $\Learn(\cS')$ incorrectly predicts $\alpha$ remains the same.

We now prove that when $m$ is sufficiently large, attacks succeed with high probability on improper PAC learners. Let $ \MPAC(\eps_1, \delta_1)$ be the sample complexity of $\cH_{\textsf{half}}$ on $D'_{\cI}$. When $m \geq \MPAC(\eps_1, \delta_1)$, on the distribution $D'$, $\Learn(\cS')$ holds $\Risk(\Learn(\cS'), D'_{\cI}) \leq \eps_1$ with probability at least $1 - \delta_1$. Since we can equivalently assume $\alpha$ is sampled after $\Learn(\cS')$ is done, the probability of $\Learn(\cS')$ correctly predicts $\alpha$ on $D'_{\cI}$ (which is, incorrectly predicts $\alpha$ on $D$) is at least $1 - \epsilon_1/ (\beta / 3)$. Let $\epsilon_1 = \sigma \cdot \beta/6$ and $\delta_1 = \sigma/2$.


Therefore, for the selected values of $\eps_1$ and  $\delta_1$, with $m \geq \MPAC(\eps_1, \delta_1)$, the probability of $\alpha$ being misclassified becomes at least $1 - \epsilon_1/(\beta / 3) - \delta_1 = 1 - \sigma/2 - \sigma/2 = 1 - \sigma$. By averaging, with probability at least $1 - \sqrt{\sigma}$, we have $\Risk_{\Flp_b}(\cS, D) \geq 1 - \sqrt{\sigma}$.
\end{proof}

\begin{remark}[On $(\eps,\delta)$-PAC learning with $\eps=\Omega(1)$] Theorem~\ref{thm:halfspace-example-realizable} shows that if adversary's budget scales linearly with the sample complexity $m$, then one cannot get $(\eps,\delta)$ PAC learners that are robust against instance-targeted poisoning attacks and that $\eps,\delta=o_m(1)$. However, one can also ask what is the minimum achievable error $\eps(m)$, perhaps as a function of adversary's budget $b(m)$, even when $b(m)=\Omega(m)$. For example, what would be the optimal learning error, if adversary corrupts 1\% of the examples. The same proof of Theorem~\ref{thm:halfspace-example-realizable} shows that in this case, any learner that is robust to instance-targeted $\Rep_b$ attacks would need to have $\eps(m) = \Omega(b(m)/m)$. The reason is that if $\eps(m) = o(b(m)/m)$, then one can still choose $\sigma_m=o_m(1)$, while $\eps(m)=(b(m)/m)\cdot \sigma(m)/6 ,\delta(m)= \sigma(m)/2$ are both $o_m(1)$ as well.

\end{remark}

Note that it was already proved by \cite{bshouty2002pac} that, if the adversary can corrupt $b=\Omega(m)$ of the examples, even with \emph{non-targeted} adversary, robust PAC learning is  impossible. However, in that case, there is a learning algorithm with error $O(b/m)$. So if, e.g., $b=m/1000$, then non-targeted learning is possible for practical purposes. On the other hand, Theorem~\ref{thm:halfspace-example-realizable} shows that any PAC learning algorithm in the \emph{no attack} setting, would have essentially risk $1$ under \emph{targeted} poisoning.


\begin{remark}[Other loss functions] Most of our initial results in this work are proved for the 0-1 loss as the default for classification. Yet, the written proof of Theorem~\ref{thm:Learn-Realize-Proper} holds for any loss function. Theorem~\ref{thm:Learn-Realize-Improper} can also likely be extended to other “natural” losses, but using a more complicated ``decision combiner'' than the majority. In particular, the learner can now output a label for which ``most'' sub-models will have ``small" risk (parameters most/small shall be chosen carefully). The existence of such a label can probably be proved by a similar argument to the written proof of the 0-1 loss. However, this operation is not poly time.

\end{remark}

\subsection{Distribution-specific learning}
Our previous results are for distribution-independent learning. This  still leaves open to study distribution-specific learning. That is, when the input distribution is fixed, one might able to prove stronger results. 



We then study the learnability of halfspaces under instance-targeted poisoning on \textit{the uniform distribution over the unit sphere}. Note that one can map all the examples in the $d$-dimensional space to the surface of the unit sphere, and their relative position to a homogeneous halfspace remains the same. Hence, one can limit both $\omega$ and instance $x \in  \R^d \sm {0^d}$ to be unit vectors in $\sphere{d-1}$. Therefore, distributions $D_\cX$ on the unit sphere surface can represent any distribution in the $d$-dimensional space. For example, a $d$-dimensional isotropic Gaussian distribution can be equivalently mapped to the uniform distribution over the unit sphere as far as classification with homogeneous halfspaces is concerned. We note that when the attack is \emph{non-targeted}, it was already shown by \cite{bshouty2002pac} that whenever $b(m)=o(m)$, then robust PAC learning is possible (if it is possible in the no-attack setting). Therefore, our results below can be seen as extending the results of \citep{bshouty2002pac} to the \emph{instance-targeted} poisoning attacks.

\begin{theorem}[Learnability of halfspaces under the uniform distribution] \label{thm:halfspace-positive}
In the realizable setting, let $D$ be uniform on the $d$ dimensional unit sphere $\sphere{d-1}$ and let adversary's budget for $\Rep_{b(m)}$ be $b(m)={cm}/{\sqrt{d}}$. Then for the halfspace hypothesis set $\cH_{\mathsf{half}}$, there exists a deterministic proper certifying learner $\RobCertLearn$ such that the following 
\begin{equation*}
    \Pr_{\cS \gets D^{m}}\left[\CCor_{\Rep_{b(m)}}(\cS, D) \geq 1 - 2\sqrt{2\pi} \cdot c - \sqrt{2\pi d} \cdot \eps \right]
\end{equation*} 
is at least $ 1 - \delta $ for sufficiently large sample complexity $m \geq \MUC(\eps, \delta)$, where $\MUC$ is the sample complexity of uniform convergence on $\cH_{\mathsf{half}}$. So the problem is properly and certifiably PAC learnable under $b$-replacing instance-targeted poisoning attacks.

\end{theorem}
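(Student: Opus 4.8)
The plan is to let the predictor be an empirical risk minimizer and to attach to it a certificate obtained by counting how many training examples an adversary must corrupt before \emph{any} sufficiently‑consistent halfspace can put $x$ on the wrong side of its boundary. Concretely, given the (clean, hence $\omega^{\ast}$‑realizable) set $\cS$, let $\RobCertLearn(\cS)$ be any halfspace $\hat\omega$ with zero empirical error on $\cS$, picked by a fixed deterministic tie‑break. On a query $x$ output the prediction $y'=\operatorname{Sign}(\hat\omega\cdot x)$ together with the certificate $\ceil{b'(x)/2}$, where
\[
b'(x)=\min\bigl\{\,|\{(z,\ell)\in\cS:\operatorname{Sign}(\omega'\cdot z)\neq \ell\}| \;:\; \omega'\in\R^{d},\ \operatorname{Sign}(\omega'\cdot x)\neq y'\,\bigr\}
\]
is the fewest present mislabellings any decision‑flipping halfspace is forced to make on $\cS$ (finite, and computable for halfspaces). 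This learner is deterministic and proper.

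First I would verify soundness of the certificate for realizable $\cS$. Suppose the adversary makes $k\le\ceil{b'(x)/2}-1$ replacements, producing $\cS'$. Then $\omega^{\ast}$ still has empirical error $\le k$ on $\cS'$ (it agrees with every unchanged, truly‑labelled point), so the re‑trained ERM has empirical error $\le k$ on $\cS'$; but any $\omega'$ with $\operatorname{Sign}(\omega'\cdot x)\neq y'$ disagrees with $\ge b'(x)$ points of $\cS$, hence with $\ge b'(x)-k>k$ of the \emph{unchanged} points (using $b'(x)>2k$), so it cannot be the new ERM. Therefore $\RobCertLearn(\cS')$ still predicts $y'$ on $x$, which is exactly the requirement of Definition~\ref{def:TarCer}; the division by $2$ in the certificate is precisely what makes this argument close.

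Next I would bound the certified correctness on a typical $\cS$. Fix $m\ge\MUC(\eps,\delta)$ and condition on $\cS$ being $\eps$‑representative — for $\cH_{\mathsf{half}}$, and (at the cost of a constant factor in $m$, subsumed by $\MUC$) for the $O(d)$‑VC class of pairwise symmetric differences of halfspaces — an event of probability $\ge 1-\delta$. Then $\Risk(h_{\hat\omega},D)\le\eps$, and since the uniform measure on $\sphere{d-1}$ satisfies $\mu(h_{\omega}\triangle h_{\omega'})=\angle(\omega,\omega')/\pi$, we get $\angle(\hat\omega,\omega^{\ast})\le\pi\eps$. Set $\gamma:=\pi\,(2c/\sqrt d+\eps)$ and call $x$ \emph{good} if $|\omega^{\ast}\cdot x|\ge\gamma$ and $\operatorname{Sign}(\hat\omega\cdot x)=\operatorname{Sign}(\omega^{\ast}\cdot x)$. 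For good $x$ the prediction is correct; moreover any flipping $\omega'$ satisfies $(\omega^{\ast}-\omega')\cdot x\ge\gamma$, hence $\|\omega^{\ast}-\omega'\|\ge\gamma$, hence $\angle(\omega^{\ast},\omega')\ge\gamma$, so by uniform convergence $\omega'$ mislabels at least $m(\angle(\omega^{\ast},\omega')/\pi-\eps)\ge m(\gamma/\pi-\eps)=2cm/\sqrt d$ of the examples in $\cS$; thus $b'(x)\ge 2cm/\sqrt d$ and the certificate is $\ge cm/\sqrt d=b(m)$. So every good $x$ is counted in $\CCor_{\Rep_{b(m)}}(\cS,D)$. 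Finally I would bound the measure of non‑good $x$: by a union bound it is at most $\Pr_{x}[|\omega^{\ast}\cdot x|<\gamma]+\mu(h_{\hat\omega}\triangle h_{\omega^{\ast}})$, the second term being $\le\eps$; and since one coordinate of a uniform point on $\sphere{d-1}$ has density $\le\sqrt{d/(2\pi)}$ (measure concentration on the sphere), the first is $\le 2\gamma\sqrt{d/(2\pi)}=\gamma\sqrt{2d/\pi}=2\sqrt{2\pi}\,c+\sqrt{2\pi d}\,\eps$. Adding the lower‑order $\eps$ (or reabsorbing it) yields $\CCor_{\Rep_{b(m)}}(\cS,D)\ge 1-2\sqrt{2\pi}\,c-\sqrt{2\pi d}\,\eps$ with probability $\ge 1-\delta$, as claimed.

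The hardest part will be the soundness step: correctly quantifying how far a budget‑limited adversary can drag the minimum‑error halfspace, which is what forces both the $\min$‑over‑flipping‑halfspaces definition of $b'(x)$ and the factor $2$ — and hence produces the $2\sqrt{2\pi}\,c$ (rather than $\sqrt{2\pi}\,c$) term in the bound. The other place requiring care is pinning down the spherical constants: the equal‑area identity $\mu(h_{\omega}\triangle h_{\omega'})=\angle/\pi$, the $\sqrt{d/(2\pi)}$ density bound near the equator, and checking that the uniform convergence needed over the continuum of disagreement regions is covered by $\MUC$ up to constants. Everything else is routine bookkeeping.
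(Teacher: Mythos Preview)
Your proposal is correct and follows essentially the same route as the paper: ERM as the learner, a factor-two certificate coming from the observation that $b$ replacements can shift empirical risks by at most $2b/m$, the identity $\mu(h_\omega \triangle h_{\omega'})=\angle(\omega,\omega')/\pi$, and the equatorial-band bound via $S_{d-1}/S_d\le\sqrt{d/(2\pi)}$. Two small clean-ups: you do not actually need uniform convergence over symmetric differences (plain UC over $\cH_{\mathsf{half}}$ already gives $\Risk(\omega',\cS)\ge\Risk(\omega',D)-\eps=\angle(\omega^\ast,\omega')/\pi-\eps$, which is all your argument uses), and you should phrase the learner as full ERM rather than ``any zero-error halfspace'' so that it is well defined on the poisoned, possibly non-realizable $\cS'$ on which your soundness step retrains.
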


For example, when $c= {1}/{502}$, $\eps = {c}/{(100\sqrt{d})}$ and $\delta = 0.01$, Theorem~\ref{thm:halfspace-positive} implies that
$$ \Pr_{\cS \gets D^{m}}\left[\CCor_{\Rep_{b(m)}}(\cS, D) \geq 99\% \right] \geq 99\%. $$

\begin{proof}[Proof of Theorem~\ref{thm:halfspace-positive}]
 Without loss of generality, we assume $\omega = (1, 0, 0 \dots, 0)\in \cH_{\mathsf{half}}$ denotes the ground-truth halfspace, i.e., $\mathsf{Risk}(\omega, D) = 0$. Therefore, for any data set that is i.i.d. sampled $\cS \sim D^m$, $\mathsf{Risk}(\omega, \cS) = 0$. We denote $\beta(m) = b(m)/m = c/\sqrt{d}$ be the fraction of replaced examples in the data set, and for simplicity we may use $b$ and $\beta$ to represent $b(m)$ and $\beta(m)$ in the following analysis. 

We now show that hypothesis class $\cH_{\mathsf{half}}$ is properly and certifiably PAC learnable under instance-targeted poisoning attacks on $D$. The general idea is to prove that for the majority of examples $e = (x, y) \sim D$, the risk of any hypothesis that incorrectly predicts $x$ is large. Let $\Adv_b(\cS)$ be an arbitrary adversary of budget $b(m)$. Since the adversary needs to fool the ERM algorithm, the adversary needs to change the data set from $\cS$ to $\cS'$, so that the empirical risk of a ``bad'' hypothesis $\omega'$,  $\Risk(\omega', \cS')$, is lower than the empirical risk of $\omega$, $\Risk(\omega, \cS')$. However, since the adversary can only make $b$ changes, we have 
\begin{equation*}
        \Risk(\omega, \cS')  \leq \Risk(\omega, \cS) + \beta = \beta,  \text{~~~and~~~}
       \Risk(\omega', \cS')   \geq \Risk(\omega', \cS) - \beta.
 \end{equation*}
Also, according to the uniform convergence property of the hypothesis set, let $\MUC(\eps, \delta)$ be the sample complexity of uniform convergence. Then with probability at least $1-\delta$ over $\cS$, we have $\Risk(\omega', D) \leq \Risk(\omega', \cS) + \eps$. Therefore, to fool ERM on $x$ with budget $b$, the adversary needs  \begin{equation}
    \exists \omega' \in \cH_{\mathsf{half}} \text{~~such that~~} \Risk(\omega', D) \leq 2\beta + \eps \text{~~and~~} \omega'(x) \neq \omega(x).
    \label{eq:condition_halfspace}
\end{equation}

 We then show that when $m \geq \MUC(\eps, \delta)$, for the majority of instances according to $D$, no such $\omega'$ exists if $B$ is sufficiently small.

The intersection of the halfspace $\omega$ and the $d$- dimensional sphere $\sphere{d-1}$, i.e., the ``equator'', is a $(d-1)$-dimensional sphere. Suppose $x = (x_1, x_2, \dots, x_d)$, let $\theta$ be the angle between $x$, the origin, and the halfspace $\omega$. There exists an unique $x'$ on the equator that has the minimal distance to the $x$ among all the points on the equator, and $\angle xox' = \theta$ where $o$ stands for the origin $\{0, 0, \dots, 0\}$. 
For any halfspace $\omega_1$ where $x'$ is on $\omega_1$, the angle between $\omega$ and $\omega_1$ is at least $\theta$. Therefore, a halfspace where $\omega'(x) \neq \omega(x)$ has the property that the angle between $\omega'$ and $\omega$ is at least $\theta$. In that case, since the the risk of $\omega'$ on $D$ is at least $\mathsf{Risk}(\omega', D) \geq \theta/\pi$. In the following analysis, we call an example $x'$ around angle $\theta'$ of a halfspace $\omega'$, if the angle between $x'$, the origin and halfspace $\omega'$ is less than $\theta'$.

As the distribution $D$ is uniform, the probability of an example fall into angle $\theta$ around the halfspace $\omega$ can be calculated by measuring the size of the surface within angle $\theta$, which is then upper bounded by the cylindrical surface size of a cylinder whose bottom is a $(d-1)$-dimensional unit ball and height is $2\theta$. Let $S_{d-1}$ denotes the surface of the $(d-1)$-dimensional unit sphere, then this cylinder surface has the size of $2\theta S_{d-1}$. We further denote the surface of a $d$-dimensional ball as $S_d$. Therefore, the probability of a random example falls into the set  within angle $\theta$  around $\omega$ can be upper bounded by
\begin{equation*}
    \Pr_{(x, y) \sim D}\left[x \text{ is within angle } \theta \text{ around } \omega\right] < \frac{2\theta S_{d-1}}{ S_d} < \frac{\theta\sqrt{2d}}{\sqrt{\pi}}.
\end{equation*}

The last inequality follow from Proposition~\ref{fact:sphere} in the appendix. Now, let $\theta_0 = (2\beta + \eps)\pi = {2\pi c}/{\sqrt{d}} + \pi\eps$, then ${\theta_0\sqrt{2d}}/{\sqrt{\pi}} = 2\sqrt{2\pi} \cdot c + \sqrt{2\pi d} \cdot \eps$. Therefore, we have for at least $1 - (2\sqrt{2\pi} \cdot c + \sqrt{2\pi d} \cdot \eps)$ of all possible $x$, all halfspace $\omega'$ that $\omega'(x) \neq \omega(x)$ has $\Risk(\omega', D) > 2\beta + \eps$, which according to Equation~\ref{eq:condition_halfspace}, indicates that the adversary needs budget more than $b$ to change the prediction of $x$. 
 
Finally, we define a certifying model $\hcert$ that returns certifications  $\geq b$ with high probability. For input $e=(x,y)$ and $\cS$, suppose $\omega' = \Learn(\cS)$, let $\theta'$ be the angle between $x$ and $\omega'$, then $$\hcert(x) = \begin{cases}     \max\left\{0, \left(\frac{\theta'}{2\pi} - \frac{\eps}{2}\right) \cdot m\right\} & \frac{\theta'}{\pi} \geq 2\beta + \eps  \\ 0 & \text{Otherwise}
 \end{cases}.$$ Following our analysis,  we have $\hcert(x) > b$ for all the examples that are not within angle $\theta'$ of $\omega$, which is with high probability. Also, for any $x$ that $\theta'/\pi \geq 2\beta + \eps$, we have $\forall \omega'(x) \neq \omega(x)$, $\Risk(\omega', D) \geq \theta'/\pi$. To flip the prediction on $x$, the adversary need to replace at least 
 \begin{align*}
      \beta'   \geq \frac{\min_{\omega' \in \cH}\left\{\Risk(\omega', \cS)\right\}}{2}  
       \geq \frac{\min_{\omega' \in \cH}\left\{\Risk(\omega', D) - \eps \right\}}{2} \geq \frac{{\theta}/{\pi} - \eps}{2} = \frac{\theta' }{2\pi} - \frac{\eps}{2}
 \end{align*}
 fractions of any $\cS$ that is $\eps$-representative. 
 Therefore, $\hcert$ gives a correct certification for all examples for any $\cS$ that is $\eps$-representative, and the certification result is larger than $b$ for the majority of examples for any such $\cS$.
 
 In summary, when $b ={cm}/{\sqrt{d}}$ and $m \geq \MUC(\eps, \delta)$, with probability $1 - \delta$, there are at least $1 - 2\sqrt{2\pi} \cdot c - \sqrt{2\pi d} \cdot \eps$ of examples that are robust to any $b$-replacing instance-targeted poisoning attacks. Therefore, the certifying learner $\RobCertLearn(\cS)(x) = (\Learn(\cS)(x), \hcert(x))$ gets 
 \begin{align*}
      \Pr_{\cS \gets D^{m}}\left[\CCor_{\Rep_{b(m)}}(\cS, D) \geq 1 - 2\sqrt{2\pi} \cdot c - \sqrt{2\pi d} \cdot \eps \right]   \geq 1 - \delta.
 \end{align*}
  Therefore, $\cH$ is certifiably and properly PAC learnable under $\Rep_b$ attacks.
\end{proof}

We also show that the above theorem is essentially optimal, as long as we use proper learning. Namely, for any fixed dimension $d$, with budget $b=O(m/\sqrt{d})$, a $b$-replacing adversary can guarantee success of fooling the majority of examples.
Note that for constant $d$, when $m \to \infty$, this is just a constant fraction of data being poisoned, yet this constant fraction can be made arbitrary small when $d\to \infty$.

\begin{theorem}[Limits of robustness of PAC learners under the uniform distribution]
\label{thm:halfspace-negative}
In the realizable setting, let $D$ be uniform over the $d$ dimensional unit sphere $\sphere{d-1}$. For the halfspace hypothesis set $\cH_{\mathsf{half}}$, if $b(m) \geq {cm}/{\sqrt{d}}$ for $b$-label flipping attacks $\Flp_b$, for any proper learner $\Learn$ one of the following two conditions holds.
Either $\Learn$ is \emph{not} a PAC learner for the hypothesis class of half spaces (even without attacks), or for sufficiently large $m \geq \MPAC({3c}/{(10\sqrt{d})}, \delta)$, with probability $1 - \sqrt{\delta + 2e^{-c^2/18}}$ over the selection of $\cS$ we have $$\Risk_{\Flp_b}( \cS, D) \geq 1 - \sqrt{\delta + 2e^{-c^2/18}},$$ where $\MPAC$ is the sample complexity of the   learner $\Learn$. 
\end{theorem}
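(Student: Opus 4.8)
The plan is to transplant the counterexample of Theorem~\ref{thm:halfspace-example-realizable} onto the unit sphere and then exploit concentration of measure on $\sphere{d-1}$ to upgrade ``a constant fraction of the targets are vulnerable'' into ``almost every target is vulnerable.'' Fix the ground truth to be $\omega=e_1$, so that the true label of $x\in\sphere{d-1}$ is $\operatorname{sign}(\omega\cdot x)$ and the optimal boundary is the equator $\omega^{\perp}$. We may assume $c<\sqrt d$, since otherwise $b(m)\ge m$ and the adversary simply flips every label, making the data realizable by $-\omega$ and giving $\Risk_{\Flp_b}(\cS,D)=1$. The key point is that under the uniform distribution the projection $\omega\cdot x$ concentrates near $0$: the standard spherical tail bound gives $\Pr_{x\sim D}[\,|\omega\cdot x|\ge t\,]\le 2e^{-\Omega(dt^{2})}$, so with $t=c/(3\sqrt d)$ (and its sharp form) at most a $2e^{-c^{2}/18}$ fraction of $x$ lie at angular distance at least $\arcsin(c/(3\sqrt d))$ from the equator. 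Call the rest \emph{good} targets; for each good $x$ the cheap attack below succeeds.

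For a good $x$ with, say, true label $+1$, I would write $x=a\omega+\sqrt{1-a^{2}}\,u$ with $u\perp\omega$, $\|u\|=1$, $a=\sin\alpha$ and $0<\alpha<\arcsin(c/(3\sqrt d))$, and take the alternative halfspace $\omega'=\cos\theta\cdot\omega-\sin\theta\cdot u$, where $\theta$ is a fixed constant multiple of $c/\sqrt d$ (e.g.\ $\theta=\pi c/(2\sqrt d)$, to be tuned). Then $\omega'\cdot x=\sin(\alpha-\theta)=-\sin(\theta-\alpha)<0$, so $\omega'(x)=-1$, and the disagreement region $L=\{z:\operatorname{sign}(\omega\cdot z)\neq\operatorname{sign}(\omega'\cdot z)\}$ has $D$-measure exactly $\theta/\pi<c/\sqrt d$. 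The attacker $\Adv_x$ flips the label of every sample point in $L$; since the expected number of such points is $\tfrac{\theta}{\pi}m$, well below $b(m)=cm/\sqrt d$, a Chernoff bound shows the attack stays within budget except with probability $o_m(1)$, and the poisoned sample $\cS'=\Adv_x(\cS)$ is exactly an i.i.d.\ sample of size $m$ from the relabelled distribution $D'$ (the $\cX$-marginal is untouched and every label now agrees with $\omega'$), which is realizable by $\omega'\in\cH_{\mathsf{half}}$.

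Since $m\ge\MPAC(3c/(10\sqrt d),\delta)$ and $D'$ is realizable, the proper PAC learner returns $\hat\omega=\Learn(\cS')\in\cH_{\mathsf{half}}$ with $\Risk(\hat\omega,D')\le\eps_1:=3c/(10\sqrt d)$ with probability at least $1-\delta$. As $\hat\omega,\omega'$ are homogeneous halfspaces and the $\cX$-marginal is uniform, $\Risk(\hat\omega,D')=\angle(\hat\omega,\omega')/\pi$, so $\angle(\hat\omega,\omega')\le\pi\eps_1$. On the other hand $\angle(x,\omega')=\tfrac{\pi}{2}+(\theta-\alpha)$, and the constants are arranged so that $\theta-\alpha>\pi\eps_1$ (reconciling $\theta\asymp c/\sqrt d$ with $\alpha<\arcsin(c/(3\sqrt d))$ and $\eps_1=3c/(10\sqrt d)$, using $\arcsin z\le 1.1\,z$ for $z\le\tfrac13$). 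The triangle inequality for geodesic distance then forces $\angle(x,\hat\omega)\ge\tfrac{\pi}{2}+(\theta-\alpha)-\pi\eps_1>\tfrac{\pi}{2}$, i.e.\ $\hat\omega(x)=-1\neq+1$. Thus for every good $x$ the point $x$ is $1$-vulnerable to $\Flp_b$ with probability at least $1-\delta-o_m(1)$ over $\cS$.

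Finally I would average over the target. Using $\Risk_{\Flp_b}(\cS,D)=\Pr_{x\sim D}[\,x\text{ is }1\text{-vulnerable given }\cS\,]$ together with the two estimates above, Fubini gives $\Ex_\cS[\Risk_{\Flp_b}(\cS,D)]=\Ex_x\Ex_\cS[\one(x\text{ vulnerable})]\ge(1-2e^{-c^{2}/18})(1-\delta-o_m(1))$, hence $\Ex_\cS[1-\Risk_{\Flp_b}(\cS,D)]\le\delta+2e^{-c^{2}/18}$ for $m$ beyond an absolute threshold (the exponentially small budget-overflow term is folded in here). Writing $\eta=\delta+2e^{-c^{2}/18}$ and applying Markov's inequality to the $[0,1]$-valued variable $1-\Risk_{\Flp_b}(\cS,D)$ yields $\Pr_\cS[\,1-\Risk_{\Flp_b}(\cS,D)\ge\sqrt\eta\,]\le\sqrt\eta$, which is exactly the claimed conclusion; the first alternative of the theorem (``$\Learn$ is not a PAC learner'') is what would invalidate the single place where we invoked the PAC guarantee. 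The step I expect to be the main obstacle is the geometric bookkeeping in the third paragraph: making the flipped lune simultaneously thin enough to fit the budget $cm/\sqrt d$ and wide enough that the target sits strictly beyond the learner's ``wobble radius'' $\pi\eps_1$ from the boundary of $\omega'$ --- this tension is precisely what pins down the constants $3/(10\sqrt d)$ and $2e^{-c^{2}/18}$ appearing in the statement.
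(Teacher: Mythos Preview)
Your proposal is correct and follows the same overall strategy as the paper: use spherical concentration to show most targets lie within angle $\arcsin(c/(3\sqrt d))$ of the equator, poison a thin lune so that the resulting sample is i.i.d.\ from a distribution realized by a rotated halfspace, invoke the PAC guarantee on this poisoned distribution to control $\angle(\hat\omega,\omega')$, deduce $\hat\omega(x)\neq y$, and finish with Fubini plus Markov to get the $\sqrt{\delta+2e^{-c^2/18}}$ bound.

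The one substantive difference is the geometric step. The paper rotates $\omega$ by exactly $2\alpha$ (twice the target's angular distance to the equator), so that $x$ lands on the \emph{midway} halfspace $\omega'$ between $\omega$ and the new ground truth $\omega''$; it then argues, via connectedness of halfspace prediction regions, that if $\hat\omega$ were correct on $x$ under $D$ it would misclassify at least half the flipped lune under $D'$, contradicting $\Risk(\hat\omega,D')\le\eps_1$. You instead rotate by a \emph{fixed} angle $\theta\asymp c/\sqrt d$ independent of $x$ and use the spherical triangle inequality $\angle(x,\hat\omega)\ge\angle(x,\omega')-\angle(\omega',\hat\omega)>\pi/2$ directly. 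Your route is arguably cleaner and makes the constant bookkeeping (why $3c/(10\sqrt d)$ and $c/(3\sqrt d)$ fit together) more transparent; the paper's variable rotation keeps the lune as small as possible for each target. You also correctly surface the Chernoff $o_m(1)$ term for budget overflow, which the paper handles via uniform convergence in the ERM warm-up but treats less explicitly in the general-learner extension.
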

For example, when $c= 20$ and $\delta = 0.00009$, we have
$ \Risk_{\Flp_b}( \cS, D)  \geq 99\%. $
 
\begin{proof}[Proof of Theorem~\ref{thm:halfspace-negative}]
Let $\omega \in \cH_{\mathsf{half}}$ denote the ground-truth halfspace, i.e., $\mathsf{Risk}(\omega, D) = 0$. We now design an adversary that fools the learner $\Learn$ within the budget $b(m)$. We start by proving the theorem for the ERM rule, and then we discuss how it extends to any PAC learner.

According to the concentration of the uniform measure over the unit sphere $\sphere{d-1}$ (e.g., see~\cite{matousek2013lectures}), for any set of measure 0.5 on the sphere, its $\rho$-neighborhood $T_\rho$ (defined as the set of all the points whose Euclidean distance less or equal to $\rho$) has measure
\begin{equation*}
    \mu(T_\rho) \geq 1 - 2e^{-d\rho^2/2}.
\end{equation*}

Therefore, for any halfspace $\omega$, the measure of samples that has $\rho$ distance to $\omega$ is at least $1 - 4e^{-d\rho^2/2}$.

Now, given an example $x$ and the training data set $\cS$, suppose $\theta$ is the angle between $x$ and $\omega$, the adversary $\Adv_b \in \Flp_b$ act like this:
\begin{enumerate}
    \item Rotate $\omega$ to $x$ by $\theta$. Let $\omega'$ denotes the result halfspace (where $x$ landed on).
    \item Rotate $\omega'$ with another $\theta$ in the same direction to the halfspace $\omega''$.
    \item For any example from the data set $\cS$ that is between $\omega$ and $\omega''$, flip its label.
    \item Return the data set as $\cS'$.
\end{enumerate}

    
    Let $\rho_0 = {c}/{3\sqrt{d}}$, then at least $1 - 2e^{-c^2/18}$ of $x$ has at most $\rho_0$ distance to $\omega$. The probability measure of the surface between $\omega$ and $\omega''$ is $2\theta/\pi$, where $2\theta/\pi \leq 2\sin(\theta) \leq 2\rho_0$. 
Let $\MUC(\eps, \delta)$ be the sample complexity of uniform convergence. Then with probability at least $1 - \delta$ over $\cS$, we have $\Risk(\omega'', \cS) \leq \Risk(\omega'', \cD) + \eps \leq 2\rho_0 + \eps$. 
 
 
 Let $\eps = 0.9\rho_0$, then the adversary flips $\Risk(\omega'', \cS) \cdot m$ examples, which with probability $1 - \delta$ we have $\Risk(\omega'', \cS) \leq 2.9\rho_0 < b/m$. Now, the ERM learner will go for the hypothesis with the minimal error on $\cS'$,  which is then $\omega''$. As $\omega''(x) \neq \omega(x)$, the ERM learner will give a wrong answer on $x$. With probability $1 - \delta$, the adversary will complete the attack within budget $b$ on at least $1 - 2e^{-c^2/18}$ examples, by the union bound, the adversary succeeds on $1 - \delta - 2e^{-c^2/18}$ examples. Finally, by an averaging argument, we have with probability $1 - \sqrt{\delta + 2e^{-c^2/18}}$, the adversary succeeds with $ 1 -  \sqrt{\delta + 2e^{-c^2/18}}$ examples.


\paragraph{Extension to any proper PAC learner}

To extend the result to any proper PAC learner, we use a similar proof as in Theorem~\ref{thm:halfspace-example-realizable}. We show same $\Adv_b$ can be extended to fool any proper PAC learner with high probability.

Let $D'$ be the ``poisoned'' distribution, that for $\cS' = \Adv_b(\cS)$, we have $\cS' \sim (D')^m$. Then with probability $1 - \delta$, we have $\Risk(\Learn(\cS'), \cD') \geq \Risk(\Learn(\cS'), \cS') - \eps$.
Now, let $ \MPAC(\eps_1, \delta_1)$ be the sample complexity of $\Learn$ on $D'$. When $m \geq \MPAC(\eps_1, \delta_1)$, on the distribution $D'$, $\Learn(\cS')$ holds $\Risk(\Learn(\cS'), D') \leq \eps_1$ with probability at least $1 - \delta_1$. 

Let $\eps_1 = 0.9\rho_0 = {3c}/{10\sqrt{d}}$. Because hypothesis set $\cH$ are halfspaces, the prediction region (the subset of all the examples predicted for a specific label) is connected. Therefore, if $\Learn(\cS')$ incorrectly predicts $x$ (which is, correctly predicts $x$ on the original data set $\cS$),  as $x$ is on $\omega'$, at least half of the surface between $\omega$ and $\omega''$ is incorrectly predicted, i.e., $\Risk(\Learn(\cS'), D') \geq \rho_0$. This contradicts $\Risk(\Learn(\cS'), D') \leq \eps_1 = 0.9\rho_0$. Therefore, with probability $1 - \delta_1$, the adversary will complete the attack within budget $b$ on at least $1 - 2e^{-c^2/18}$ examples, by the union bound, the adversary succeeds on $1 - \delta_1 - 2e^{-c^2/18}$ examples. Finally, by an averaging argument, we have with probability $1 - \sqrt{\delta_1 + 2e^{-c^2/18}}$, the adversary succeeds with $ 1 -  \sqrt{\delta_1 + 2e^{-c^2/18}}$ examples.



\end{proof}



\subsection{Relating  risk and robustness}

Risk uses a worst-case budget to capture what an adversary can do, while robustness does so using an average-case budget.  
Theorem~\ref{thm:Risk-Rob} below relates the two notions of risk and robustness in the context of targeted poisoning attacks and   is  inspired by results previously proved for adversarial inputs that are crafted during test-time attacks (\cite{diochnos2018adversarial,mahloujifar2019curse}). In particular, Theorem~\ref{thm:Risk-Rob} proves that for 0-1 loss, it is equivalent to fully understand either of them to understand the other one and allows to derive numerical values for one through the other. 


\begin{theorem}[From risk to robustness and back] \label{thm:Risk-Rob}
Suppose $\cS \in (\cX \times \cY)^m$ is a training set,   $\Learn$ is a   learner,    $D$ is a distribution over $\cX \times \cY$,  $\AdvC_b$ is an adversary class with the budget $b$, and $\AdvC=\cup_{b \in \N} \AdvC_b$. Then the following relations hold. 
\begin{enumerate}
    \item  {\bf From robustness to risk.} For any non-negative loss function, we have
    $$\Risk_{\AdvC_b}(\cS,r,D) = \int_0^\infty \Pr_{e \sim D}\left[\Rob^\tau_\AdvC(\cS,r,e) \leq b\right] \cdot \mathrm{d}\tau.$$ 
    For the special case of 0-1 loss, this simplifies to $\Risk_{\AdvC_b}(\cS,r,D) = \Pr_{e \sim D}\left[\Rob_\AdvC(\cS,r,e) \leq b\right]$.  
    \item {\bf From risk to robustness.} Suppose we use the 0-1 loss. Suppose  $b$ is large enough such that $\Risk_{\AdvC_n}(\cS,r,D)=1$, or equivalently $\Cor_{\AdvC_i}(\cS,r,D)=0$ for $i\geq b$.\footnote{For example, if the adversarial strategy allows flipping up to $b$ labels, then for $b = m$ the adversary can flip all the labels. For natural hypothesis classes and learning algorithms, changing all the labels allows the adversary to control prediction on all points and so  $\Risk_{\AdvC_b}(\cS,D)=1$.} Then,   it holds that
    \begin{align*}
 \Rob_\AdvC(\cS,r,D) &= b -\sum_{i=0}^{b-1} \Risk_{\AdvC_i}(\cS,r,D) 
 \\ &= \sum_{i=0}^{b-1} \Cor_{\AdvC_i}(\cS,r,D) 
  \\&= \sum_{i=0}^{\infty} \Cor_{\AdvC_i}(\cS,r,D).        
    \end{align*}

In other words, if we could compute adversarial risks for all $b$, we can also compute the average robustness by summing robust correctness. 
\end{enumerate}
\end{theorem}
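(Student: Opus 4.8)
The plan is to reduce both parts to the standard ``layer-cake'' (tail-sum) identity for the expectation of a non-negative random variable, applied to the map $e \mapsto \loss_{\AdvC_b}(\cS,r,e)$ for Part~1 and to $e \mapsto \Rob_\AdvC(\cS,r,e)$ for Part~2. The one structural fact that makes everything go through is monotonicity of the adversary in its budget: since ``using budget $b$'' is a special case of ``using budget $b+1$'', we have $\AdvC_b(\cS) \subseteq \AdvC_{b+1}(\cS)$ for every $\cS$, and hence $b \mapsto \loss_{\AdvC_b}(\cS,r,e) = \sup_{\cS'\in\AdvC_b(\cS)} \loss(\Learn_r(\cS'),e)$ is non-decreasing in $b$. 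I would record this as a preliminary observation (it holds for all of $\Rep_b,\Flp_b,\Add_b,\Rem_b,\Add\Rem_b$).

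For Part~1, I would first note the consequence that, for every $\tau>0$, the set $\{b \in \N : \loss_{\AdvC_b}(\cS,r,e) \geq \tau\}$ is upward closed in $\N$; so its infimum $\Rob^\tau_\AdvC(\cS,r,e)$ is attained (a non-empty subset of $\N$ has a least element), and $\Rob^\tau_\AdvC(\cS,r,e) \leq b$ holds if and only if $\loss_{\AdvC_b}(\cS,r,e) \geq \tau$ (both fail simultaneously when the set is empty, i.e.\ $\Rob^\tau=\infty$). Hence $\Pr_{e\sim D}[\Rob^\tau_\AdvC(\cS,r,e)\le b] = \Pr_{e\sim D}[\loss_{\AdvC_b}(\cS,r,e)\ge\tau]$. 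Applying the tail-integral identity $\Ex[Z] = \int_0^\infty \Pr[Z\ge\tau]\,\mathrm{d}\tau$ to $Z = \loss_{\AdvC_b}(\cS,r,e)$ then yields $\Risk_{\AdvC_b}(\cS,r,D) = \int_0^\infty \Pr_{e\sim D}[\Rob^\tau_\AdvC(\cS,r,e)\le b]\,\mathrm{d}\tau$. For the $0$-$1$ loss, $Z\in\{0,1\}$, so $\Pr[Z\ge\tau]$ equals $\Pr[\Rob_\AdvC(\cS,r,e)\le b]$ for $\tau\in(0,1]$ and $0$ for $\tau>1$, and the integral collapses to $\Pr_{e\sim D}[\Rob_\AdvC(\cS,r,e)\le b]$.

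For Part~2, I would use the discrete analogue: for a non-negative integer-valued random variable $N$, $\Ex[N] = \sum_{i=0}^\infty \Pr[N > i]$. The hypothesis that $\Cor_{\AdvC_i}(\cS,r,D)=0$ (equivalently $\Risk_{\AdvC_i}=1$) for all $i\ge b$ says, via Part~1, that $\Rob_\AdvC(\cS,r,e)\le b$ almost surely, so $N:=\Rob_\AdvC(\cS,r,e)$ is a genuine finite random variable and the sum is finite. Combining the tail-sum with Part~1, $\Pr[N>i] = 1-\Pr[\Rob_\AdvC(\cS,r,e)\le i] = 1-\Risk_{\AdvC_i}(\cS,r,D) = \Cor_{\AdvC_i}(\cS,r,D)$, giving $\Rob_\AdvC(\cS,r,D)=\Ex[N]=\sum_{i=0}^\infty \Cor_{\AdvC_i}(\cS,r,D)$, the last displayed equality. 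Since $\Cor_{\AdvC_i}=0$ for $i\ge b$, the infinite sum equals $\sum_{i=0}^{b-1}\Cor_{\AdvC_i}(\cS,r,D)$, and rewriting $\Cor_{\AdvC_i}=1-\Risk_{\AdvC_i}$ turns this into $b-\sum_{i=0}^{b-1}\Risk_{\AdvC_i}(\cS,r,D)$, recovering the remaining two expressions.

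The calculations are routine; the only delicate points are bookkeeping. First, one must keep the infimum defining $\Rob^\tau$ over $\N$ and check it is attained (handled by upward-closedness above), so there is no off-by-one slip between ``$\le b$'' and ``$< b$''. Second, for the integral in Part~1 to be meaningful one needs $e\mapsto \loss_{\AdvC_b}(\cS,r,e)$ to be measurable; this is automatic when $\cY$ or $\AdvC_b(\cS)$ is countable, and otherwise should be taken as a standing regularity assumption (or circumvented via inner/outer measure). I expect the index/sign manipulation linking the three equivalent forms in Part~2 to be the spot most prone to an error, so I would write that step out in full rather than leave it implicit.
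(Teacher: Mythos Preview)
Your proposal is correct and follows essentially the same approach as the paper: both parts are reduced to the tail-integral (layer-cake) identity, with the key link being the equivalence $\Rob^\tau_\AdvC(\cS,r,e)\le b \iff \loss_{\AdvC_b}(\cS,r,e)\ge\tau$, and Part~2 uses the discrete tail-sum for the integer-valued robustness together with Part~1. If anything, your write-up is more careful than the paper's: you make explicit the monotonicity $\AdvC_b(\cS)\subseteq\AdvC_{b+1}(\cS)$ needed for that equivalence (the paper invokes it tacitly ``by definition''), and you flag the measurability and attainment-of-infimum bookkeeping that the paper leaves implicit.
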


\begin{proof}[Proof of Theorem~\ref{thm:Risk-Rob}]
We write the proof for deterministic learners who do not have any randomness, but the same exact proof works when a randomness $r$ exists and is fixed. 

By Definition~\ref{def:TarRob}, for any threshold $\tau$ we have
\begin{align*}
     \Rob^\tau_\AdvC(\cS, e) \leq b \iff  \sup_{\cS' \in \AdvC_b(\cS)} \left\{ \loss(\Learn(\cS'), e)\right\} \geq \tau  \\
     \iff \exists  \cS' \in \AdvC_b(\cS), \loss(\Learn(\cS')(x), y) \geq \tau.
\end{align*}
Also, the so-called expectation through CDF\footnote{See \url{https://en.wikipedia.org/w/index.php?title=Expected_value&oldid=1017448479\#Basic_properties} as accessed on May 16, 2021.} implies that for a non-negative function $f$ and a distribution $D$, we have
\begin{equation}
\label{eq:expectation_CDF}
    \Ex_{x \sim D}\left[f(x)\right] = \int_{\tau=0}^{\infty} \Pr\left[f(x) \geq \tau\right] \mathrm{d}\tau
\end{equation}
Therefore, Part 1 can be proven as follows.
\begin{equation*}
\begin{split}
 \Risk_{\AdvC_b}(\cS,D) & = \Ex_{e \sim D} \left[\loss_{\AdvC_b}(\cS, e)\right] \\ \text{(by Definition~\ref{def:TarRisk})} &= \Ex_{e \sim D}\left[\sup_{\cS' \in \AdvC_b(\cS)} \left\{\loss(\Learn(\cS'), e)\right\} \right] 
 \\
 \text{(by Equation~\ref{eq:expectation_CDF})} & = \int_{\tau = 0}^{\infty} \Pr_{e \sim D}\left[\sup_{\cS' \in \AdvC_b(\cS)} \left\{\loss(\Learn(\cS'), e)\right\} \geq \tau \right] \cdot \mathrm{d}\tau  
 \\ \text{(by Definition~\ref{def:TarRob})} &= \int_{\tau = 0}^{\infty} \Pr_{e \sim D}[\Rob^\tau_{\AdvC}(\cS, e) \leq b]  \cdot \mathrm{d}\tau  .    
\end{split}
\end{equation*}

We now prove Part 2. From Definition~\ref{def:TarRob}, $\Rob_\AdvC(\cS, e) \in \N \cup \{\infty\}$. We then have 
\begin{equation}
    \forall i\in \R, \Pr\left[\Rob_\AdvC(\cS, e) \geq i\right] = \Pr\left[\Rob_\AdvC(\cS, e) \geq \lceil i \rceil\right], 
    \label{eq:piecewise_rob}
\end{equation}
where $\lceil i \rceil$ is the ceiling function that returns the minimum integer above $i$. Furthermore, recall that $b$ is a large enough number that for any example $e$, $\forall i \geq b, \Risk_{\AdvC_i}(\cS, e) = 1$ and $\Cor_{\AdvC_i}(\cS, e) = 0$. We have  $\forall e, \Pr\left[\Rob_{\AdvC}(\cS, e) \leq b \right] = 1$, i.e., $\Rob_{\AdvC}(\cS, e) \leq b$. Then we conclude that,

\begin{equation*}
\begin{split}
\Rob_{\AdvC}(\cS, D) &= \Ex_{e \sim D}\left[\Rob_{\AdvC}(\cS, e)\right] \\ 
\text{(by Equation~\ref{eq:expectation_CDF})} &= \int_{\tau=0}^{\infty} \Pr_{e \sim D} \left[\Rob_{\AdvC}(\cS, e) \geq \tau \right] \cdot  \mathrm{d} \tau \\
\text{(by Equation~\ref{eq:piecewise_rob})} &= \sum_{i=0}^{\infty} \Pr_{e \sim D} \left[\Rob_{\AdvC}(\cS, e) > i\right]   \\
&= b - \sum_{i=0}^{b-1} \Pr_{e \sim D}\left[\Rob_{\AdvC}(\cS, e) \leq i\right] \\
\text{(by Definition~\ref{def:TarRob})}   &= b - \sum_{i=0}^{b-1}  \Risk_{\AdvC_i}(\cS, D) \\
\text{(by Definition 
\ref{def:TarRisk})} &= \sum_{i=0}^{b-1} \Cor_{\AdvC_i}( \cS, D)  = \sum_{i=0}^{\infty} \Cor_{\AdvC_i}( \cS, D). \qedhere
\end{split}
\end{equation*}
\end{proof}



 
\section{Experiments}

\begin{figure*}[ht]
    \centering
    \includegraphics[width=0.4\textwidth]{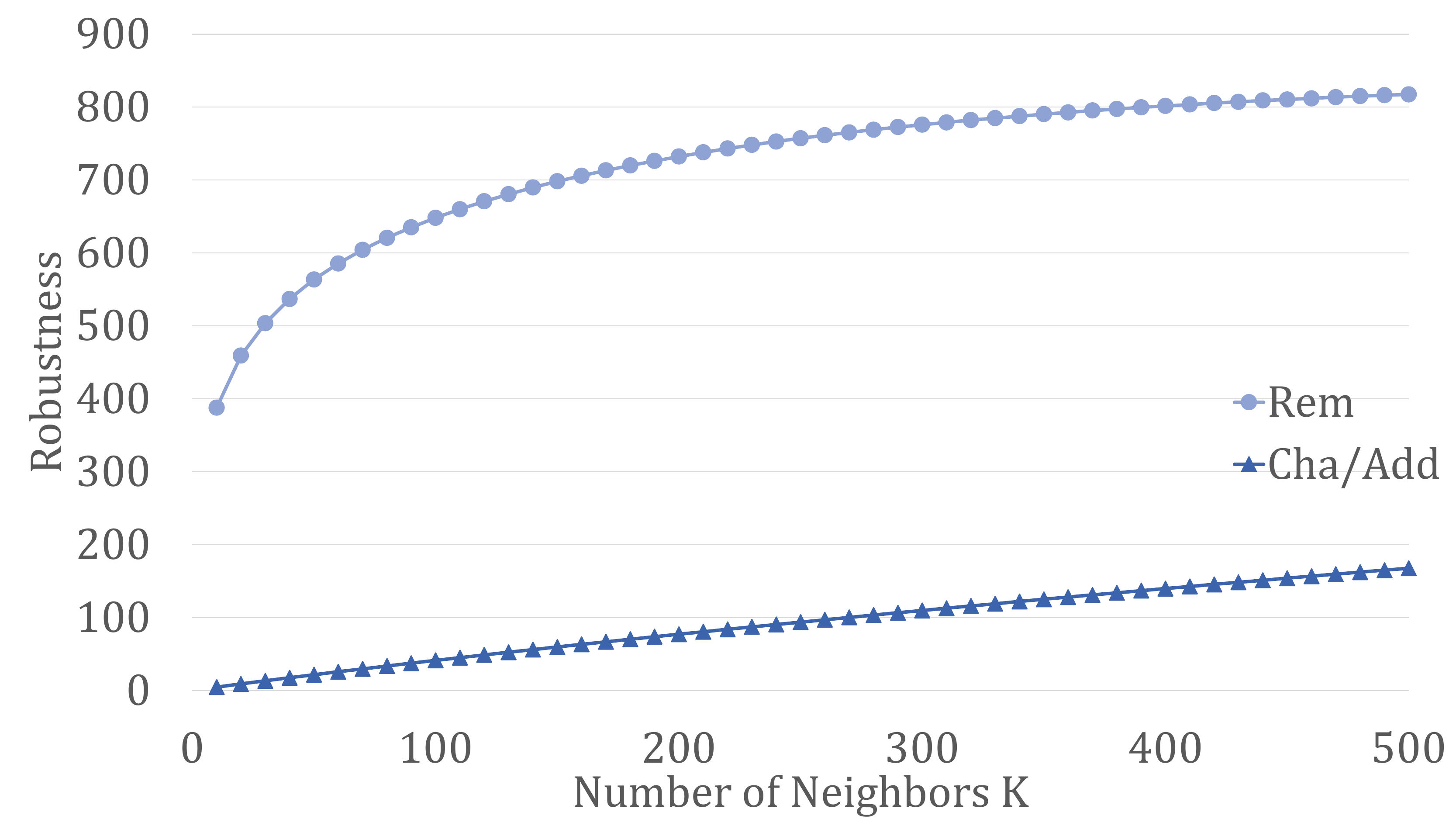} \hspace{1cm}
    \includegraphics[width=0.4\textwidth]{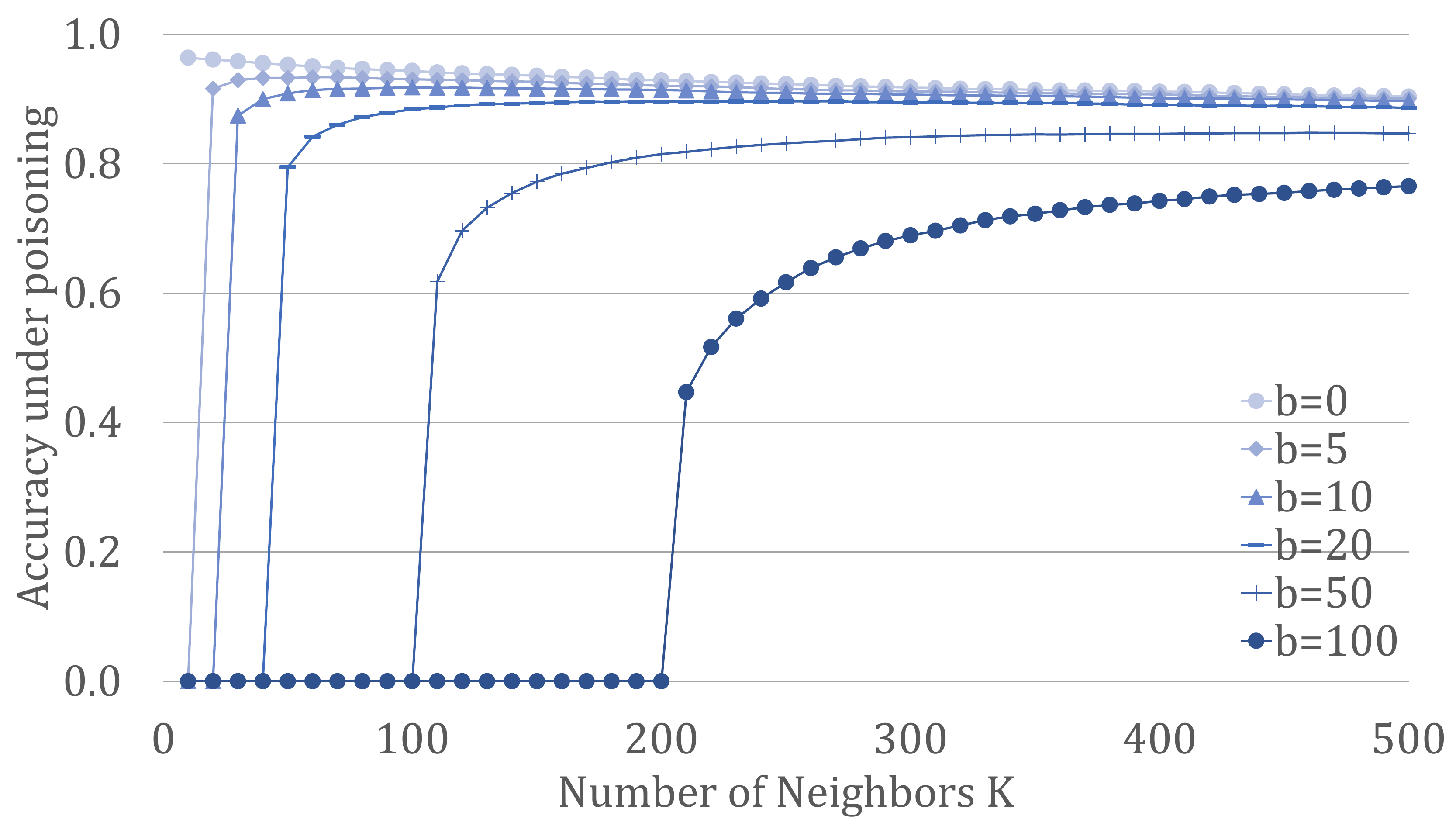} \\
    (a) \hspace{8cm} (b)
    \caption{Experiment of $K$-Nearest Neighbors on the MNIST dataset. (a) The trend of Robustness $\Rob(\Learn_{\mathsf{knn}},\cS_{\mathsf{MNIST}}, \cD)$ on attacks $\Rep$, $\Add$, and $\Rem$, with the increase of number of neighbors $K$. (b) Accuracy of $K$-NN model under $\Rep_b$ with different poisoning budget $b$.}
    \label{fig:experiment_knn}
\end{figure*}

\begin{figure*}[ht]
\centering
\includegraphics[width=0.38\textwidth]{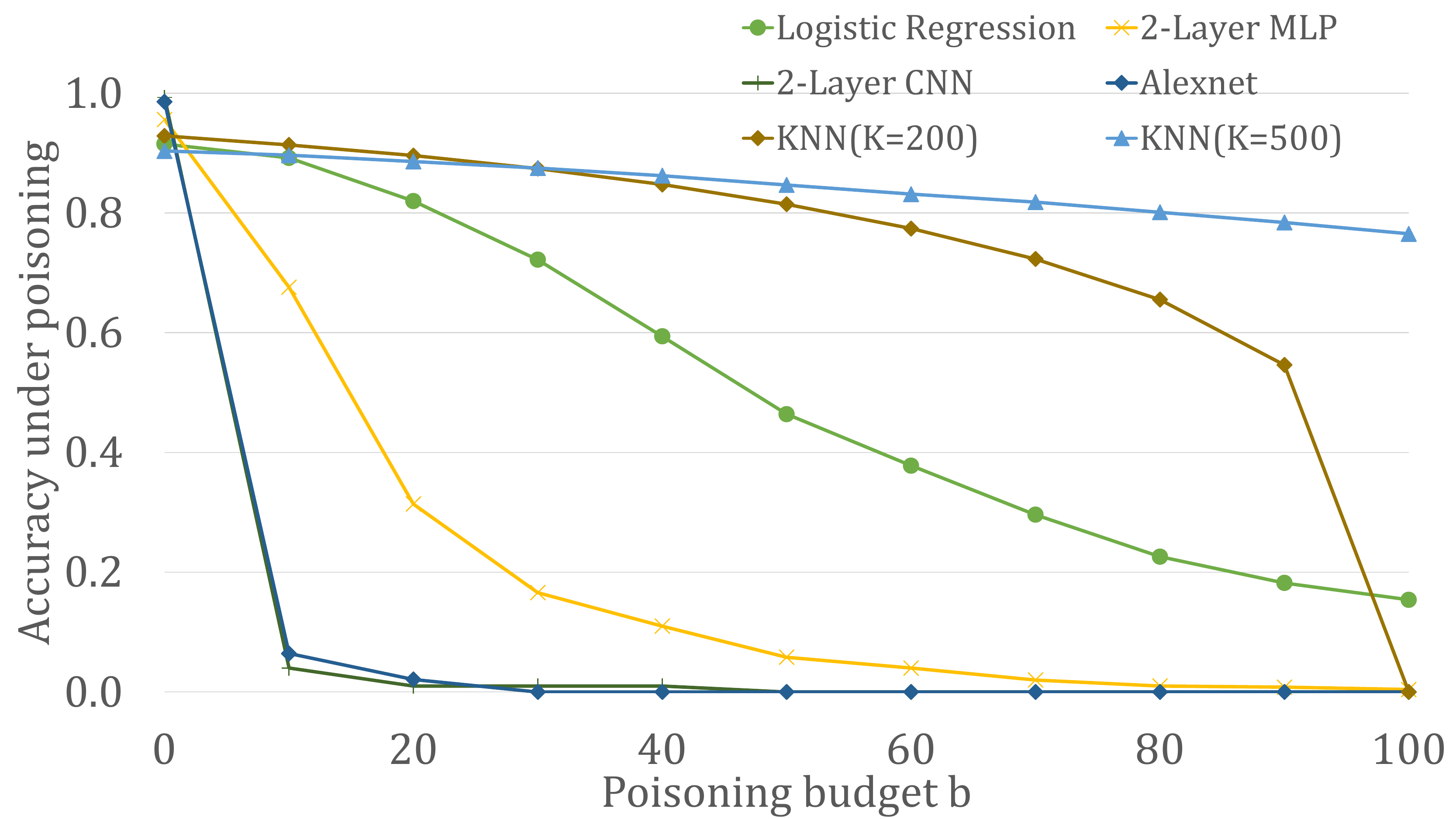} \hspace{1cm}
\includegraphics[width=0.38\textwidth]{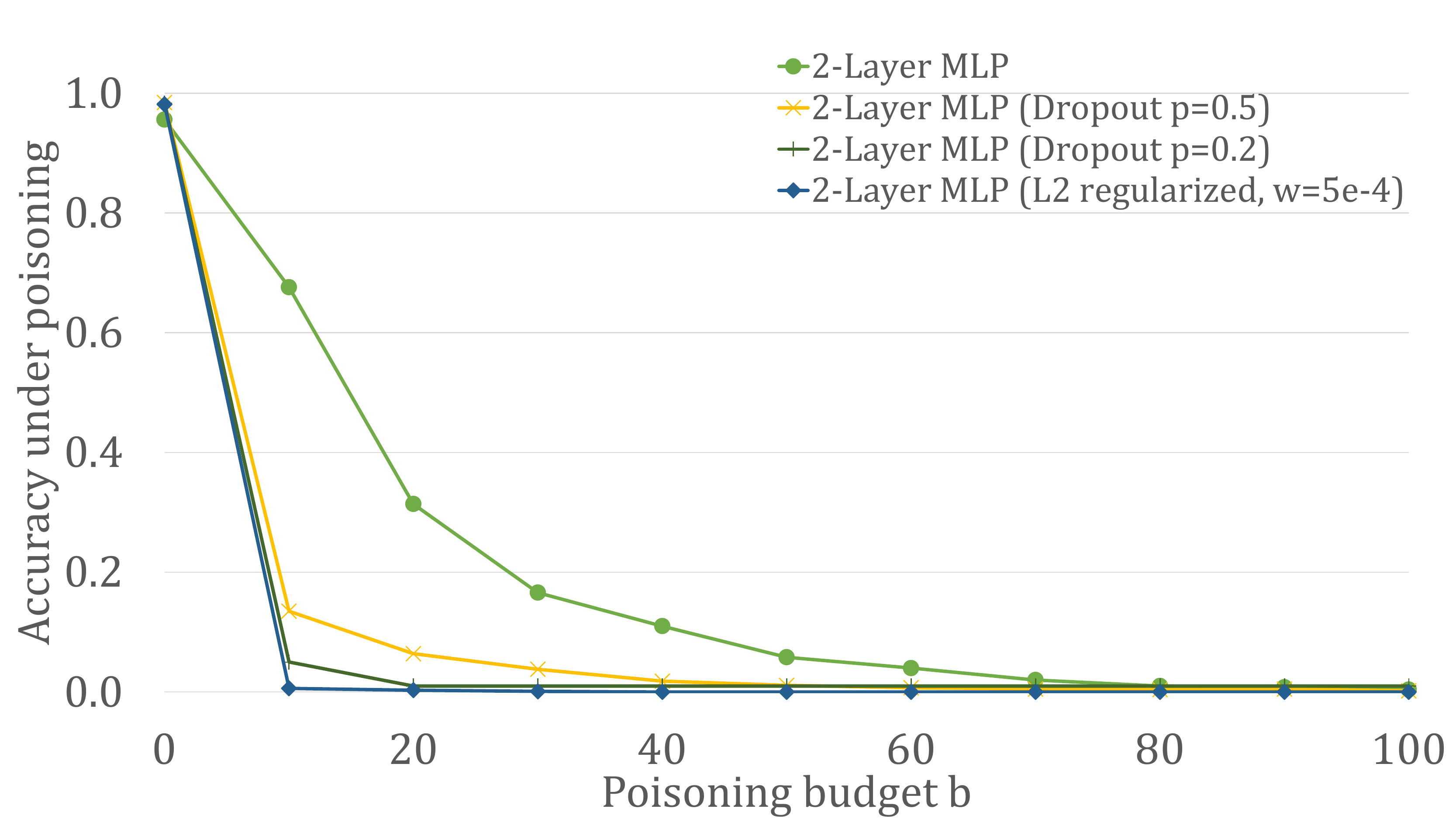} \\
(a) \hspace{8cm} (b)
\caption{Accuracy of different learners under $\Add_b$ instance-targeted poisoning on the MNIST dataset. (a) Compare different learners. (b) Compare dropout and regularization mechanics on Neural Networks.}
\label{fig:experiment_all_mnist}
\end{figure*}
In this section, we study the power of instance-targeted poisoning on the MNIST dataset \citep{lecun1998gradient}. We first analyze the robustness of  $K$-Nearest Neighbor model, where the robustness can be efficiently calculated empirically. We then empirically study the accuracy under targeted poisoning for  multiple other different learners. Previous empirical analysis on instance-targeted poisoning (e.g., \cite{shafahi2018poison}) mostly focus on clean-label attacks. In this work, we use attacks of any labels, which lead to stronger attacks compared to clean-label attacks. We also study multiple models in our experiment, while previous work mostly focus on neural networks, and we then compare the performance of different models under the same attack.

$K$-Nearest Neighbor ($K$-NN) is non-parameterized model that memorizes every training example in the dataset. This special structure of $K$-NN allows us to empirically evaluate the robustness to poisoning attacks. The $K$-NN model in this section uses the majority vote defined below.
\begin{defi}[$K$-NN learner]  \label{def:knn}
    For training dataset $\cS$ and example $e = (x, y)$, let $\cN(x)$ denote the set of $K$ closest examples from $\cS$  $e$. Then the prediction of the $K$-NN is 
$$
       \modelknn(x) = \argmax_{j \in \cY} \sum_{(x_i, y_i) \in \cN(x)} \one [{y_i = j}].    
$$
   \end{defi}
From our definition of poisoning attack and robustness, we can measure the robustness empirically by the following lemma. Similar ideas can also be found in \citep{jia2020intrinsic}. 
\begin{lemma}[Instance-targeted Poisoning Robustness of the $K$-NN learner]
Let $\mathsf{margin}(\modelknn,  e) $ be defined as $0$ if $\modelknn(x)\neq y$ and be defined as 
$$ 
\sum_{(x_i, y_i) \in \cN(x)} \one[y_i = y] - \max_{j \in \cY, j\neq y} \sum_{(x_i, y_i) \in \cN(x)} \one[y_i = j]$$
otherwise. We then have 
$$\Rob_{\Rep_b}(\Learn_{\mathsf{KNN}}, \cS, e) = \left\lceil{\frac{\mathsf{margin}(\Learn_{\mathsf{KNN}}(\cS),  e)}{2}}\right\rceil.$$
\label{lemma:knn}
\end{lemma}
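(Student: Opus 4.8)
The plan is to establish the two matching inequalities
$\Rob_{\Rep_b}(\Learn_{\mathsf{KNN}},\cS,e)\le\lceil \mathsf{margin}(\Learn_{\mathsf{KNN}}(\cS),e)/2\rceil$ and
$\Rob_{\Rep_b}(\Learn_{\mathsf{KNN}},\cS,e)\ge\lceil \mathsf{margin}(\Learn_{\mathsf{KNN}}(\cS),e)/2\rceil$. First I would dispose of the trivial case: if $\modelknn(\cS)(x)\ne y$, then by definition $\mathsf{margin}=0$, the clean $0$--$1$ loss is already $\ge 1$, and the robustness is $0=\lceil 0/2\rceil$. So assume $\modelknn(\cS)(x)=y$ and write $m=\mathsf{margin}(\Learn_{\mathsf{KNN}}(\cS),e)\ge 1$. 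Fix notation: let $j^\star$ be the runner-up label (the $\argmax$ over $j\ne y$ of the vote count of $j$ in $\cN(x)$), and for any training (multi)set $\cT$ let $n_j(\cT)$ be the number of label-$j$ points among the $K$ elements of $\cT$ closest to $x$, and $g(\cT)=n_y(\cT)-\max_{j\ne y}n_j(\cT)$. With the convention (consistent with the definition of $\mathsf{margin}$) that ties in the vote are broken against $y$, we have $\modelknn(\cT)(x)=y\iff g(\cT)>0$, and $g(\cS)=m$. Since $\modelknn$ is invariant to reordering, a $\Rep_b$ attack is simply the replacement of at most $b$ of the elements of $\cS$.

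The technical core is a ``Lipschitz'' estimate: replacing a single example changes $g$ by at most $2$, so replacing $r$ examples produces a set $\cS'$ with $g(\cS')\ge m-2r$. I would prove the one-step bound as follows: if $\cS''$ arises from $\cS$ by deleting one element and inserting another, then sorting $\cS$ and $\cS''$ by distance to $x$ shows that the two $K$-nearest sets $\cN_\cS(x)$ and $\cN_{\cS''}(x)$ differ by at most one element in each direction; consequently every $n_j$ changes by at most $1$, so $n_y$ changes by at most $1$, $\max_{j\ne y}n_j$ changes by at most $1$, and $g$ changes by at most $2$. Iterating over the $\le b$ replacements gives $g(\cS')\ge m-2b$. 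I expect the only real subtlety to be the careful bookkeeping of ties: ties in distance (handled by a fixed tie-break rule, or by taking the inserted points strictly closer to $x$ than every original point) and ties in the vote (broken against $y$ as above).

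Given the estimate, the lower bound is immediate: if $b<\lceil m/2\rceil$, then for every $\cS'$ reachable by a $\Rep_b$ attack, $g(\cS')\ge m-2b\ge m-2(\lceil m/2\rceil-1)\ge 1>0$, so $\modelknn(\cS')(x)=y$ and the loss stays $0$; no adversary of budget $<\lceil m/2\rceil$ can flip the prediction, hence $\Rob_{\Rep_b}(\Learn_{\mathsf{KNN}},\cS,e)\ge\lceil m/2\rceil$. For the matching upper bound I would give an explicit attack with exactly $\lceil m/2\rceil$ replacements: since $n_y(\cS)\ge m\ge\lceil m/2\rceil$ and $\lceil m/2\rceil\le m\le K$, choose $\lceil m/2\rceil$ label-$y$ points of $\cN_\cS(x)$ and replace each by the example $(x,j^\star)$ at distance $0$ from $x$. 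These $\lceil m/2\rceil$ inserted points all enter $\cN_{\cS'}(x)$, the remaining $K-\lceil m/2\rceil$ slots are filled by the surviving members of $\cN_\cS(x)$, and a one-line count yields $n_y(\cS')=n_y(\cS)-\lceil m/2\rceil$ and $n_{j^\star}(\cS')=n_{j^\star}(\cS)+\lceil m/2\rceil$, hence $n_{j^\star}(\cS')-n_y(\cS')=2\lceil m/2\rceil-m\ge 0$; so $\modelknn(\cS')(x)\ne y$, the loss is $\ge 1$, and $\Rob_{\Rep_b}(\Learn_{\mathsf{KNN}},\cS,e)\le\lceil m/2\rceil$. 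Combining the two inequalities gives the claimed identity.
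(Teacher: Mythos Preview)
Your proposal is correct and follows essentially the same approach as the paper: bound the effect of each replacement on the vote gap (the paper phrases this as ``at most $b$ changes to $\cN(x)$'' rather than your more careful Lipschitz argument on $g$) to get the lower bound, then exhibit the explicit attack that relabels $\lceil m/2\rceil$ of the label-$y$ neighbors to the runner-up class for the matching upper bound. Your treatment is in fact more thorough than the paper's, which does not explicitly handle the already-misclassified case or the tie-breaking subtleties you flag.
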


\begin{proof}[Proof of Lemma~\ref{lemma:knn}]
Following Definition~\ref{def:knn}, the prediction for a sample $x$ totally depends on the neighbor set $\cN(x)$. By definition, $\cN(x)$ is a subset of $\cS$. For the adversary class $\Rep_b$ (which can be extend to any adversary with budget $b$), they can only make at most $b$ changes to the set $\cS$, which includes at most $b$ changes to $\cN(x)$. 

For an example $e = (x, y)$, to flip the prediction to $y'$, we need to change $\cN(x)$ to $\cN'(x)$ such that $\sum_{(x_i, y_i) \in \cN'(x)} \one[{y_i = y'}] \geq \sum_{(x_i, y_i) \in \cN'(x)} \one[{y_i = y}]$. However, we have $\forall y' \neq y$, 
\begin{align*}
 \sum_{(x_i, y_i) \in \cN(x)} \one[y_i = y] - \sum_{(x_i, y_i) \in \cN(x)} \one[y_i = y'] \\ \geq \mathsf{margin}(\modelknn, e).   
\end{align*}

At least $\left\lceil{\frac{\mathsf{margin}(\Learn_{\mathsf{KNN}}(\cS),  e)}{2}}\right\rceil$ replacements needs to be made in this case. To make it work, the adversary can replace the label of  $\left\lceil{\frac{\mathsf{margin}(\Learn_{\mathsf{KNN}}(\cS),  e)}{2}}\right\rceil$ examples of label $y$ in $N(x)$ with $y'$. Therefore, we have $\Rob_{\Rep_b}(\Learn_{\mathsf{KNN}}, \cS, e) = \left\lceil{\frac{\mathsf{margin}(\Learn_{\mathsf{KNN}}(\cS),  e)}{2}}\right\rceil$.
\end{proof}

Using Lemma~\ref{lemma:knn}, one can compute the robustness of the $K$-NN model empirically by calculating the margin for every $e$ in the distribution. We then use the popular digit classification dataset MNIST to measure the robustness. 

In the experiment, we use the whole training dataset to train (60, 000 examples), and evaluate the robustness on the testing dataset (10, 000 examples). We calculate the robustness under $\Rep_b$, $\Rem_b$, and $\Add_b$ attacks. We measure the result with different number of neighbors $K$ present the result in Figure~\ref{fig:experiment_knn}a. We also measure the accuracy under poisoning of $\Rep_b$ and report it in Figure~\ref{fig:experiment_knn}b.
The results in Figure~\ref{fig:experiment_knn} indicates the following message. (1) From Figure~\ref{fig:experiment_knn}a, when the number of neighbors $K$ increases, the robustness also increases as expected. The robustness of $K$-NN to $\Rep$ and $\Add$ increases almost linearly with $K$. (2) The robustness to $\Rem$ is much larger than to $\Rep$ and $\Add$. $\Rem$ is a more difficult attack in this scenario. (3) From Figure~\ref{fig:experiment_knn}b, when the number of neighbors $K$ increases, the models' accuracy without poisoning slightly decreases. (4) From Figure~\ref{fig:experiment_knn}b, $K$-NN keeps around 80\% accuracy to $b=100$ instance-targeted poisoning when $K$ becomes large.

For general learners, measuring their robustness provably under attacks is harder because there is no clear efficient attack that is provably optimal. In this case, we perform a heuristic attack to study the power of $\Add_b$. The general idea is that for an example $e = (x, y)$, we poison the dataset by adding $b$ copies of $(x, y')$ into the dataset with the second best label $y'$ in $h(x)$, where $b$ is the Adversary's budget. We then report the accuracy under poisoning with different budget $b$ on classifiers including Logistic regression, 2-layer Multi-layer Perceptron (MLP), 2-layer Convolutional Neural Network (CNN), AlexNet  and also $K$-NN in Figure~\ref{fig:experiment_all_mnist}a. We get the following conclusion: (1) Models that have low risk without poisoning, such as MLP, CNN and AlexNet, typically have low empirical error, which makes it less robust under  poisoning. (2) $K$-NN with large $K$ have high accuracy under poisoning compared to other models by   sacrificing its clean-label prediction accuracy. 

Finally, in Figure~\ref{fig:experiment_all_mnist}b we report on our findins about two regularization mechanics, dropout and $L2$-regularization, on the Neural Network learner and whether adding them can provide better robustness against instance-targeted poisoning $\Add_b$. We use a 2-layer Multi-layer Perceptron (MLP) as the base learner and adds dropout/regularization to the learner. From the figure, we get the following messages: (1) Dropout and regularization help to improve the accuracy without the attacks (when $b=0$). (2) These mechanics don't help the accuracy with the $\Add_b$ attacks. The accuracy under attack is  worse than the vanilla Neural Network. We   conclude that these simple mechanics cannot help the neural net to defend against instance-targeted poisoning.

\remove{
\begin{acknowledgements}
 Mohammad Mahmoody and Ji Gao were supported by NSF  CCF-1910681 and CNS-1936799. Amin Karbasi was supported by NSF IIS-1845032 and ONR N00014-19-1-2406. 
\end{acknowledgements}
}  

\bibliography{Biblio/abbrev0, Biblio/crypto, Biblio/mlcrypto, Biblio/OtherRefs}

\begin{thebibliography}{37}
\providecommand{\natexlab}[1]{#1}
\providecommand{\url}[1]{\texttt{#1}}
\expandafter\ifx\csname urlstyle\endcsname\relax
  \providecommand{\doi}[1]{doi: #1}\else
  \providecommand{\doi}{doi: \begingroup \urlstyle{rm}\Url}\fi

\bibitem[Barreno et~al.(2006)Barreno, Nelson, Sears, Joseph, and
  Tygar]{barreno2006can}
Marco Barreno, Blaine Nelson, Russell Sears, Anthony~D Joseph, and J~Doug
  Tygar.
\newblock Can machine learning be secure?
\newblock In \emph{Proceedings of the 2006 ACM Symposium on Information,
  computer and communications security}, pages 16--25. ACM, 2006.

\bibitem[Blum et~al.(2021)Blum, Hanneke, Qian, and Shao]{blum2021robust}
Avrim Blum, Steve Hanneke, Jian Qian, and Han Shao.
\newblock obust learning under clean-label attack.
\newblock In \emph{Conference on Learning Theory}, 2021.

\bibitem[Bshouty et~al.(2002)Bshouty, Eiron, and Kushilevitz]{bshouty2002pac}
Nader~H Bshouty, Nadav Eiron, and Eyal Kushilevitz.
\newblock Pac learning with nasty noise.
\newblock \emph{Theoretical Computer Science}, 288\penalty0 (2):\penalty0
  255--275, 2002.

\bibitem[Chen et~al.(2018)Chen, Carvalho, Baracaldo, Ludwig, Edwards, Lee,
  Molloy, and Srivastava]{chen2018detecting}
Bryant Chen, Wilka Carvalho, Nathalie Baracaldo, Heiko Ludwig, Benjamin
  Edwards, Taesung Lee, Ian Molloy, and Biplav Srivastava.
\newblock Detecting backdoor attacks on deep neural networks by activation
  clustering.
\newblock \emph{arXiv preprint arXiv:1811.03728}, 2018.

\bibitem[Chen et~al.(2020)Chen, Li, Wu, Sheng, and Li]{chen2020framework}
Ruoxin Chen, Jie Li, Chentao Wu, Bin Sheng, and Ping Li.
\newblock A framework of randomized selection based certified defenses against
  data poisoning attacks, 2020.

\bibitem[Diakonikolas and Kane(2019)]{diakonikolas2019recent}
Ilias Diakonikolas and Daniel~M. Kane.
\newblock Recent advances in algorithmic high-dimensional robust statistics,
  2019.

\bibitem[Diakonikolas et~al.(2016)Diakonikolas, Kamath, Kane, Li, Moitra, and
  Stewart]{diakonikolas2016robust}
Ilias Diakonikolas, Gautam Kamath, Daniel~M Kane, Jerry Li, Ankur Moitra, and
  Alistair Stewart.
\newblock Robust estimators in high dimensions without the computational
  intractability.
\newblock In \emph{Foundations of Computer Science (FOCS), 2016 IEEE 57th
  Annual Symposium on}, pages 655--664. IEEE, 2016.

\bibitem[Diochnos et~al.(2018)Diochnos, Mahloujifar, and
  Mahmoody]{diochnos2018adversarial}
Dimitrios~I Diochnos, Saeed Mahloujifar, and Mohammad Mahmoody.
\newblock Adversarial risk and robustness: general definitions and implications
  for the uniform distribution.
\newblock In \emph{Proceedings of the 32nd International Conference on Neural
  Information Processing Systems}, pages 10380--10389, 2018.

\bibitem[Diochnos et~al.(2019)Diochnos, Mahloujifar, and
  Mahmoody]{diochnos2019lower}
Dimitrios~I Diochnos, Saeed Mahloujifar, and Mohammad Mahmoody.
\newblock Lower bounds for adversarially robust pac learning.
\newblock \emph{arXiv preprint arXiv:1906.05815}, 2019.

\bibitem[Etesami et~al.(2020)Etesami, Mahloujifar, and
  Mahmoody]{etesami2020computational}
Omid Etesami, Saeed Mahloujifar, and Mohammad Mahmoody.
\newblock Computational concentration of measure: Optimal bounds, reductions,
  and more.
\newblock In \emph{Proceedings of the Fourteenth Annual ACM-SIAM Symposium on
  Discrete Algorithms}, pages 345--363. SIAM, 2020.

\bibitem[Gao et~al.(2021)Gao, Karbasi, and Mahmoody]{gao2021learning}
Ji~Gao, Amin Karbasi, and Mohammad Mahmoody.
\newblock Learning and certification under instance-targeted poisoning, 2021.

\bibitem[Goldblum et~al.(2020)Goldblum, Tsipras, Xie, Chen, Schwarzschild,
  Song, Madry, Li, and Goldstein]{goldblum2020data}
Micah Goldblum, Dimitris Tsipras, Chulin Xie, Xinyun Chen, Avi Schwarzschild,
  Dawn Song, Aleksander Madry, Bo~Li, and Tom Goldstein.
\newblock Data security for machine learning: Data poisoning, backdoor attacks,
  and defenses.
\newblock \emph{arXiv preprint arXiv:2012.10544}, 2020.

\bibitem[Gu et~al.(2017)Gu, Dolan-Gavitt, and Garg]{gu2017badnets}
Tianyu Gu, Brendan Dolan-Gavitt, and Siddharth Garg.
\newblock Badnets: Identifying vulnerabilities in the machine learning model
  supply chain.
\newblock \emph{arXiv preprint arXiv:1708.06733}, 2017.

\bibitem[Ji et~al.(2017)Ji, Zhang, and Wang]{ji2017backdoor}
Yujie Ji, Xinyang Zhang, and Ting Wang.
\newblock Backdoor attacks against learning systems.
\newblock In \emph{2017 IEEE Conference on Communications and Network Security
  (CNS)}, pages 1--9. IEEE, 2017.

\bibitem[Jia et~al.(2020)Jia, Cao, and Gong]{jia2020intrinsic}
Jinyuan Jia, Xiaoyu Cao, and Neil~Zhenqiang Gong.
\newblock Intrinsic certified robustness of bagging against data poisoning
  attacks.
\newblock \emph{arXiv preprint arXiv:2008.04495}, 2020.

\bibitem[Kearns and Li(1993)]{kearns1993learning}
Michael Kearns and Ming Li.
\newblock Learning in the presence of malicious errors.
\newblock \emph{SIAM Journal on Computing}, 22\penalty0 (4):\penalty0 807--837,
  1993.

\bibitem[Koh and Liang(2017)]{pmlr-v70-koh17a}
Pang~Wei Koh and Percy Liang.
\newblock Understanding black-box predictions via influence functions.
\newblock In Doina Precup and Yee~Whye Teh, editors, \emph{Proceedings of the
  34th International Conference on Machine Learning}, volume~70 of
  \emph{Proceedings of Machine Learning Research}, pages 1885--1894. PMLR,
  06--11 Aug 2017.
\newblock URL \url{http://proceedings.mlr.press/v70/koh17a.html}.

\bibitem[Lai et~al.(2016)Lai, Rao, and Vempala]{lai2016agnostic}
Kevin~A Lai, Anup~B Rao, and Santosh Vempala.
\newblock Agnostic estimation of mean and covariance.
\newblock In \emph{Foundations of Computer Science (FOCS), 2016 IEEE 57th
  Annual Symposium on}, pages 665--674. IEEE, 2016.

\bibitem[LeCun et~al.(1998)LeCun, Bottou, Bengio, and
  Haffner]{lecun1998gradient}
Yann LeCun, L{\'e}on Bottou, Yoshua Bengio, and Patrick Haffner.
\newblock Gradient-based learning applied to document recognition.
\newblock \emph{Proceedings of the IEEE}, 86\penalty0 (11):\penalty0
  2278--2324, 1998.

\bibitem[Levine and Feizi(2021)]{levine2020deep}
Alexander Levine and Soheil Feizi.
\newblock Deep partition aggregation: Provable defenses against general
  poisoning attacks.
\newblock In \emph{International Conference on Learning Representations}, 2021.
\newblock URL \url{https://openreview.net/forum?id=YUGG2tFuPM}.

\bibitem[Mahloujifar and Mahmoody(2017)]{mahloujifar2017blockwise}
Saeed Mahloujifar and Mohammad Mahmoody.
\newblock Blockwise p-tampering attacks on cryptographic primitives,
  extractors, and learners.
\newblock In \emph{Theory of Cryptography Conference}, pages 245--279.
  Springer, 2017.

\bibitem[Mahloujifar and Mahmoody(2019)]{mahloujifar2019can}
Saeed Mahloujifar and Mohammad Mahmoody.
\newblock Can adversarially robust learning leveragecomputational hardness?
\newblock In \emph{Algorithmic Learning Theory}, pages 581--609. PMLR, 2019.

\bibitem[Mahloujifar et~al.(2018)Mahloujifar, Diochnos, and
  Mahmoody]{mahloujifar2018learning}
Saeed Mahloujifar, Dimitrios~I Diochnos, and Mohammad Mahmoody.
\newblock Learning under $ p $-tampering attacks.
\newblock In \emph{Algorithmic Learning Theory}, pages 572--596. PMLR, 2018.

\bibitem[Mahloujifar et~al.(2019{\natexlab{a}})Mahloujifar, Diochnos, and
  Mahmoody]{mahloujifar2019curse}
Saeed Mahloujifar, Dimitrios~I Diochnos, and Mohammad Mahmoody.
\newblock The curse of concentration in robust learning: Evasion and poisoning
  attacks from concentration of measure.
\newblock In \emph{Proceedings of the AAAI Conference on Artificial
  Intelligence}, volume~33, pages 4536--4543, 2019{\natexlab{a}}.

\bibitem[Mahloujifar et~al.(2019{\natexlab{b}})Mahloujifar, Mahmoody, and
  Mohammed]{mahloujifar2019universal}
Saeed Mahloujifar, Mohammad Mahmoody, and Ameer Mohammed.
\newblock Universal multi-party poisoning attacks.
\newblock In \emph{International Conference on Machine Learing (ICML)},
  2019{\natexlab{b}}.

\bibitem[Matousek(2013)]{matousek2013lectures}
Jiri Matousek.
\newblock \emph{Lectures on discrete geometry}, volume 212.
\newblock Springer Science \& Business Media, 2013.

\bibitem[Montasser et~al.(2019)Montasser, Hanneke, and Srebro]{montasser2019vc}
Omar Montasser, Steve Hanneke, and Nathan Srebro.
\newblock Vc classes are adversarially robustly learnable, but only improperly.
\newblock In \emph{Conference on Learning Theory}, pages 2512--2530. PMLR,
  2019.

\bibitem[Papernot et~al.(2016)Papernot, McDaniel, Sinha, and
  Wellman]{papernot2016towards}
Nicolas Papernot, Patrick McDaniel, Arunesh Sinha, and Michael Wellman.
\newblock Towards the science of security and privacy in machine learning.
\newblock \emph{arXiv preprint arXiv:1611.03814}, 2016.

\bibitem[Rosenfeld et~al.(2020)Rosenfeld, Winston, Ravikumar, and
  Kolter]{rosenfeld2020certified}
Elan Rosenfeld, Ezra Winston, Pradeep Ravikumar, and Zico Kolter.
\newblock Certified robustness to label-flipping attacks via randomized
  smoothing.
\newblock In \emph{International Conference on Machine Learning}, pages
  8230--8241. PMLR, 2020.

\bibitem[Shafahi et~al.(2018)Shafahi, Huang, Najibi, Suciu, Studer, Dumitras,
  and Goldstein]{shafahi2018poison}
Ali Shafahi, W~Ronny Huang, Mahyar Najibi, Octavian Suciu, Christoph Studer,
  Tudor Dumitras, and Tom Goldstein.
\newblock Poison frogs! targeted clean-label poisoning attacks on neural
  networks.
\newblock \emph{arXiv preprint arXiv:1804.00792}, 2018.

\bibitem[Sloan(1995)]{Sloan::Noise:four-types}
Robert~H. Sloan.
\newblock {Four Types of Noise in Data for {PAC} Learning}.
\newblock \emph{Information Processing Letters}, 54\penalty0 (3):\penalty0
  157--162, 1995.

\bibitem[Steinhardt et~al.(2017)Steinhardt, Koh, and
  Liang]{steinhardt2017certified}
Jacob Steinhardt, Pang~Wei Koh, and Percy Liang.
\newblock Certified defenses for data poisoning attacks.
\newblock In \emph{Proceedings of the 31st International Conference on Neural
  Information Processing Systems}, pages 3520--3532, 2017.

\bibitem[Turner et~al.(2019)Turner, Tsipras, and Madry]{turner2019label}
Alexander Turner, Dimitris Tsipras, and Aleksander Madry.
\newblock Label-consistent backdoor attacks.
\newblock \emph{arXiv preprint arXiv:1912.02771}, 2019.

\bibitem[Valiant(1984)]{valiant1984theory}
Leslie~G Valiant.
\newblock A theory of the learnable.
\newblock \emph{Communications of the ACM}, 27\penalty0 (11):\penalty0
  1134--1142, 1984.

\bibitem[Valiant(1985)]{valiant1985learning}
Leslie~G Valiant.
\newblock Learning disjunction of conjunctions.
\newblock In \emph{IJCAI}, pages 560--566, 1985.

\bibitem[Wang et~al.(2019)Wang, Yao, Shan, Li, Viswanath, Zheng, and
  Zhao]{wang2019neural}
Bolun Wang, Yuanshun Yao, Shawn Shan, Huiying Li, Bimal Viswanath, Haitao
  Zheng, and Ben~Y Zhao.
\newblock Neural cleanse: Identifying and mitigating backdoor attacks in neural
  networks.
\newblock In \emph{2019 IEEE Symposium on Security and Privacy (SP)}, pages
  707--723. IEEE, 2019.

\bibitem[Weber et~al.(2020)Weber, Xu, Karlas, Zhang, and Li]{weber2020rab}
Maurice Weber, Xiaojun Xu, Bojan Karlas, Ce~Zhang, and Bo~Li.
\newblock Rab: Provable robustness against backdoor attacks.
\newblock \emph{arXiv preprint arXiv:2003.08904}, 2020.

\end{thebibliography}
\appendix

\section{Useful facts}

\begin{fact}\label{fact:middle-binom}
The function $\binom{2k}{k} \sqrt{k}/4^k$ is increasing for $k \in \N$, $\binom{2k}{k} \sqrt{k+1}/4^k$ is decreasing for $k \in N$, and the limit of both when $k \to \infty $ is $1/\sqrt{\pi}$. Therefore, the following holds for all positive $k$,  
$$ \frac{4^k}{\sqrt{(k+1)\cdot \pi}} \leq \binom{2k}{k}  \leq \frac{4^k}{\sqrt{k\pi}}.$$
\end{fact}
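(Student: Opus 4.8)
The plan is to treat $a_k := \binom{2k}{k}\sqrt{k}/4^k$ and $b_k := \binom{2k}{k}\sqrt{k+1}/4^k$ as sequences and control everything through the elementary binomial ratio $\binom{2k+2}{k+1}/\binom{2k}{k} = \frac{(2k+1)(2k+2)}{(k+1)^2} = \frac{2(2k+1)}{k+1}$, so that $\binom{2k+2}{k+1}/4^{k+1} = \frac{2k+1}{2(k+1)}\cdot\binom{2k}{k}/4^k$.

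First I would prove the two monotonicity claims. From the displayed ratio, $\frac{a_{k+1}}{a_k} = \frac{2k+1}{2(k+1)}\sqrt{\tfrac{k+1}{k}} = \frac{2k+1}{2\sqrt{k(k+1)}}$, and this exceeds $1$ since $(2k+1)^2 = 4k^2+4k+1 > 4k^2+4k = 4k(k+1)$; hence $(a_k)$ is strictly increasing. Likewise $\frac{b_{k+1}}{b_k} = \frac{2k+1}{2(k+1)}\sqrt{\tfrac{k+2}{k+1}} = \frac{(2k+1)\sqrt{k+2}}{2(k+1)^{3/2}}$, and this is less than $1$ since $4(k+1)^3 - (2k+1)^2(k+2) = (4k^3+12k^2+12k+4) - (4k^3+12k^2+9k+2) = 3k+2 > 0$; hence $(b_k)$ is strictly decreasing. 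Since trivially $a_k < b_k$ (as $\sqrt k < \sqrt{k+1}$), the increasing sequence $(a_k)$ is bounded above by $b_1$ and the decreasing sequence $(b_k)$ is bounded below by $a_1$, so both converge; and because $b_k/a_k = \sqrt{(k+1)/k}\to 1$, they converge to a common limit $L$.

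Next I would identify $L = 1/\sqrt{\pi}$ using the Wallis integrals $I_n := \int_0^{\pi/2}\sin^n x\,dx$, which satisfy $I_0 = \pi/2$, $I_1 = 1$, and $I_n = \frac{n-1}{n}I_{n-2}$ for $n \ge 2$. An easy induction (collapsing the double factorials) gives $I_{2k} = \frac{\pi}{2}\binom{2k}{k}/4^k$ and $I_{2k+1} = 4^k/\bigl((2k+1)\binom{2k}{k}\bigr)$. Since $0 \le \sin x \le 1$ on $[0,\pi/2]$, with strict inequality off a null set, $(I_n)$ is strictly decreasing, so $I_{2k+1} < I_{2k} < I_{2k-1} = \frac{2k+1}{2k}I_{2k+1}$; dividing by $I_{2k+1}$ gives $1 < I_{2k}/I_{2k+1} < \frac{2k+1}{2k} \to 1$, hence $I_{2k}/I_{2k+1}\to 1$. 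But $I_{2k}/I_{2k+1} = \frac{\pi(2k+1)}{2}\cdot\binom{2k}{k}^2/16^k$, so $a_k^2 = \frac{k\binom{2k}{k}^2}{16^k} = \frac{2k}{\pi(2k+1)}\cdot\frac{I_{2k}}{I_{2k+1}} \to \frac{1}{\pi}$, i.e. $L = 1/\sqrt{\pi}$. (Alternatively I could simply quote Stirling's formula, but the integral recurrence keeps things self-contained.) Finally, a strictly increasing sequence converging to $L$ stays strictly below $L$, so $a_k < 1/\sqrt\pi$ for every $k\ge 1$, i.e. $\binom{2k}{k} < 4^k/\sqrt{\pi k}$; a strictly decreasing sequence converging to $L$ stays strictly above $L$, so $b_k > 1/\sqrt\pi$, i.e. $\binom{2k}{k} > 4^k/\sqrt{\pi(k+1)}$. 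These are exactly the desired bounds (in fact with strict inequalities). The only step that is not pure algebra is the evaluation $L = 1/\sqrt\pi$, which is the real content of the fact and is where I expect the work to lie; the monotonicity computations and the deduction of the inequalities from monotone convergence are routine.
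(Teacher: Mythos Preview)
Your proof is correct and complete. The paper, however, does not prove this statement at all: it is recorded in the appendix as a ``Fact'' and simply asserted, presumably as folklore or a standard estimate on the central binomial coefficient. So there is no paper proof to compare against; you have supplied what the authors omitted. Your argument via the ratio test for monotonicity plus Wallis integrals to pin down the limit $1/\sqrt{\pi}$ is a standard and clean route (the Stirling alternative you mention would also work and is what most readers would mentally invoke).
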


\begin{fact} \label{fact:sphere}
Let $S_{d-1}$ be the area of the surface of the unit ball in $d$ dimensions. Then the following two hold.

\begin{enumerate}
    \item $S_{2k} = \frac{2\cdot k! (4\pi)^k}{(2k)!} $
    \item $S_{2k-1}=\frac{2 \pi^k}{(k-1)!}$
\end{enumerate}
\end{fact}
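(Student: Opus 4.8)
The plan is to first establish the single unified formula
\begin{equation*}
S_{d-1} \;=\; \frac{2\,\pi^{d/2}}{\Gamma(d/2)}
\end{equation*}
valid for every dimension $d\geq 1$, and then to specialize it to the two parities of $d$ by substituting the values of the Gamma function at integers and at half-integers. The unified formula comes from the classical Gaussian-integral trick: evaluate $I_d := \int_{\R^d} e^{-\|x\|^2}\,\mathrm{d}x$ in two different ways and equate the answers.

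On one hand, by Fubini's theorem $I_d = \bigl(\int_{\R} e^{-t^2}\,\mathrm{d}t\bigr)^{d} = \pi^{d/2}$, using the one-dimensional Gaussian integral. On the other hand, for $d\geq 2$ pass to polar (spherical) coordinates, where the Lebesgue measure factors as $\mathrm{d}x = r^{d-1}\,\mathrm{d}r\,\mathrm{d}\sigma$ with $\mathrm{d}\sigma$ the surface measure on the unit sphere, whose total mass is exactly $S_{d-1}$; then
\begin{equation*}
I_d \;=\; S_{d-1}\int_{0}^{\infty} r^{d-1} e^{-r^2}\,\mathrm{d}r \;=\; \frac{S_{d-1}}{2}\int_{0}^{\infty} u^{d/2-1} e^{-u}\,\mathrm{d}u \;=\; \frac{S_{d-1}}{2}\,\Gamma\!\left(\tfrac{d}{2}\right),
\end{equation*}
where I substituted $u = r^2$. (The degenerate case $d=1$ is checked directly: $S_0$ is the counting measure of the boundary $\{-1,+1\}$ of the interval, namely $2$, and $2\pi^{1/2}/\Gamma(1/2)=2$ as well.) Comparing the two expressions for $I_d$ gives $S_{d-1} = 2\pi^{d/2}/\Gamma(d/2)$.

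It remains to evaluate $\Gamma(d/2)$. For even $d=2k$ we have $\Gamma(d/2)=\Gamma(k)=(k-1)!$, which immediately yields $S_{2k-1} = 2\pi^{k}/(k-1)!$, i.e.\ item~2. For odd $d=2k+1$ we need $\Gamma(k+\tfrac12)$; starting from $\Gamma(\tfrac12)=\sqrt{\pi}$ and iterating the functional equation $\Gamma(z+1)=z\,\Gamma(z)$, a short induction on $k$ gives
\begin{equation*}
\Gamma\!\left(k+\tfrac12\right) \;=\; \frac{(2k)!}{4^{k}\,k!}\,\sqrt{\pi},
\end{equation*}
the inductive step multiplying the previous value by $k+\tfrac12 = \tfrac{2k+1}{2}$ and simplifying $\tfrac{(2k)!}{4^k k!}\cdot\tfrac{2k+1}{2} = \tfrac{(2k+2)!}{4^{k+1}(k+1)!}$. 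Substituting into $S_{2k} = 2\pi^{(2k+1)/2}/\Gamma(k+\tfrac12)$ and cancelling $\pi^{1/2}/\sqrt{\pi}=1$ gives $S_{2k} = 2\,k!\,(4\pi)^{k}/(2k)!$, i.e.\ item~1.

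There is no real obstacle here; the only step asking for any care is the half-integer evaluation of $\Gamma$, which is routine. For completeness I note a Gamma-free alternative: slice the unit ball of $\R^d$ by hyperplanes orthogonal to one coordinate axis to obtain $V_d = V_{d-1}\int_{-1}^{1}(1-t^2)^{(d-1)/2}\,\mathrm{d}t$, evaluate this Wallis-type integral via central binomial coefficients as in Fact~\ref{fact:middle-binom}, and use $S_{d-1}=d\,V_d$. This reproduces the same closed forms but is more computational, so I would present the Gaussian-integral argument as the main proof.
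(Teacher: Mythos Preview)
Your proof is correct. The paper, however, does not prove this statement at all: it is listed as a ``Fact'' in the appendix and is treated as a known classical formula with no argument given. So there is nothing to compare against; your Gaussian-integral derivation of $S_{d-1}=2\pi^{d/2}/\Gamma(d/2)$ followed by the parity split via $\Gamma(k)=(k-1)!$ and $\Gamma(k+\tfrac12)=\frac{(2k)!}{4^k k!}\sqrt{\pi}$ is the standard textbook route and would be an entirely appropriate proof to supply if one were wanted.
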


The following proposition follows from Facts \ref{fact:middle-binom} and \ref{fact:sphere}.
\begin{proposition}
It holds that $$\frac{\sqrt{d-1}}{\sqrt{2\pi}} \leq \frac{S_{d-1}}{S_{d}} \leq \frac{\sqrt{d}}{\sqrt{2\pi}}.$$ 
\end{proposition}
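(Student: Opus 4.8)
The plan is to turn $S_{d-1}/S_d$ into an expression in central binomial coefficients using the closed forms of Fact~\ref{fact:sphere}, treating $d$ even and $d$ odd separately, and then to estimate that expression with Fact~\ref{fact:middle-binom}. Writing $d=2k$ and $d=2k+1$, using $\binom{2k}{k}=(2k)!/(k!)^2$ together with $S_{2k+1}=S_{2(k+1)-1}=2\pi^{k+1}/k!$, a one-line cancellation will give
\begin{equation*}
\frac{S_{2k-1}}{S_{2k}}=\frac{(2k)!}{4^{k}(k-1)!\,k!}=\frac{k\binom{2k}{k}}{4^{k}},
\qquad
\frac{S_{2k}}{S_{2k+1}}=\frac{4^{k}(k!)^{2}}{\pi\,(2k)!}=\frac{4^{k}}{\pi\binom{2k}{k}}.
\end{equation*}

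Next I would substitute the two-sided estimate $\tfrac{4^{k}}{\sqrt{(k+1)\pi}}\le\binom{2k}{k}\le\tfrac{4^{k}}{\sqrt{k\pi}}$ from Fact~\ref{fact:middle-binom}. For $d=2k$ the upper estimate yields $S_{2k-1}/S_{2k}\le k/\sqrt{k\pi}=\sqrt{k/\pi}=\sqrt{d}/\sqrt{2\pi}$, and for $d=2k+1$ the same estimate (in the denominator) yields $S_{2k}/S_{2k+1}\ge\sqrt{k/\pi}=\sqrt{d-1}/\sqrt{2\pi}$; so the upper bound for even $d$ and the lower bound for odd $d$ come out immediately, in fact with equality against these estimates. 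For the two remaining inequalities I would also read off from Fact~\ref{fact:sphere} the exact two-step identities $S_{d-2}/S_{d}=(d-1)/(2\pi)$ and $S_{d-1}/S_{d+1}=d/(2\pi)$ (each an immediate simplification, for instance $S_{2k-2}/S_{2k}=(2k)(2k-1)/(4k\pi)=(2k-1)/(2\pi)$). Setting $r_{m}:=S_{m-1}/S_{m}$, these say $r_{d}r_{d-1}=(d-1)/(2\pi)$ and $r_{d}r_{d+1}=d/(2\pi)$, so the claimed bounds $\sqrt{(d-1)/(2\pi)}\le r_{d}\le\sqrt{d/(2\pi)}$ are precisely the statement that $r_{m}$ is non-decreasing. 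Splitting this by parity, the even-index step $r_{2k}\le r_{2k+1}$ is literally $\binom{2k}{k}\sqrt{k}/4^{k}\le 1/\sqrt{\pi}$, which is the increasing-to-the-limit half of Fact~\ref{fact:middle-binom}; the odd-index step $r_{2k-1}\le r_{2k}$ reduces, after the same cancellations, to the slightly sharper central-binomial estimate $\binom{2k}{k}\ge 4^{k}/\sqrt{\pi(k+\tfrac12)}$, which sits between the two monotone quantities tracked in Fact~\ref{fact:middle-binom} and follows from its monotonicity and limit. (That same sharper estimate is also exactly what would close the even-$d$ lower bound and the odd-$d$ upper bound if one preferred to verify all four one-sided bounds directly rather than through monotonicity of $r_{m}$.)

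The hard part will be the bookkeeping rather than any single inequality: keeping the even/odd cases apart, tracking which of the two one-sided estimates from Fact~\ref{fact:middle-binom} is needed in each place, and noticing that two of the four one-sided bounds are off by an $O(1/k)$ factor in the wrong direction if one uses only the bare form $\tfrac{4^{k}}{\sqrt{(k+1)\pi}}\le\binom{2k}{k}\le\tfrac{4^{k}}{\sqrt{k\pi}}$ --- so that there one must lean on the monotonicity content (equivalently on $\binom{2k}{k}\ge 4^{k}/\sqrt{\pi(k+\tfrac12)}$) instead. Everything else is a routine computation from Facts~\ref{fact:middle-binom} and~\ref{fact:sphere}.
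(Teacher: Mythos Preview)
Your approach is precisely the one the paper has in mind: reduce $S_{d-1}/S_d$ to central binomial coefficients via Fact~\ref{fact:sphere} and then estimate with Fact~\ref{fact:middle-binom}. The paper gives no details beyond ``follows from Facts~\ref{fact:middle-binom} and~\ref{fact:sphere},'' so your case-split and your observation that two of the four one-sided bounds do \emph{not} follow from the bare displayed inequality in Fact~\ref{fact:middle-binom} is in fact more careful than what the paper writes.

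There is one soft spot worth tightening. You assert that the sharper estimate $\binom{2k}{k}\ge 4^{k}/\sqrt{\pi(k+\tfrac12)}$ ``follows from [Fact~\ref{fact:middle-binom}'s] monotonicity and limit,'' but it is not clear how to extract it from the two monotone sequences $\binom{2k}{k}\sqrt{k}/4^{k}$ and $\binom{2k}{k}\sqrt{k+1}/4^{k}$ alone: one is below $1/\sqrt{\pi}$ and the other above, and $\sqrt{k+1/2}$ sitting between $\sqrt{k}$ and $\sqrt{k+1}$ does not by itself pin down which side of $1/\sqrt{\pi}$ the intermediate quantity lands on. The clean fix is to prove the needed bound by the \emph{same method} rather than as a corollary: set $c_k=\binom{2k}{k}^{2}(k+\tfrac12)/4^{2k}$ and check directly that
\[
\frac{c_k}{c_{k-1}}=\frac{(2k-1)(2k+1)}{4k^{2}}=\frac{4k^{2}-1}{4k^{2}}<1,
\]
so $c_k$ is decreasing to its limit $1/\pi$, hence $c_k\ge 1/\pi$, which is exactly $\binom{2k}{k}\ge 4^{k}/\sqrt{\pi(k+\tfrac12)}$. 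With that one extra line, your monotonicity-of-$r_m$ argument (or, equivalently, the direct verification of the remaining two one-sided bounds) goes through cleanly.
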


\end{document}